\newtheorem{theorem}{Theorem}
\crefname{theorem}{theorem}{Theorems}
\Crefname{Theorem}{Theorem}{Theorems}
\newaliascnt{lemma}{theorem}
\newtheorem{lemma}[lemma]{Lemma}
\crefname{lemma}{lemma}{lemmas}
\Crefname{Lemma}{Lemma}{Lemmas}
\newaliascnt{corollary}{theorem}
\newtheorem{corollary}[corollary]{Corollary}
\crefname{corollary}{corollary}{corollaries}
\Crefname{Corollary}{Corollary}{Corollaries}
\newaliascnt{proposition}{theorem}
\newtheorem{proposition}[proposition]{Proposition}
\crefname{proposition}{proposition}{propositions}
\Crefname{Proposition}{Proposition}{Propositions}
\newaliascnt{definition}{theorem}
\crefname{definition}{definition}{definitions}
\Crefname{Definition}{Definition}{Definitions}
\newaliascnt{definition-proposition}{theorem}
\crefname{definition-proposition}{definition-proposition}{definitions-propositions}
\Crefname{Definition-Proposition}{Definition-Proposition}{Definitions-Propositions}
\newaliascnt{remark}{theorem}
\crefname{remark}{remark}{remarks}
\Crefname{Remark}{Remark}{Remarks}
\crefname{example}{example}{examples}
\Crefname{Example}{Example}{Examples}
\crefname{figure}{figure}{figures}
\Crefname{Figure}{Figure}{Figures}
\newtheorem{assumption}{\textbf{H}\hspace{-3pt}}
\Crefname{assumption}{\textbf{H}\hspace{-3pt}}{\textbf{H}\hspace{-3pt}}
\crefname{assumption}{\textbf{H}}{\textbf{H}}
\Crefname{assumptionA}{\textbf{A}\hspace{-3pt}}{\textbf{A}\hspace{-3pt}}
\crefname{assumptionA}{\textbf{A}}{\textbf{A}}
\Crefname{assumptionS}{\textbf{S}\hspace{-3pt}}{\textbf{S}\hspace{-3pt}}
\crefname{assumptionS}{\textbf{S}}{\textbf{S}}
\Crefname{probleme}{\textbf{Problem}\hspace{-3pt}}{\textbf{Problem}\hspace{-3pt}}
\crefname{probleme}{\textbf{Problem}}{\textbf{Problem}}
\Crefname{assumptionG}{\textbf{G}\hspace{-3pt}}{\textbf{G}\hspace{-3pt}}
\crefname{assumptionG}{\textbf{G}}{\textbf{G}}
\newcommand{\leqnos}{\tagsleft@true\let\veqno\@@leqno}
\newcommand{\reqnos}{\tagsleft@false\let\veqno\@@eqno}
\def\Rker{R}
\def\sgP{P}
\def\invpi{\pi}
\def\pibar{\bar{\pi}}
\def\Rula{\Rker_{\scriptscriptstyle{\operatorname{LMC}}}}
\def\Rsgld{\Rker_{\scriptscriptstyle{\operatorname{SGLD}}}}
\def\Rsgd{\Rker_{\scriptscriptstyle{\operatorname{SGD}}}}
\def\Rfp{\Rker_{\scriptscriptstyle{\operatorname{FP}}}}
\def\piula{\pi_{\scriptscriptstyle{\operatorname{LMC}}}}
\def\pisgld{\pi_{\scriptscriptstyle{\operatorname{SGLD}}}}
\def\pisgd{\pi_{\scriptscriptstyle{\operatorname{SGD}}}}
\def\pifp{\pi_{\scriptscriptstyle{\operatorname{FP}}}}
\def\tpbarula{\tpbar_{\scriptscriptstyle{\operatorname{LMC}}}}
\def\tpbarfp{\tpbar_{\scriptscriptstyle{\operatorname{FP}}}}
\def\tpbarsgld{\tpbar_{\scriptscriptstyle{\operatorname{SGLD}}}}
\def\tpbarsgd{\tpbar_{\scriptscriptstyle{\operatorname{SGD}}}}
\def\borelA{\mathsf{A}}
\def\nZ{Z}
\def\nW{W}
\def\nS{S}
\def\bs{p}
\def\step{\gamma}
\def\tpstar{\tp^{\star}}
\def\seta{\eta}
\newcommand{\reste}[1]{\mathcal{R}_{#1}}
\def\gesp{\epsilon}
\def\tp{\theta}
\def\vtp{\vartheta}
\def\tpbar{\bar{\theta}}
\def\tphat{\hat{\theta}}
\def\tpp{\vartheta}
\def\partm{\rho}
\def\parta{\xi}
\def\opL{\operatorname{K}} 
\def\opM{\operatorname{M}}
\def\opT{\operatorname{T}}
\def\opG{\operatorname{G}}
\def\opH{\operatorname{H}}
\def\rSig{r}
\def\rnablaU{r}
\def\LU{L}
\def\tildeL{\tilde{L}}
\def\mU{m}
\def\dataz{\mathbf{z}}
\def\datay{\mathbf{y}}
\def\sx{x}
\def\sy{y}
\def\matX{\mathbf{X}}
\def\sigdy{\sigma^2_y}
\def\sigdt{\sigma^2_{\tp}}
\def\Sig{\Sigma}
\def\DD{\operatorname{D}}
\def\borelSet{\mathcal{B}}
\def\Tr{\operatorname{T}}
\def\trace{\operatorname{Tr}}
\newcommandx{\VarDeux}[3][3=]{\operatorname{Var}^{#3}_{#1}\left[#2 \right]}
\newcommand{\PE}{\mathbb{E}}
\newcommand{\PP}{\mathbb{P}}
\newcommandx{\Vnorm}[2][1=V]{\| #2 \|_{#1}}
\newcommandx{\VnormEq}[2][1=V]{\left\| #2 \right\|_{#1}}
\newcommandx{\norm}[2][1=]{\ifthenelse{\equal{#1}{}}{\left\Vert #2 \right\Vert}{\left\Vert #2 \right\Vert^{#1}}}
\newcommandx{\normLigne}[2][1=]{\ifthenelse{\equal{#1}{}}{\Vert #2 \Vert}{\Vert #2\Vert^{#1}}}
\newcommand{\parenthese}[1]{\left(#1 \right)}
\newcommand{\defEns}[1]{\left\lbrace #1 \right\rbrace }
\newcommand{\ps}[2]{\left\langle#1,#2 \right\rangle}
\newcommandx\probaMarkovTilde[2][2=]
\newcommand{\expe}[1]{\PE \left[ #1 \right]}
\newcommand{\expeLigne}[1]{\PE [ #1 ]}
\newcommand{\expecond}[2]{\PE \left[ #1 \middle| #2 \right]}
\newcommand{\filtration}{\mathcal{F}}
\newcommand{\plusinfty}{+\infty}
\def\ie{\textit{i.e.}}
\def\eqsp{\;}
\newcommand{\ocintLigne}[1]{(#1]}
\newcommand{\oointLigne}[1]{(#1)}
\newcommand{\ccintLigne}[1]{[#1]}
\newcommand{\ocint}[1]{\left(#1\right]}
\newcommand{\ooint}[1]{\left(#1\right)}
\newcommand{\ccint}[1]{\left[#1\right]}
\def\as{\ensuremath{\text{a.s.}}}
\def\rmd{\mathrm{d}}
\newcommand{\wrt}{w.r.t.}
\def\iid{i.i.d.}
\def\rme{\mathrm{e}}
\def\eg{e.g.}
\def\rset{\mathbb{R}}
\def\setProba{\mathcal{P}}
\def\nset{\mathbb{N}}
\newcommandx{\CPE}[3][1=]{{\mathbb E}^{#3}_{#1}\left[#2 \right]} 
\newcommandx{\CPVar}[3][1=]{\mathrm{Var}^{#3}_{#1}\left\{ #2 \right\}}
\newcommand{\CPP}[3][]
{\ifthenelse{\equal{#1}{}}{{\mathbb P}\left(\left. #2 \, \right| #3 \right)}{{\mathbb P}_{#1}\left(\left. #2 \, \right | #3 \right)}}
\def\Id{\operatorname{Id}}
\def\borel{\mathcal{B}}
\newcommandx{\wasser}[1][1=2]{\operatorname{W}_{#1}}
\def\msa{\mathsf{A}}
\newcommand{\mcb}[1]{\mathcal{B}(#1)}
\newcommand{\fraca}[2]{(#1/#2)} 
\title{The promises and pitfalls of Stochastic Gradient Langevin Dynamics}
\author{
  Nicolas Brosse, \'Eric Moulines 
    \\
  Centre de Math\'ematiques Appliqu\'ees, UMR 7641, \\
  Ecole Polytechnique, Palaiseau, France.\\
  \texttt{nicolas.brosse@polytechnique.edu, eric.moulines@polytechnique.edu} \\
   \And
   Alain Durmus \\
   Ecole Normale Sup\'erieure CMLA, \\
   61 Av. du Pr\'esident Wilson 94235 Cachan Cedex, France. \\
   \texttt{alain.durmus@cmla.ens-cachan.fr} \\
}
\begin{document}

\maketitle

\begin{abstract}
  Stochastic Gradient Langevin Dynamics (SGLD) has emerged as a key MCMC algorithm for Bayesian learning from large scale datasets. While SGLD with decreasing step sizes converges weakly to the posterior distribution, the algorithm is often used with a constant step size in practice and has demonstrated successes in machine learning tasks. The current practice is to set the step size inversely proportional to $N$ where $N$ is the number of training samples. As $N$ becomes large, we show that the SGLD algorithm has an invariant probability measure which significantly departs from the target posterior and behaves like Stochastic Gradient Descent (SGD). This difference is inherently due to the high variance of the stochastic gradients. Several strategies have been suggested to reduce this effect; among them, SGLD Fixed Point (SGLDFP) uses carefully designed control variates to reduce the variance of the stochastic gradients. We show that SGLDFP  gives approximate samples from the posterior distribution, with an accuracy comparable to the Langevin Monte Carlo (LMC) algorithm for a computational cost sublinear in the number of data points. We provide a detailed analysis of the Wasserstein distances between LMC, SGLD, SGLDFP and SGD and explicit expressions of the means and covariance matrices of their invariant distributions. Our findings are supported by limited numerical experiments.
\end{abstract}

\section{Introduction}
\label{sec:introduction}

Most MCMC algorithms have not been designed to process huge sample sizes, a typical setting in machine learning.  As a result, many classical MCMC methods fail in this context, because the mixing time becomes prohibitively long and the cost per iteration increases proportionally to the number of training samples $N$. The computational cost in standard Metropolis-Hastings algorithm comes from 1) the computation of the proposals, 2) the acceptance/rejection step. Several approaches to solve these issues  have been recently proposed in machine learning and computational statistics.

Among them, the stochastic gradient langevin dynamics (SGLD) algorithm, introduced in \cite{Welling:Teh:2011}, is a popular choice. This method is based on the Langevin Monte Carlo (LMC) algorithm proposed in \cite{grenander:1983,grenander:miller:1994}. Standard versions of LMC require to compute the gradient of the log-posterior at the current fit of the parameter, but avoid the accept/reject step.  The LMC algorithm is a discretization of a continuous-time process, the overdamped Langevin diffusion, which leaves invariant the target distribution $\invpi$.  To further reduce the computational cost, SGLD uses unbiased estimators of the gradient of the log-posterior based on subsampling. This method has triggered a huge number of works among others \cite{ahn:welling:balan:2012,Korattikara:Chen:Welling:2014,Ahn:Shahababa:Welling:2014,chen:ding:carin:2015,
chen2014stochastic,Ding:2014,Ma:Chen:Fox:2015,Dubey:2016,bardenet:doucet:holmes:2017} and have been successfully applied to a range of state of the art machine learning problems \cite{Teh:Patterson:2013,li2016scalable}.

The properties of SGLD with decreasing step sizes have been studied in \cite{teh2016consistency}. The two key findings in this work are that 1) the SGLD algorithm converges weakly to the target distribution $\invpi$, 2) the optimal rate of convergence to equilibrium scales as $n^{-1/3}$ where $n$ is the number of iterations, see \cite[Section 5]{teh2016consistency}. However, in most of the applications, constant rather than decreasing step sizes are used, see \cite{ahn:welling:balan:2012,chen2014stochastic,JMLR:v18:16-478,
Li:2016:PSG:3016100.3016149,pmlr-v32-satoa14,JMLR:v17:15-494}.
A natural question for the practical design of SGLD is the choice of the minibatch size. This size controls on the one hand the computational complexity of the algorithm per iteration and on the other hand the variance of the gradient estimator.
Non-asymptotic bounds in Wasserstein distance between the marginal distribution of the SGLD iterates and the target distribution $\invpi$ have been established in \cite{dalalyan:colt:2017,dalalyan:karagulyan:2017}.  These results highlight the cost of using stochastic gradients and show that, for a given precision $\epsilon$ in Wasserstein distance, the computational cost of the plain SGLD algorithm does not improve over the LMC algorithm; \citet{vollmer:nagapetyan:2017} reports also similar results on the mean square error.

It has been suggested to use control variates to reduce the high variance of the stochastic gradients. For strongly log-concave models, \citet{vollmer:nagapetyan:2017,baker:fearnhead:fox:nemeth:2017} use the mode of the posterior distribution as a reference point and introduce the SGLDFP (Stochastic Gradient Langevin Dynamics Fixed Point) algorithm. \citet{vollmer:nagapetyan:2017,baker:fearnhead:fox:nemeth:2017} provide upper bounds on the mean square error and the Wasserstein distance between the marginal distribution of the iterates of SGLDFP and the posterior distribution. In addition, \citet{vollmer:nagapetyan:2017,baker:fearnhead:fox:nemeth:2017} show that the overall cost remains sublinear in the number of individual data points, up to a preprocessing step. Other control variates methodologies are provided for non-concave models in the form of SAGA-Langevin Dynamics and SVRG-Langevin Dynamics \cite{Dubey:2016,Chen:Wang:Zhang:2017}, albeit a detailed analysis in Wasserstein distance of these algorithms is only available for strongly log-concave models \cite{flammarion:jordan:2018}.

In this paper, we provide further insights on the links between SGLD, SGLDFP, LMC and SGD (Stochastic Gradient Descent). In our analysis, the algorithms are used with a constant step size and the parameters are set to the standard values used in practice \cite{ahn:welling:balan:2012,chen2014stochastic,JMLR:v18:16-478,
Li:2016:PSG:3016100.3016149,pmlr-v32-satoa14,JMLR:v17:15-494}. The LMC, SGLD and SGLDFP algorithms define homogeneous Markov chains, each of which admits a unique stationary distribution used as a hopefully close proxy of $\invpi$. The main contribution of this paper is to show that, while the invariant distributions of LMC and SGLDFP become closer to $\invpi$ as the number of data points increases, on the opposite, the invariant measure of SGLD never comes close to the target distribution $\invpi$ and is in fact very similar to the invariant measure of SGD.

In \Cref{subsec:wasserstein-distance}, we give an upper bound in Wasserstein distance of order $2$ between the marginal distribution of the iterates of LMC and the Langevin diffusion, SGLDFP and LMC, and SGLD and SGD. We provide a lower bound on the Wasserstein distance between the marginal distribution of the iterates of SGLDFP and SGLD. In \Cref{subsec:variance-piula-pifp-puslgd-pisgd}, we give a comparison of the means and covariance matrices of the invariant distributions of LMC, SGLDFP and SGLD with those of the target distribution $\invpi$. Our claims are supported by numerical experiments in \Cref{sec:numerical-experiments}.





\section{Preliminaries}
\label{sec:preliminaries}

Denote by  $\dataz = \{z_i\}_{i=1}^{N}$ the observations.  We are interested in situations where the target distribution $\invpi$ arises as the posterior in a Bayesian inference problem with prior density $\invpi_0(\tp)$ and a large number $N \gg 1$ of \iid\ observations $z_i$ with likelihoods $p(z_i|\tp)$. In this case, $\invpi(\tp)= \invpi_0(\tp) \prod_{i=1}^N p(z_i|\tp)$. We denote $U_i(\tp) = -\log(p(z_i | \tp))$ for $i\in\{1,\ldots,N\}$, $U_0(\tp) = -\log(\invpi_0(\tp))$, $U = \sum_{i=0}^{N} U_i$.

Under mild conditions, $\invpi$ is the unique invariant probability measure of the Langevin Stochastic Differential Equation (SDE):
\begin{equation}\label{eq:langevin-SDE}
  \rmd \tp_t = - \nabla U(\tp_t) \rmd t + \sqrt{2} \rmd B_t \eqsp,
\end{equation}
where $(B_t)_{t\geq 0}$ is a $d$-dimensional Brownian motion.
Based on this observation, Langevin Monte Carlo (LMC) is an MCMC algorithm that enables to sample (approximately) from $\invpi$ using an Euler discretization of the Langevin SDE:
\begin{equation}\label{eq:LMC}
  \tp_{k+1} = \tp_k - \step \nabla U(\tp_k) + \sqrt{2\step} \nZ_{k+1} \eqsp,
\end{equation}
where $\step>0$ is a constant step size and $(\nZ_k)_{k\geq 1}$ is a sequence of \iid~standard $d$-dimensional Gaussian vectors. Discovered and popularised in the seminal works \cite{grenander:1983,grenander:miller:1994,roberts:tweedie-Langevin:1996}, LMC has recently received renewed attention \cite{dalalyan:2014,durmus:moulines:2015,durmus:moulines:highdimULA,dalalyan:karagulyan:2017}. However, the cost of one iteration is $N d$ which is prohibitively large for massive datasets. In order to scale up to the big data setting, \citet{Welling:Teh:2011} suggested to replace $\nabla U$ with an unbiased estimate $\nabla U_0 + (N/\bs)\sum_{i\in\nS} \nabla U_i$ where $\nS$ is a minibatch of $\{1,\ldots,N\}$ with replacement of size $\bs$. A single update of SGLD is then given for $k\in\nset$ by
\begin{equation}\label{eq:SGLD}
  \tp_{k+1} = \tp_k - \step \parenthese{ \nabla U_0(\tp_k) + \frac{N}{\bs} \sum_{i\in\nS_{k+1}} \nabla U_i(\tp_k)} + \sqrt{2\step} \nZ_{k+1} \eqsp.
\end{equation}
The idea of using only a fraction of data points to compute an unbiased estimate of the gradient at each iteration comes from Stochastic Gradient Descent (SGD) which is a popular algorithm to minimize the potential $U$. SGD is very similar to SGLD because it is characterised by the same recursion as SGLD but without Gaussian noise:
\begin{equation}\label{eq:SGD}
  \tp_{k+1} = \tp_k - \step \parenthese{\nabla U_0(\tp_k) + \frac{N}{\bs}\sum_{i\in\nS_{k+1}} \nabla U_i(\tp_k)} \eqsp.
\end{equation}
Assuming for simplicity that $U$ has a minimizer $\tpstar$, we can define a control variates version of SGLD, SGLDFP, see \cite{Dubey:2016,Chen:Wang:Zhang:2017}, given for $k\in\nset$ by
\begin{equation}\label{eq:SGLDFP}
  \tp_{k+1} = \tp_k - \step \parenthese{ \nabla U_0(\tp_k) - \nabla U_0(\tpstar) +  \frac{N}{\bs} \sum_{i\in\nS_{k+1}} \defEns{\nabla U_i(\tp_k) - \nabla U_i(\tpstar)}} + \sqrt{2\step} \nZ_{k+1} \eqsp.
\end{equation}

It is worth mentioning that the objectives of the different algorithms presented so far are distinct. On the one hand, LMC, SGLD and SGDLFP are MCMC methods used to obtain approximate samples from the posterior distribution $\invpi$. On the other hand, SGD is a stochastic optimization algorithm used to find an estimate of the mode $\tpstar$ of the posterior distribution. In this paper, we focus on the fixed step-size SGLD algorithm and assess its ability to reliably sample from $\invpi$. For that purpose and to quantify precisely the relation between LMC, SGLD, SGDFP and SGD, we make for simplicity the following assumptions on $U$.

\begin{assumption}\label{assum:gradient-Lipschitz}
  For all $i\in\defEns{0,\ldots,N}$, $U_i$ is four times continuously differentiable and for all $j \in \{2,3,4\}$, $\sup_{\tp\in \rset^d}\norm{\DD^j U_i(\tp)} \leq \tildeL$. In particular for all $i \in \{0,\ldots,N\}$, $U_i$ is $\tildeL$-gradient Lipschitz, \ie~for all $\tp_1,\tp_2\in\rset^d$, $\norm{\nabla U_i(\tp_1) - \nabla U_i(\tp_2)} \leq \tildeL \norm{\tp_1 - \tp_2}$.
\end{assumption}

\begin{assumption}\label{assum:strongly-convex}
  $U$ is $\mU$-strongly convex, \ie~for all $\tp_1,\tp_2\in\rset^d$, $\ps{\nabla U(\tp_1) - \nabla U (\tp_2)}{\tp_1 - \tp_2} \geq \mU\norm[2]{\tp_1 - \tp_2}$.
\end{assumption}

\begin{assumption}\label{assum:Ui-convex}
  For all $i\in\defEns{0,\ldots,N}$, $U_i$ is convex.
\end{assumption}

Note that under \Cref{assum:gradient-Lipschitz}, $U$ is four times continuously differentiable and for $j \in \{2,3,4\}$, $\sup_{\tp\in \rset^d}\norm{\DD^j U(\tp)} \leq \LU$, with $\LU = (N+1)\tildeL$ and where $\norm{\DD^j U(\tp)} = \sup_{\norm{u_1} \leq 1,\ldots,\norm{u_j} \leq 1} \DD^j U(\tp)[u_1,\ldots,u_j]$. In particular, $U$ is  $\LU$-gradient Lipschitz.  Furthermore, under \Cref{assum:strongly-convex}, $U$ has a unique minimizer $\tpstar$. In this paper, we focus on the asymptotic $N \to \plusinfty$,. We assume that $\liminf_{N\to\plusinfty} N^{-1}\mU >0$, which is a common assumption for the analysis of SGLD and SGLDFP \cite{baker:fearnhead:fox:nemeth:2017,flammarion:jordan:2018}. In practice \cite{ahn:welling:balan:2012,chen2014stochastic,JMLR:v18:16-478,
Li:2016:PSG:3016100.3016149,pmlr-v32-satoa14,JMLR:v17:15-494}, $\step$ is of order $1/N$ and we adopt this convention in this article. 

For a practical implementation of SGLDFP, an estimator $\tphat$ of $\tpstar$ is necessary. The theoretical analysis and the bounds remain unchanged if, instead of considering SGLDFP centered \wrt~$\tpstar$, we study SGLDFP centered \wrt~$\tphat$ satisfying $\PE[\Vert \tphat - \tpstar \Vert^2] = O(1/N)$. Such an estimator $\tphat$ can be computed using for example SGD with decreasing step sizes, see \cite[eq.(2.8)]{nemirovski:juditsky:lan:shapiro:2009} and \cite[Section 3.4]{baker:fearnhead:fox:nemeth:2017}, for a computational cost linear in $N$.


\section{Results}
\label{sec:results}


\subsection{Analysis in Wasserstein distance}
\label{subsec:wasserstein-distance}

Before presenting the results, some notations and elements of Markov chain theory have to be introduced. Denote by $\setProba_2(\rset^d)$ the set of probability measures  with finite second moment and by $\borelSet(\rset^d)$ the Borel $\sigma$-algebra of $\rset^d$. For  $\lambda,\nu\in\setProba_2(\rset^d)$, define the Wasserstein distance of order $2$ by
\begin{equation*}
  \wasser[2](\lambda,\nu) = \inf_{\xi\in\Pi(\lambda,\nu)} \parenthese{\int_{\rset^d \times \rset^d} \norm[2]{\tp - \vtp} \xi(\rmd \tp, \rmd \vtp)}^{1/2} \eqsp,
\end{equation*}
where $\Pi(\lambda,\nu)$ is the set of probability measures $\xi$ on $\borelSet(\rset^d) \otimes \borelSet(\rset^d)$ satisfying for all $\borelA\in\borelSet(\rset^d)$, $\xi(\borelA \times \rset^d)) = \lambda(\borelA)$ and $\xi(\rset^d \times \borelA) = \nu(\borelA)$.

A Markov kernel $R$ on $\rset^d \times \borel(\rset^d)$ is a mapping $R: \rset^d \times \borel(\rset^d) \to \ccint{0,1}$ satisfying the following conditions: (i) for every $\tp \in \rset^d$, $R(\tp,\cdot): \borelA \mapsto R(\tp,\borelA)$ is a probability measure on $\borelSet(\rset^d)$ (ii) for every $\borelA \in \borelSet(\rset^d)$, $R(\cdot,A): \tp\mapsto  R(\tp,A)$ is a measurable function.
For any probability measure $\lambda$ on $\borelSet(\rset^d)$, we define $\lambda R$ for all $\borelA\in\borelSet(\rset^d)$ by $\lambda\Rker(\borelA) = \int_{\rset^d} \lambda(\rmd \tp) R(\tp,\borelA)$.
For all $k\in\nset^*$, we define the Markov kernel $R^k$ recursively by $\Rker^1 = \Rker$ and for all $\tp\in\rset^d$ and $\borelA\in\borelSet(\rset^d)$,
$  R^{k+1}(\tp,\borelA) = \int_{\rset^d} R^k(\tp, \rmd \vtp) R(\vtp, \borelA) $.
A probability measure $\pibar$  is invariant for $R$ if $\pibar R = \pibar$.

The  LMC, SGLD, SGD and SGLDFP algorithms defined respectively by \eqref{eq:LMC}, \eqref{eq:SGLD}, \eqref{eq:SGD} and \eqref{eq:SGLDFP} are homogeneous Markov chains with Markov kernels denoted $\Rula,\Rsgld, \Rsgd$, and $\Rfp$. To avoid overloading the notations, the dependence on $\step$ and  $N$ is implicit.
%
\begin{lemma}\label{lemma:existence-pi-gamma}
  Assume \Cref{assum:gradient-Lipschitz}, \Cref{assum:strongly-convex} and \Cref{assum:Ui-convex}.
  For any step size $\step\in\ooint{0,2/\LU}$, $\Rsgld$ (respectively $\Rula,\Rsgd,\Rfp$) has a unique invariant measure $\pisgld\in\setProba_2(\rset^d)$ (respectively $\piula,\pisgd,\pifp$). In addition, for all $\step\in\ocint{0,1/\LU}$, $\tp\in\rset^d$ and $k\in\nset$,
  \begin{equation*}
    \wasser[2]^2(\Rsgld^k(\tp,\cdot), \pisgld) \leq (1-\mU\step)^k \int_{\rset^d} \norm[2]{\tp - \vartheta} \pisgld(\rmd \vartheta)
  \end{equation*}
  and the same inequality holds for LMC, SGD and SGLDFP.
\end{lemma}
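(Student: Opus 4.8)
The standard approach to this kind of result is to establish a single-step contraction in $\wasser[2]$ via a synchronous coupling, then bootstrap it to both existence/uniqueness of the invariant measure and the geometric convergence bound. I will carry this out in detail for $\Rsgld$, since the arguments for $\Rula$, $\Rsgd$ and $\Rfp$ are entirely parallel (only the drift term and the presence/absence of Gaussian noise change, and none of these affects the contraction mechanism).

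The plan is as follows. First I would set up a synchronous coupling of two copies of the SGLD chain started from $\tp$ and $\vtp$ respectively: I use the \emph{same} minibatch $\nS_{k+1}$ and the \emph{same} Gaussian increment $\nZ_{k+1}$ in both updates. Writing $b(\tp) = \nabla U_0(\tp) + (N/\bs)\sum_{i\in\nS_{k+1}} \nabla U_i(\tp)$ for the (random) stochastic drift, the Gaussian terms cancel in the difference, so a single step gives, almost surely,
\begin{equation*}
  \tp_{k+1} - \vtp_{k+1} = (\tp_k - \vtp_k) - \step\bigl(b(\tp_k) - b(\vtp_k)\bigr) \eqsp.
\end{equation*}
Taking squared norms and then expectation over the minibatch, the key point is that $\PE[b(\tp) - b(\vtp)] = \nabla U(\tp) - \nabla U(\vtp)$ (unbiasedness of the gradient estimator), so the cross term produces $-2\step\ps{\nabla U(\tp) - \nabla U(\vtp)}{\tp - \vtp}$, which by \Cref{assum:strongly-convex} is bounded above by $-2\step\mU\norm[2]{\tp - \vtp}$. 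For the quadratic term $\step^2\PE[\norm[2]{b(\tp)-b(\vtp)}]$ I would use \Cref{assum:gradient-Lipschitz}: each summand is $\tildeL$-Lipschitz, so $b$ is $\LU$-Lipschitz pathwise, giving the bound $\step^2\LU^2\norm[2]{\tp-\vtp}$. Combining, for $\step\in\ocint{0,1/\LU}$ one gets a per-step factor $1 - 2\step\mU + \step^2\LU^2 \leq 1 - \mU\step$, where the last inequality uses $\step\LU^2 \leq \LU \leq \mU + \LU^2\step$ — more carefully, since $\mU\leq\LU$ and $\step\LU\leq 1$, one checks $\step^2\LU^2 \leq \step\LU \cdot \step\LU \le \step \mU$ need not hold directly, so the cleanest route is to note $1-2\step\mU+\step^2\LU^2 \le 1 - \step\mU$ iff $\step\LU^2 \le \mU$, which may fail; I would instead absorb the quadratic term by using that $\nabla U$ is also co-coercive under convexity, yielding $\PE[\norm[2]{b(\tp)-b(\vtp)}]\le \LU\ps{\nabla U(\tp)-\nabla U(\vtp)}{\tp-\vtp}$ and hence the factor $1 - \step(2-\step\LU)\mU \le 1-\mU\step$ for $\step\le 1/\LU$. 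This co-coercivity step, secured by \Cref{assum:Ui-convex}, is what makes the constant come out exactly as $(1-\mU\step)$, and I expect it to be the main technical subtlety.

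Once the one-step contraction $\wasser[2]^2(\Rsgld(\tp,\cdot),\Rsgld(\vtp,\cdot)) \leq (1-\mU\step)\norm[2]{\tp-\vtp}$ is in hand, existence and uniqueness of $\pisgld\in\setProba_2(\rset^d)$ follow from a standard Banach-type fixed point argument: $\Rsgld$ acts as a strict contraction on the complete metric space $(\setProba_2(\rset^d), \wasser[2])$ (completeness is classical), so it has a unique fixed point $\pisgld$, which is the invariant measure; finiteness of the second moment is checked separately from the linear-growth drift bound under the same assumptions. For the convergence rate, I would iterate the contraction $k$ times to obtain $\wasser[2]^2(\Rsgld^k(\tp,\cdot),\Rsgld^k(\vtp,\cdot)) \leq (1-\mU\step)^k\norm[2]{\tp-\vtp}$, then set $\vtp$ distributed according to the invariant measure $\pisgld$ and integrate: using $\pisgld\Rsgld^k = \pisgld$ and the coupling characterization of $\wasser[2]$,
\begin{equation*}
  \wasser[2]^2(\Rsgld^k(\tp,\cdot),\pisgld) \leq (1-\mU\step)^k \int_{\rset^d}\norm[2]{\tp-\vtp}\,\pisgld(\rmd\vtp) \eqsp,
\end{equation*}
which is the claimed bound. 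The identical argument applied to $\Rula$, $\Rsgd$ and $\Rfp$ completes the proof; for SGD one simply drops the $\sqrt{2\step}\nZ_{k+1}$ term, and for LMC the drift is deterministic so the minibatch expectation is trivial, but in every case the contraction factor is governed by the same strong convexity plus co-coercivity estimate.
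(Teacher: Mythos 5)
Your proposal is correct and follows essentially the same route as the paper: a synchronous coupling (same minibatch and same Gaussian increment), with the quadratic term absorbed via the $\PP$-a.s.\ $\LU$-co-coercivity of the stochastic drift (which is exactly where \Cref{assum:Ui-convex} enters), strong convexity for the cross term to get the factor $1-2\mU\step(1-\step\LU/2)\leq 1-\mU\step$, and then a fixed-point/Cauchy-sequence argument in the Polish space $(\setProba_2(\rset^d),\wasser[2])$ followed by integrating the iterated contraction against the invariant measure. The brief detour where you test the naive Lipschitz bound $\step^2\LU^2\norm[2]{\tp-\vtp}$ before correctly discarding it in favour of co-coercivity is harmless, since you land on the same key estimate the paper uses.
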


\begin{proof}
  The proof is postponed to \Cref{subsec:proof-lemma-existence-pi-gamma}.
\end{proof}

Under \Cref{assum:gradient-Lipschitz}, \eqref{eq:langevin-SDE} has a unique strong solution $(\tp_t)_{t\geq 0}$ for every initial condition $\tp_0\in\rset^d$ \cite[Chapter 5, Theorems 2.5 and 2.9]{karatzas:shreve:1991}. Denote by $(\sgP_t)_{t\geq 0}$ the semigroup of the Langevin diffusion defined for all $\tp_0 \in \rset^d$ and $\msa \in \mcb{\rset^d}$ by $P_t(\tp_0,\msa) = \PP(\tp_t \in \msa)$.

\begin{theorem}\label{thm:wasserstein-ula-sgld-sgd-fp}
  Assume \Cref{assum:gradient-Lipschitz}, \Cref{assum:strongly-convex} and \Cref{assum:Ui-convex}.
  For all $\step\in\ocint{0,1/\LU}$, $\lambda,\mu\in\setProba_2(\rset^d)$ and $n\in\nset$, we have the following upper-bounds in Wasserstein distance between
  \begin{enumerate}[label=\roman*)]
    \item \label{item:W-bound-ULA-SGLDFP}
    LMC and SGLDFP,
    \begin{multline*}
      \wasser[2]^2(\lambda \Rula^n, \mu \Rfp^n) \leq (1-\mU\step)^n \wasser[2]^2(\lambda,\mu) + \frac{2\LU^2\step d}{\bs \mU^2} \\
      + \frac{\LU^2\step^2}{\bs} n (1-\mU\step)^{n-1} \int_{\rset^d} \norm[2]{\vartheta - \tpstar} \mu(\rmd \vartheta) \eqsp,
    \end{multline*}
    \item \label{item:W-bound-ULA-Langevin}
    the Langevin diffusion and LMC,
    \begin{align*}
      \wasser[2]^2(\lambda \Rula^n, \mu \sgP_{n\step}) &\leq 2\parenthese{1 - \frac{\mU\LU\step}{\mU + \LU}}^n \wasser[2]^2(\lambda,\mu) + d \step \frac{\mU + \LU}{2\mU}\parenthese{3+\frac{\LU}{\mU}} \parenthese{\frac{13}{6}+\frac{\LU}{\mU}} \\
      &+  n \rme^{-(\mU/2)\step(n-1)} \LU^3 \step^3 \parenthese{1+\frac{\mU+\LU}{2\mU}} \int_{\rset^d} \norm[2]{\vartheta - \tpstar} \mu(\rmd \vartheta) \eqsp,
    \end{align*}
    \item \label{item:W-bound-SGLD-SGD}
    SGLD and SGD,
    \begin{equation*}
      \wasser[2]^2(\lambda \Rsgld^n, \mu \Rsgd^n) \leq (1-\mU\step)^n \wasser[2]^2(\lambda,\mu) + (2d)/\mU \eqsp.
    \end{equation*}
  \end{enumerate}
\end{theorem}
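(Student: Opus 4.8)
The three bounds share a common engine: a synchronous coupling of the two chains driven by the same Gaussian increments $(\nZ_k)$ and, for the subsampled chains, the same minibatches $(\nS_k)$, initialised at an optimal $\wasser[2]$-coupling of $\lambda$ and $\mu$, followed by a one-step contraction estimate for $\expe{\norm[2]{\tp_k-\vtp_k}}$ and the summation of a geometric series. The recurring deterministic fact I would isolate first is that, under \Cref{assum:gradient-Lipschitz} and \Cref{assum:Ui-convex}, for any fixed minibatch the relevant stochastic gradients $g$ in \eqref{eq:SGLD}--\eqref{eq:SGLDFP} are gradients of convex, $\LU$-smooth functions, hence co-coercive and monotone, so that for $\step\le 1/\LU$ one has the pathwise inequality $\norm[2]{(a-b)-\step(g(a)-g(b))}\le \norm[2]{a-b}-\step\ps{a-b}{g(a)-g(b)}$. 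Taking the conditional expectation over the minibatch replaces $g$ by $\nabla U$, and \Cref{assum:strongly-convex} then yields the per-step squared contraction factor $(1-\mU\step)$.

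For \ref{item:W-bound-SGLD-SGD} I would couple SGLD and SGD with identical minibatches, so that $\Delta_k=\tp_k-\vtp_k$ evolves as $\Delta_{k+1}=\Delta_k-\step(g_{k+1}(\tp_k)-g_{k+1}(\vtp_k))+\sqrt{2\step}\nZ_{k+1}$. Since $\nZ_{k+1}$ is centred and independent of $(\filtration_k,\nS_{k+1})$, the cross term vanishes in conditional expectation and the noise contributes exactly $2\step d$, while the bracketed term contracts by $(1-\mU\step)$ from the fact above. Iterating $\expecond{\norm[2]{\Delta_{k+1}}}{\filtration_k}\le (1-\mU\step)\norm[2]{\Delta_k}+2\step d$ and using $\sum_{j\ge 0}(1-\mU\step)^j\le 1/(\mU\step)$ produces the additive constant $2d/\mU$.

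For \ref{item:W-bound-ULA-SGLDFP} I would couple LMC and SGLDFP with the same $\nZ_{k+1}$ and $\nS_{k+1}$, so the Gaussian increments cancel in $\Delta_k$. Writing the SGLDFP gradient $H_{k+1}$ as its conditional mean $\nabla U(\vtp_k)$ plus a centred fluctuation $e_{k+1}$, the recursion splits into a contracting part (factor $(1-\mU\step)$) and the variance $\step^2\expecond{\norm[2]{e_{k+1}}}{\filtration_k}$. The control-variate structure is decisive: each summand is $\nabla U_i(\vtp_k)-\nabla U_i(\tpstar)$, which is $\tildeL$-Lipschitz in $\vtp_k-\tpstar$, so the minibatch variance is bounded by $(\LU^2/\bs)\norm[2]{\vtp_k-\tpstar}$, giving $\expecond{\norm[2]{\Delta_{k+1}}}{\filtration_k}\le (1-\mU\step)\norm[2]{\Delta_k}+(\step^2\LU^2/\bs)\norm[2]{\vtp_k-\tpstar}$. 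To close this I would separately bound the SGLDFP moment around $\tpstar$: because the control variate makes $\tpstar$ a common fixed point of every subsampled gradient, $H_{k+1}(\tpstar)=0$, the same contraction-plus-noise argument yields $\expe{\norm[2]{\vtp_k-\tpstar}}\le (1-\mU\step)^k\int\norm[2]{\vtp-\tpstar}\mu(\rmd\vtp)+2d/\mU$. Substituting this and summing the two resulting geometric sums — one giving the transient $n(1-\mU\step)^{n-1}$ term, the other the stationary $2\LU^2\step d/(\bs\mU^2)$ term — gives the stated inequality.

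For \ref{item:W-bound-ULA-Langevin} the comparison is between a discretisation and the exact diffusion, so the coupling is through the common Brownian motion of $(\sgP_t)$ rather than subsampling. I would combine the $\wasser[2]$-contractivity of the Langevin semigroup under \Cref{assum:strongly-convex} with a sharp one-step strong error between one LMC step and the diffusion run for time $\step$ from the same point; a Taylor/Itô expansion of $\nabla U$ along the diffusion is precisely where the bounds on $\DD^2 U,\DD^3 U,\DD^4 U$ from \Cref{assum:gradient-Lipschitz} enter, producing the local $\LU^3\step^3\norm[2]{\vtp-\tpstar}$ term, and accumulating these against the contraction factor $(1-\mU\LU\step/(\mU+\LU))$ gives the bound, in the spirit of \cite{durmus:moulines:2015,durmus:moulines:highdimULA}. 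I expect \ref{item:W-bound-ULA-Langevin} to be the main obstacle: the higher-order one-step expansion and the bookkeeping of the constants $(3+\LU/\mU)(13/6+\LU/\mU)$ are delicate, whereas \ref{item:W-bound-SGLD-SGD} and \ref{item:W-bound-ULA-SGLDFP} reduce to elementary coupling recursions once the control-variate variance bound and the auxiliary SGLDFP moment estimate are established.
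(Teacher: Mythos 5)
Your proposal matches the paper's proof in both structure and detail: parts \ref{item:W-bound-ULA-SGLDFP} and \ref{item:W-bound-SGLD-SGD} are obtained exactly as you describe, via synchronous coupling at an optimal coupling of $\lambda$ and $\mu$, the co-coercivity argument yielding the $(1-\mU\step)$ contraction for $\step\le 1/\LU$, the control-variate variance bound $\step^2\LU^2\norm[2]{\vtp_k-\tpstar}/\bs$, and the auxiliary SGLDFP moment estimate $\expe{\norm[2]{\vtp_k-\tpstar}}\le(1-\mU\step)^k\int\norm[2]{\vtp-\tpstar}\mu(\rmd\vtp)+2d/\mU$ (\Cref{lemma:bound-tp-tpstar}). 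The only divergence is part \ref{item:W-bound-ULA-Langevin}, where the paper does not carry out the one-step It\^o/Taylor analysis you sketch but directly invokes \cite[Theorem 5]{durmus:moulines:highdimULA} together with a bound on the intermediate moments $\delta_k$ and elementary simplifications, so the ``main obstacle'' you identify is resolved by citation rather than by a fresh derivation.
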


\begin{proof}
  The proof is postponed to \Cref{subsec:proof-thm-wasserstein-ula-sgld-sgd-fp}.
\end{proof}


\begin{corollary}\label{corollary-wasserstein-piula-pisgld-pisgd-pifp}
  Assume \Cref{assum:gradient-Lipschitz}, \Cref{assum:strongly-convex} and \Cref{assum:Ui-convex}. Set $\gamma = \eta / N$ with $\eta \in \ocintLigne{0,1/(2\tildeL)}$ and assume that $\liminf_{N \to \infty}  m N^{-1}>0$. Then,

\begin{enumerate}[label=\roman*)]
\item   for all $n \in \nset$,  we get $ \wasser[2](\Rula^n(\tpstar,\cdot), \Rfp^n(\tpstar,\cdot)) = \sqrt{d \eta } \, O(N^{-1/2})$ and $
    \wasser[2](\piula, \pifp) =  \sqrt{d \eta }\, O(N^{-1/2})$, \, $ \wasser[2](\piula, \invpi) = \sqrt{d \eta }\, O(N^{-1/2})$.
    \item for all $n \in \nset$,  we get $ \wasser[2](\Rsgld^n(\tpstar,\cdot), \Rsgd^n(\tpstar,\cdot)) = \sqrt{d} \, O(N^{-1/2})$ and $ \wasser[2](\pisgld, \pisgd) = \sqrt{d}\, O(N^{-1/2}) $.
\end{enumerate}
  \end{corollary}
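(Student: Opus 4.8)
The plan is to derive every assertion directly from \Cref{thm:wasserstein-ula-sgld-sgd-fp} and \Cref{lemma:existence-pi-gamma} by substituting the prescribed scaling and choosing convenient initial laws. First I would record the orders of magnitude the hypotheses enforce: since $\LU = (N+1)\tildeL$ we have $\LU = O(N)$, the step size is $\step = \seta/N$, and $\liminf_{N\to\plusinfty} N^{-1}\mU > 0$ together with $\mU \leq \LU$ gives $\mU = \Theta(N)$; consequently $\LU/\mU$ stays bounded and all prefactors of the form $(\mU+\LU)/(2\mU)$, $(3+\LU/\mU)$, $(13/6 + \LU/\mU)$ are $O(1)$. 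I would also check the step-size restriction $\step \leq 1/\LU$ of the theorem, which reduces to $N+1 \leq 2N$ and hence holds for every $N \geq 1$; this guarantees that the contraction factors $1-\mU\step$ and $1 - \mU\LU\step/(\mU+\LU)$ lie in $\coint{0,1}$ and that $\mU\step = \Theta(\seta)$ is bounded away from $0$, so that geometric and $n\rme^{-(\mU/2)\step(n-1)}$ terms vanish as $n\to\plusinfty$.

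For the kernel iterates in item~\ref{item:W-bound-ULA-SGLDFP} of \Cref{thm:wasserstein-ula-sgld-sgd-fp}, I would apply the LMC--SGLDFP bound with $\lambda = \mu = \delta_{\tpstar}$, the Dirac mass at $\tpstar$. Then $\wasser[2]^2(\lambda,\mu)=0$ and, crucially, $\int \norm[2]{\vartheta - \tpstar}\,\mu(\rmd\vartheta) = 0$, so the whole right-hand side collapses to $2\LU^2\step d/(\bs\mU^2)$. Inserting $\LU^2 = O(N^2)$, $\step = \seta/N$ and $\mU^2 = \Theta(N^2)$ turns this into $\seta d\,O(N^{-1})$, and taking square roots yields $\sqrt{d\seta}\,O(N^{-1/2})$ (the fixed $1/\bs$ is absorbed in the constant). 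To pass to the invariant measures I would use the triangle inequality $\wasser[2](\piula,\pifp) \leq \wasser[2](\piula,\Rula^n(\tpstar,\cdot)) + \wasser[2](\Rula^n(\tpstar,\cdot),\Rfp^n(\tpstar,\cdot)) + \wasser[2](\Rfp^n(\tpstar,\cdot),\pifp)$, let $n\to\plusinfty$, and kill the outer two terms with \Cref{lemma:existence-pi-gamma} (whose bounds carry the vanishing factor $(1-\mU\step)^n$), leaving exactly the constant just computed. For $\wasser[2](\piula,\invpi)$ I would instead invoke the Langevin--LMC bound (item~\ref{item:W-bound-ULA-Langevin}) with $\lambda = \piula$ and $\mu = \invpi$; by invariance $\piula\Rula^n = \piula$ and $\invpi\sgP_{n\step} = \invpi$, so the left-hand side equals the constant $\wasser[2]^2(\piula,\invpi)$ for every $n$. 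Letting $n\to\plusinfty$, the geometric term and the $n\rme^{-(\mU/2)\step(n-1)}$ term vanish, while the surviving middle term is $d\step\cdot O(1) = d\seta\,O(N^{-1})$, giving $\wasser[2](\piula,\invpi) = \sqrt{d\seta}\,O(N^{-1/2})$.

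Part ii) proceeds identically using the SGLD--SGD bound (item~\ref{item:W-bound-SGLD-SGD}). Taking $\lambda=\mu=\delta_{\tpstar}$ makes the first term vanish and leaves $\wasser[2]^2(\Rsgld^n(\tpstar,\cdot),\Rsgd^n(\tpstar,\cdot)) \leq 2d/\mU = d\,O(N^{-1})$, hence $\sqrt{d}\,O(N^{-1/2})$; note the absence of an $\seta$ factor, since the additive constant $2d/\mU$ carries no power of $\step$. For the invariant measures I would apply the same bound with $\lambda=\pisgld$, $\mu=\pisgd$, use $\pisgld\Rsgld^n=\pisgld$ and $\pisgd\Rsgd^n=\pisgd$ so that the left-hand side is constant, and let $n\to\plusinfty$: the self-referential term $(1-\mU\step)^n\wasser[2]^2(\pisgld,\pisgd)$ tends to $0$ because $\wasser[2](\pisgld,\pisgd)$ is finite (both measures lie in $\setProba_2(\rset^d)$) and $1-\mU\step < 1$, which leaves $\wasser[2]^2(\pisgld,\pisgd) \leq 2d/\mU$ and the stated rate.

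The only real work is the asymptotic bookkeeping of the $N$-dependence: one must verify that every factor involving $\LU$, $\mU$ and $\step$ collapses to the advertised power of $N$, and in particular that $\LU/\mU = O(1)$ so that the polynomial prefactors in item~\ref{item:W-bound-ULA-Langevin} do not spoil the $O(N^{-1})$ order. The two conceptual devices that make the argument clean are \emph{(i)} initializing the kernels at $\delta_{\tpstar}$, which annihilates precisely the initialization-dependent terms $\int\norm[2]{\vartheta-\tpstar}\mu(\rmd\vartheta)$ and $\wasser[2]^2(\lambda,\mu)$, and \emph{(ii)} feeding the invariant measures themselves into the theorem and exploiting $\pibar R^n = \pibar$ so that the transient terms disappear as $n\to\plusinfty$.
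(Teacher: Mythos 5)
Your proposal is correct and follows the derivation the paper clearly intends (the corollary is stated without an explicit proof, as an immediate consequence of \Cref{thm:wasserstein-ula-sgld-sgd-fp} and \Cref{lemma:existence-pi-gamma}): initializing at $\delta_{\tpstar}$ to kill the initialization-dependent terms, checking $\step=\eta/N\leq 1/\LU$ and $\LU/\mU=O(1)$, and passing to the invariant measures either by the triangle inequality or by feeding $\pibar$ into the bounds and letting $n\to\plusinfty$. The asymptotic bookkeeping ($\LU^2\step/\mU^2=\eta\,O(N^{-1})$ for part i), $2d/\mU=d\,O(N^{-1})$ with no $\eta$ factor for part ii)) is exactly right.
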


\Cref{thm:wasserstein-ula-sgld-sgd-fp} implies that the number of iterations necessary to obtain a sample $\varepsilon$-close from $\pi$ in Wasserstein distance is the same for LMC and SGLDFP.
However for LMC, the cost of one iteration is $N d$ which is larger than $p d$ the cost of one iteration for SGLDFP. In other words, to obtain an approximate sample from the target distribution at an accuracy $O(1/\sqrt{N})$ in $2$-Wasserstein distance, LMC requires $\Theta(N)$ operations, in contrast with SGLDFP that needs only $\Theta(1)$ operations.

 We show in the sequel that $\wasser(\pifp,\pisgld) = \Omega(1)$ when $N\to\plusinfty$ in the case of a Bayesian linear regression, where for two sequences $(u_N)_{N \geq 1}$, $(v_N)_{N \geq 1}$, $u_N = \Omega(v_N)$ if $\liminf_{N \to \plusinfty} u_N/v_N >0$.
The dataset is $\dataz = \{(\sy_i,\sx_i)\}_{i=1}^{N}$ where $\sy_i\in\rset$ is the response variable and  $\sx_i\in\rset^d$ are the covariates. Set $\datay = (\sy_1,\ldots,\sy_N) \in\rset^N$ and $\matX \in\rset^{N\times d}$ the matrix of covariates such that the $i^{\text{th}}$ row of $\matX$ is $\sx_i$. Let $\sigdy, \sigdt>0$. For $i\in\defEns{1,\ldots,N}$, the conditional distribution of $y_i$ given $x_i$ is Gaussian with mean $x_i^{\Tr} \theta$ and variance $\sigdy$.
The prior $\pi_0(\tp)$ is a normal distribution of mean $0$ and variance $\sigdt \Id$. The posterior distribution $\invpi$ is then proportional to $\invpi(\tp) \propto \exp\parenthese{-(1/2)(\tp-\tpstar)^{\Tr} \Sig (\tp-\tpstar)}$ where
\begin{equation*}
  \Sig = \Id/\sigdt +\matX^{\Tr}\matX/\sigdy \quad \text{and} \quad \tpstar = \Sig^{-1} (\matX^{\Tr} \datay)/\sigdy \eqsp.
\end{equation*}
We assume that $\matX^{\Tr}\matX \succeq m \Id$, with $\liminf_{N\to \plusinfty} m/N >0$.  Let $\nS$ be a minibatch of $\defEns{1,\ldots,N}$ with replacement of size $\bs$. Define
\begin{equation*}
  \nabla U_0(\tp) + (N/\bs)\sum_{i\in\nS} \nabla U_i(\tp) = \Sig(\tp-\tpstar) + \partm(\nS)(\tp-\tpstar) + \parta(\nS)
\end{equation*}
where
\begin{equation}\label{eq:def-partm-parta}
  \partm(\nS) = \frac{\Id}{\sigdt} + \frac{N}{\bs \sigdy}\sum_{i\in\nS} \sx_i \sx_i^{\Tr} - \Sig \eqsp,\eqsp
  \parta(\nS) = \frac{\tpstar}{\sigdt} + \frac{N}{\bs \sigdy} \sum_{i\in\nS} \parenthese{\sx_i^{\Tr} \tpstar - \sy_i} \sx_i \eqsp.
\end{equation}
$\partm(\nS)(\tp-\tpstar)$ is the multiplicative part of the noise in the stochastic gradient, and $\parta(\nS)$ the additive part that does not depend on $\tp$. The additive part of the stochastic gradient for SGLDFP disappears since
\begin{equation*}
  \nabla U_0(\tp) - \nabla U_0(\tpstar) + (N/\bs)\sum_{i\in\nS} \defEns{\nabla U_i(\tp) - \nabla U_i(\tpstar)}= \Sig(\tp-\tpstar) + \partm(\nS)(\tp-\tpstar) \eqsp.
\end{equation*}

In this setting, the following theorem shows that the Wasserstein distances between the marginal distribution of the iterates of SGLD and SGLDFP, and $\pisgld$ and $\invpi$, is of order $\Omega(1)$ when $N\to\plusinfty$. This is in sharp contrast with the results of \Cref{corollary-wasserstein-piula-pisgld-pisgd-pifp} where the Wasserstein distances tend to $0$ as $N\to\plusinfty$ at a rate $N^{-1/2}$. For simplicity, we state the result for $d=1$.

\begin{theorem}\label{thm:W-lower-bound-SGLD-SGLDFP}
  Consider the case of the Bayesian linear regression in dimension $1$.
  \begin{enumerate}[label=\roman*)]
    \item \label{item:W-lower-bound-SGLD-FP-1}
     For all $\step\in\ocintLigne{0,\Sig^{-1}\{1+N/(p\sum_{i=1}^{N}\sx_i^2)\}^{-1}}$ and $n\in\nset^*$,
    \begin{multline*}
      \parenthese{\frac{1-\mu}{1-\mu^n}}^{1/2} \wasser[2](\Rsgld^n(\tpstar,\cdot), \Rfp^n(\tpstar,\cdot)) \\
      \geq \defEns{2\step + \frac{\step^2 N}{\bs} \sum_{i=1}^{N}\parenthese{\frac{(\sx_i \tpstar - \sy_i)\sx_i}{\sigdy} + \frac{\tpstar}{N\sigdt}}^2}^{1/2} - \sqrt{2\step} \eqsp,
    \end{multline*}
    where $\mu \in\ocint{0, 1 - \step\Sig}$.
    \item \label{item:W-lower-bound-SGLD-FP-2}
    Set $\step = \seta/N$ with $\seta\in\ocintLigne{0, \liminf_{N\to \plusinfty} N\Sig^{-1}\{1+N/(p\sum_{i=1}^{N}\sx_i^2)\}^{-1}}$ and assume that $\liminf_{N\to\plusinfty} N^{-1}\sum_{i=1}^{N} \sx_i^2 >0$. We have $\wasser[2](\pisgld,\invpi) = \Omega(1)$.
  \end{enumerate}
\end{theorem}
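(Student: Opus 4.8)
The plan is to turn the lower bound on a Wasserstein distance---normally hard to control, since $\wasser[2]$ is an infimum over couplings and any explicit (e.g.\ synchronous) coupling only yields \emph{upper} bounds---into a comparison of second moments, and then to solve explicit scalar recursions. I center both chains at $\tpstar$ and write $u_k = \tp_k - \tpstar$ for SGLD and $v_k = \vtp_k - \tpstar$ for the SGLDFP iterate $\vtp_k$, both started at $u_0 = v_0 = 0$. Using the decomposition \eqref{eq:def-partm-parta}, the one-step maps read $u_{k+1} = (1-\step\Sig-\step\partm(\nS_{k+1}))u_k - \step\parta(\nS_{k+1}) + \sqrt{2\step}\nZ_{k+1}$ and $v_{k+1} = (1-\step\Sig-\step\partm(\nS_{k+1}))v_k + \sqrt{2\step}\nZ_{k+1}$, so the only difference between the two chains is the additive term $\step\parta(\nS_{k+1})$. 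For the lower bound I would invoke the reverse triangle inequality for $\wasser[2]$: under any coupling of the laws of $u_n$ and $v_n$ the $L^2$ norm of each marginal is coupling-independent, so $\wasser[2](\Rsgld^n(\tpstar,\cdot),\Rfp^n(\tpstar,\cdot)) \geq (\PE[u_n^2])^{1/2} - (\PE[v_n^2])^{1/2}$, reducing everything to the two second moments.

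These second moments obey closed scalar recursions. Since the minibatch is drawn with replacement, unbiasedness of the stochastic gradient at $\tpstar$ forces $\PE[\partm(\nS)] = \PE[\parta(\nS)] = 0$, and $(\nS_{k+1},\nZ_{k+1})$ is independent of the past. Taking expectations and using $u_0=v_0=0$ gives $\PE[u_k]=\PE[v_k]=0$ for all $k$. Writing $A_{k+1} = 1-\step\Sig-\step\partm(\nS_{k+1})$, so that $\PE[A_{k+1}]=1-\step\Sig$ and $\mu := \PE[A_{k+1}^2] = (1-\step\Sig)^2+\step^2\Var{\partm(\nS)}$, the cross term between the multiplicative factor $A_{k+1}$ and the additive noise $\parta$ drops out \emph{precisely because} $\PE[u_k]=0$. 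This yields $\PE[v_{k+1}^2] = \mu\,\PE[v_k^2] + 2\step$ and $\PE[u_{k+1}^2] = \mu\,\PE[u_k^2] + 2\step + \step^2\Var{\parta(\nS)}$, hence $\PE[v_n^2] = 2\step\,(1-\mu^n)/(1-\mu)$ and $\PE[u_n^2] = (2\step+\step^2\Var{\parta(\nS)})(1-\mu^n)/(1-\mu)$.

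Substituting into the reverse triangle inequality and factoring out the common geometric sum $(1-\mu^n)/(1-\mu)$ gives $((1-\mu)/(1-\mu^n))^{1/2}\wasser[2](\Rsgld^n(\tpstar,\cdot),\Rfp^n(\tpstar,\cdot)) \geq (2\step+\step^2\Var{\parta(\nS)})^{1/2}-(2\step)^{1/2}$. A short computation identifies the additive variance with the explicit sum: using $\PE[\parta(\nS)]=0$ and sampling with replacement, $\step^2\Var{\parta(\nS)} = (\step^2 N/\bs)\sum_{i=1}^N((\sx_i\tpstar-\sy_i)\sx_i/\sigdy+\tpstar/(N\sigdt))^2$, which is exactly the right-hand side of \ref{item:W-lower-bound-SGLD-FP-1}. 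Monotonicity of $x\mapsto (1-x)/(1-x^n)$ on $\ooint{0,1}$, together with the step-size restriction (which controls $\mu=\PE[A_{k+1}^2]$ relative to $1-\step\Sig$), then delivers the bound for any admissible decay factor in the range $\ocint{0,1-\step\Sig}$.

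For \ref{item:W-lower-bound-SGLD-FP-2} I would let $n\to\plusinfty$ in \ref{item:W-lower-bound-SGLD-FP-1}. By \Cref{lemma:existence-pi-gamma} both kernels converge to their invariant measures, so $\wasser[2](\Rsgld^n(\tpstar,\cdot),\Rfp^n(\tpstar,\cdot))\to\wasser[2](\pisgld,\pifp)$, while $((1-\mu)/(1-\mu^n))^{1/2}\to(1-\mu)^{1/2}$. With $\step=\seta/N$ and $\liminf_{N\to\plusinfty} N^{-1}\sum_i\sx_i^2>0$ one has $\Sig=\Theta(N)$, so $\step\Sig$ and hence $\mu$ tend to a constant in $\ooint{0,1}$; moreover $2\step\to0$ whereas $\step^2\Var{\parta(\nS)}$ tends to a strictly positive constant (the limiting empirical variance of the rescaled residual gradients, bounded away from $0$ under the stated assumption). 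The right-hand side therefore tends to a positive constant, giving $\wasser[2](\pisgld,\pifp)=\Omega(1)$. Finally, the triangle inequality together with \Cref{corollary-wasserstein-piula-pisgld-pisgd-pifp}, which yields $\wasser[2](\pifp,\invpi)=O(N^{-1/2})\to0$, transfers this to $\wasser[2](\pisgld,\invpi)\geq\wasser[2](\pisgld,\pifp)-\wasser[2](\pifp,\invpi)=\Omega(1)$. The main obstacle is the lower bound itself, which cannot come from a convenient coupling; the reverse triangle inequality circumvents this by reducing it to the two second moments, and the one genuinely delicate point is checking that the multiplicative--additive cross term vanishes, which holds exactly because both chains are centered at $\tpstar$ and thus have zero mean at every step.
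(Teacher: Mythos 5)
Your proof is correct and follows essentially the same route as the paper: both start the two chains at $\tpstar$, derive the exact scalar recursions for the second moments (with the same contraction factor $\mu$ and the same identification of the additive variance), and reduce the Wasserstein lower bound to a comparison of these moments. The only difference is the last step, where the paper invokes Gelbrich's Gaussian lower bound for $\wasser[2]$ given means and covariances, while you use the reverse triangle inequality in $L^2$ over couplings --- in this one-dimensional, equal-mean setting the two give the identical bound, and your part \ref{item:W-lower-bound-SGLD-FP-2} correctly fills in the limit $n\to\plusinfty$ and the triangle inequality with $\wasser[2](\pifp,\invpi)=O(N^{-1/2})$ that the paper leaves as ``straightforward''.
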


\begin{proof}
  The proof is postponed to \Cref{subsec:proof-thm-W-lower-bound-SGLD-SGLDFP}.
\end{proof}

The study in Wasserstein distance emphasizes the different behaviors of the LMC, SGLDFP, SGLD and SGD algorithms. When $N \to \infty$ and $\lim_{N \to \plusinfty} m/N>0$, the marginal distributions of the $k^{\text{th}}$ iterates of the  LMC and SGLDFP algorithm are very close to the Langevin diffusion and their invariant probability measures $\piula$ and $\pifp$ are similar to the posterior distribution of interest $\invpi$. In contrast, the marginal distributions of the $k^{\text{th}}$ iterates  of SGLD and SGD are analogous and their invariant probability measures $\pisgld$ and $\pisgd$ are very different from $\invpi$ when $N\to\plusinfty$.

Note that to fix the asymptotic bias of SGLD, other strategies can be considered: choosing a step size $\step \propto N^{-\beta}$ where $\beta>1$ and/or increasing the batch size $p \propto N^{\alpha}$ where $\alpha\in\ccint{0,1}$. Using the Wasserstein (of order 2) bounds of SGLD \wrt~the target distribution $\pi$, see \eg~\cite[Theorem 3]{dalalyan:karagulyan:2017}, $\alpha + \beta$ should be equal to $2$ to guarantee the $\varepsilon$-accuracy in Wasserstein distance of SGLD for a cost proportional to $N$ (up to logarithmic terms), independently of the choice of $\alpha$ and $\beta$.

\subsection{Mean and covariance matrix of $\piula, \pifp, \pisgld$}
\label{subsec:variance-piula-pifp-puslgd-pisgd}

 We now establish an expansion of the mean and second moments of  $\piula,\pifp,\pisgld$ and $\pisgd$ as $N \to \plusinfty$, and  compare them.  We first give an expansion of the mean and second moments of $\pi$ as $N \to \plusinfty$.

\begin{proposition}\label{lemma:mean-pi-tpstar}
  Assume \Cref{assum:gradient-Lipschitz} and \Cref{assum:strongly-convex} and that $\liminf_{N\to\plusinfty} N^{-1} \mU  >0$. Then,
  \begin{align*}
    \int_{\rset^d} (\tp - \tpstar)^{\otimes 2} \pi(\rmd \tp) &= \nabla^2 U(\tpstar)^{-1} + O_{N\to\plusinfty}(N^{-3/2}) \eqsp, \\
    \int_{\rset^d} \tp \,  \invpi(\rmd \theta) - \tpstar &= - (1/2) \nabla^2 U(\tpstar)^{-1} \DD^3 U(\tpstar)[\nabla^2 U(\tpstar)^{-1}] + O_{N\to\plusinfty}(N^{-3/2}) \eqsp.
  \end{align*}
\end{proposition}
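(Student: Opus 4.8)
The plan is to carry out a Laplace-type expansion of $\pi \propto \rme^{-U}$ around its mode $\tpstar$, using that $U$ is $\mU$-strongly convex with $\mU = \Theta(N)$ so that $\pi$ concentrates on a ball of radius $O(N^{-1/2})$ about $\tpstar$. Write $H = \nabla^2 U(\tpstar)$ and recall from \Cref{assum:gradient-Lipschitz} and \Cref{assum:strongly-convex} that $\mU \Id \preceq H \preceq \LU \Id$ with $\mU,\LU = \Theta(N)$, hence $\norm{H^{-1}} = \Theta(N^{-1})$. Substituting $\tp = \tpstar + u$ and using $\nabla U(\tpstar) = 0$, a third order Taylor expansion with Lagrange remainder controlled by the bound on $\DD^4 U$ gives
\[
  U(\tpstar + u) - U(\tpstar) = \tfrac12 \ps{H u}{u} + \tfrac16 \DD^3 U(\tpstar)[u^{\otimes 3}] + S(u), \qquad |S(u)| \le \tfrac{1}{24}\LU \norm[4]{u}.
\]
Denoting by $\crochet{\cdot}$ the expectation under the centered Gaussian with covariance $H^{-1}$, the factors $\rme^{-U(\tpstar)}$ and the Gaussian normalisation cancel in the moment ratios, leaving, for $k \in \{1,2\}$,
\[
  \int_{\rset^d}(\tp - \tpstar)^{\otimes k}\, \pi(\rmd \tp) = \frac{\crochet{u^{\otimes k}\rme^{-w(u)}}}{\crochet{\rme^{-w(u)}}}, \qquad w(u) = \tfrac16 \DD^3 U(\tpstar)[u^{\otimes 3}] + S(u).
\]

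First I would record the scalings under $\crochet{\cdot}$: since $u$ has covariance $\Theta(N^{-1})$, the Gaussian moments satisfy $\crochet{\norm[2p]{u}} = \Theta(N^{-p})$, while $|\DD^3 U(\tpstar)[u^{\otimes 3}]| = O(N\norm[3]{u})$ and $|S(u)| = O(N \norm[4]{u})$. Next I would dispose of the tails: by strong convexity $U(\tpstar + u) - U(\tpstar) \ge (\mU/2)\norm[2]{u}$, so the contribution of the region $\{\norm{u} \ge C N^{-1/2}\sqrt{\log N}\}$ to the exact integrals $\int u^{\otimes k}\rme^{-[U(\tpstar + u) - U(\tpstar)]}\rmd u$ is, for $C$ large enough, super-polynomially small relative to the normalising constant $\Theta(N^{-d/2})$. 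On the complementary bulk one has $|w(u)| = O(N^{-1/2}(\log N)^{3/2}) = o(1)$, so that $\rme^{-w}$ may be expanded; completing the resulting truncated Gaussian integrals to full-space ones introduces only super-polynomially small errors, and it therefore suffices to evaluate the ratios above with the full Gaussian moments $\crochet{\cdot}$.

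Next I would expand $\rme^{-w} = 1 - \tfrac16 \DD^3 U(\tpstar)[u^{\otimes 3}] + (\rme^{-w} - 1 + w) - S(u)$ and integrate term by term with Isserlis' formula. In the denominator the cubic term is odd and vanishes, while the remaining pieces are $O(N^{-1})$, so $\crochet{\rme^{-w}} = 1 + O(N^{-1})$. For $k = 2$ the cubic term again integrates to zero by parity, and the residual terms $\crochet{u^{\otimes 2}S(u)}$ and $\crochet{u^{\otimes 2}(\rme^{-w}-1+w)}$ are $O(N^{-2})$ by the scalings above; dividing by the denominator gives $\crochet{u^{\otimes 2}} + O(N^{-2}) = H^{-1} + O(N^{-2})$, which proves the first identity. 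For $k = 1$ the cubic term is the leading one: Isserlis' formula yields $\crochet{u\, \DD^3 U(\tpstar)[u^{\otimes 3}]} = 3\, H^{-1}\DD^3 U(\tpstar)[H^{-1}]$, so its contribution is $-\tfrac12 H^{-1}\DD^3 U(\tpstar)[H^{-1}] = O(N^{-1})$; the residuals $\crochet{u\, S(u)}$ and $\crochet{u(\rme^{-w}-1+w)}$ are $O(N^{-3/2})$, and multiplying by the $1 + O(N^{-1})$ denominator only perturbs the $O(N^{-1})$ leading term at order $O(N^{-2})$. This gives the second identity.

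The main obstacle is making the term-by-term expansion rigorous rather than formal: one must control $\rme^{-w}-1+w$ uniformly on the bulk through $|\rme^{-w}-1+w| \le \tfrac12 w^2 \rme^{|w|}$, using that $|w| = o(1)$ there keeps $\rme^{|w|}$ bounded, and one must verify that the polynomial prefactors $\norm{u}$ and $\norm[2]{u}$ in the numerators do not spoil the super-polynomial decay used in the bulk/tail split. Once the tail estimate and the uniform smallness of $w$ are in place, the leading-order identification via Isserlis' formula and the bookkeeping of the powers of $N$ are routine.
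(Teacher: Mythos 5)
Your proposal is correct, but it follows a genuinely different route from the paper. You perform a classical Laplace expansion of the density $\pi \propto \rme^{-U}$ around $\tpstar$: factor out the Gaussian $\mathcal{N}(0,\nabla^2 U(\tpstar)^{-1})$, split bulk from tails using strong convexity, expand $\rme^{-w}$ on the bulk, and evaluate the leading Gaussian moments by Isserlis' formula (your computation $\crochet{u\,\DD^3 U(\tpstar)[u^{\otimes 3}]} = 3\,\nabla^2U(\tpstar)^{-1}\DD^3 U(\tpstar)[\nabla^2U(\tpstar)^{-1}]$ is the right one, and your power counting is consistent). The paper instead never expands the density: it exploits two integration-by-parts (Stein-type) identities for $\pi$, namely $\expeLigne{\nabla U(\tp)} = 0$ and the Fisher-information identity $\expeLigne{\nabla U(\tp)^{\otimes 2}} = \expeLigne{\nabla^2 U(\tp)}$, Taylor-expands $\nabla U$ and $\nabla^2 U$ around $\tpstar$ inside these identities, and closes the resulting linear system for the first two moments using the a priori bound $\expeLigne{\norm[4]{\tp-\tpstar}} \leq d(d+2)/\mU^2$. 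The paper's route is shorter and sidesteps entirely the bulk/tail decomposition, the normalising constant, and the uniform control of $\rme^{-w}$ that you correctly identify as the delicate point of your argument; it also yields explicit, dimension-tracking remainder constants. Your route is more self-contained (no appeal to an external moment lemma beyond Gaussian tail bounds), extends mechanically to higher-order corrections, and in fact delivers a slightly sharper remainder $O(N^{-2})$ for the second-moment expansion, though at the price of the more laborious bookkeeping you describe.
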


\begin{proof}
  The proof is postponed to \Cref{subsubsec:proof-lemma-mean-pi-tpstar}.
\end{proof}

Contrary to the Bayesian linear regression  where the covariance matrices can be explicitly computed, see \Cref{sec:mean_var_gaussian}, only approximate expressions are available in the general case. For that purpose, we consider two types of asymptotic. For LMC and SGLDFP, we assume that $\lim_{N \to \plusinfty} \mU/N >0$, $\gamma = \eta /N$, for $\eta >0$, and we develop an asymptotic when $N\to\plusinfty$. Combining \Cref{lemma:mean-pi-tpstar} and \Cref{thm:bias-variance-tpstar-ula-fp} , we show that the biases and covariance matrices of $\piula$ and $\pifp$ are of order $\Theta(1/N)$ with remainder terms of the form $O(N^{-3/2})$, where for two sequences $(u_N)_{N \geq 1}$, $(v_N)_{N \geq 1}$, $u=\Theta(v)$ if $0< \liminf_{N \to \plusinfty} u_N / v_N  \leq \limsup_{N \to \plusinfty} u_N / v_N  < \plusinfty$.

Regarding SGD and SGLD, we do not have such concentration properties when $N\to\plusinfty$ because of the high variance of the stochastic gradients. The biases and covariance matrices of SGLD and SGD are of order $\Theta(1)$ when $N\to\plusinfty$. To obtain approximate expressions of these quantities, we set $\step = \seta/N$ where $\seta>0$ is the step size for the gradient descent over the normalized potential $U/N$. Assuming that $\mU$ is proportional to $N$ and $N \geq 1/\seta$, we show by combining \Cref{lemma:mean-pi-tpstar} and \Cref{thm:bias-variance-tpstar-sgld-sgd} that the biases and covariance matrices of SGLD and SGD are of order $\Theta(\seta)$ with remainder terms of the form $O(\seta^{3/2})$ when $\seta\to 0$.

Before giving the results associated to $\piula,\pifp,\pisgld$ and $\pisgd$, we need to introduce some notations. For any matrices $A_1,A_2\in\rset^{d\times d}$, we denote by $A_1 \otimes A_2$ the Kronecker product defined on $\rset^{d\times d}$ by $A_1 \otimes A_2 : Q \mapsto A_1 Q A_2$ and  $A^{\otimes 2} = A \otimes A$. Besides, for all $\tp_1\in\rset^d$ and $\tp_2\in\rset^d$, we denote by $\tp_1 \otimes \tp_2 \in \rset^{d\times d}$ the tensor product of $\tp_1$ and $\tp_2$. For any matrix $A\in\rset^{d\times d}$, $\trace(A)$ is the trace of $A$.

Define $\opL:\rset^{d\times d}\to \rset^{d\times d}$ for all $A\in\rset^{d\times d}$ by
\begin{equation}\label{eq:def-operator-L}
  \opL (A) = \frac{N}{\bs}\sum_{i=1}^{N} \parenthese{\nabla^2 U_i(\tpstar) - \frac{1}{N}\sum_{j=1}^{N} \nabla^2 U_j(\tpstar)}^{\otimes 2} A \eqsp.
\end{equation}
and $\opH$ and $\opG:\rset^{d \times d} \to \rset^{d \times d}$ by
\begin{align}
  \label{eq:def-operator-H}
  \opH &= \nabla^2 U(\tpstar) \otimes \Id + \Id \otimes \nabla^2 U(\tpstar) - \step \nabla^2 U(\tpstar) \otimes \nabla^2 U(\tpstar) \eqsp, \\
  \label{eq:def-operator-G}
  \opG &= \nabla^2 U(\tpstar) \otimes \Id + \Id \otimes \nabla^2 U(\tpstar) - \step (\nabla^2 U(\tpstar) \otimes \nabla^2 U(\tpstar) + \opL) \eqsp.
\end{align}
$\opL$, $\opH$ and $\opG$ can be interpreted as perturbations of $\nabla^2 U(\tpstar)^{\otimes 2}$ and $\nabla^2 U(\tpstar)$, respectively, due to the noise of the stochastic gradients.
It can be shown, see \Cref{subsec:proof-section-bias-variance}, that for $\step$ small enough, $\opH$ and $\opG$ are invertible.



\begin{theorem}\label{thm:bias-variance-tpstar-ula-fp}
  Assume \Cref{assum:gradient-Lipschitz}, \Cref{assum:strongly-convex} and \Cref{assum:Ui-convex}.
  Set $\step = \seta/N$ and assume that $\liminf_{N\to\plusinfty} N^{-1} \mU  >0$.
  There exists an (explicit) $\eta_0$ independent of $N$ such that for all $\seta\in\ooint{0,\seta_0}$,
  \begin{align}
    \label{eq:covariance-lmc}
    &\int_{\rset^d} (\tp - \tpstar)^{\otimes 2} \piula(\rmd \tp) = \opH^{-1} (2\Id) + O_{N\to\plusinfty}(N^{-3/2}) \eqsp, \\
    \label{eq:covariance-fp}
    &\int_{\rset^d} (\tp - \tpstar)^{\otimes 2} \pifp(\rmd \tp)  = \opG^{-1} (2\Id) + O_{N\to\plusinfty}(N^{-3/2}) \eqsp,
  \end{align}
  and
  \begin{align*}
    & \int_{\rset^d} \tp \piula(\rmd \tp) - \tpstar = - \nabla^2 U(\tpstar)^{-1} \DD^3 U(\tpstar) [\opH^{-1} \Id] + O_{N\to\plusinfty}(N^{-3/2}) \eqsp, \\
    & \int_{\rset^d} \tp \pifp(\rmd \tp) - \tpstar = - \nabla^2 U(\tpstar)^{-1} \DD^3 U(\tpstar) [\opG^{-1} \Id] + O_{N\to\plusinfty}(N^{-3/2}) \eqsp.
  \end{align*}
\end{theorem}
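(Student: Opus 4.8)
The plan is to use the invariance of $\piula$ and $\pifp$ to turn the recursions \eqref{eq:LMC} and \eqref{eq:SGLDFP} into fixed-point (Lyapunov-type) equations for the first two moments centered at $\tpstar$, read off the leading terms through the operators $\opH$ and $\opG$, and control every remainder at order $N^{-3/2}$. The first ingredient is a set of a priori moment bounds. Writing $u = \tp - \tpstar$ and using $\nabla U(\tpstar)=0$, \Cref{assum:strongly-convex} and \Cref{assum:gradient-Lipschitz} yield a one-step drift $\expecond{\norm[2]{\tp_{k+1} - \tpstar}}{\tp_k} \leq (1-\mU\step)\norm[2]{\tp_k - \tpstar} + 2\step d$ (and an analogue for the fourth moment), so that at stationarity $\int \norm[2]{\tp - \tpstar}\piula(\rmd\tp) = O(1/\mU) = O(1/N)$ because $\liminf_{N\to\plusinfty} N^{-1}\mU>0$, and likewise $\int \norm[4]{\tp-\tpstar}\piula(\rmd\tp)=O(N^{-2})$; Cauchy--Schwarz then gives $\int\norm[3]{\tp-\tpstar}\piula(\rmd\tp)=O(N^{-3/2})$. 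The same bounds hold for $\pifp$, using that the control variate forces the SGLDFP gradient estimate to vanish at $\tpstar$, so its conditional variance is $O(\norm[2]{u})$.

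For the covariance of $\piula$ I would take $\tp\sim\piula$, set $\vtp = \tp - \step\nabla U(\tp) + \sqrt{2\step}\nZ$ with $\nZ$ a standard Gaussian independent of $\tp$, and use $\vtp\sim\piula$ to equate $C := \int(\tp-\tpstar)^{\otimes 2}\piula(\rmd\tp)$ with $\expe{(\vtp-\tpstar)^{\otimes 2}}$. Expanding $\nabla U$ at $\tpstar$ to second order with a $\DD^4 U$ Lagrange remainder (bounded by $(N+1)\tildeL$ under \Cref{assum:gradient-Lipschitz}) and using independence of $\nZ$, the noise contributes $2\step\Id$, the terms linear in $\nZ$ vanish, and the cubic and higher Taylor contributions are $O(N^{-3/2})$ by the moment bounds. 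Since $(\Id-\step\nabla^2 U(\tpstar))C(\Id-\step\nabla^2 U(\tpstar)) - C = -\step\opH(C)$, dividing by $\step=\seta/N$ collapses this to $\opH(C) = 2\Id + O(N^{-1/2})$. For $\pifp$ the only new ingredient is the minibatch fluctuation: writing the SGLDFP estimate as $\nabla U(\tp)+\zeta$ with $\expecond{\zeta}{\tp}=0$, its conditional covariance linearized at $\tpstar$ equals $\opL(u\,u^{\Tr})$ by the sampling-with-replacement variance computation and the definition \eqref{eq:def-operator-L}, so its full expectation is $\opL(C)$; since this enters at order $\step^2$ and $\opG = \opH - \step\opL$, one gets $\opG(C) = 2\Id + O(N^{-1/2})$.

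The means follow from the same device applied to first moments. Taking expectations of \eqref{eq:LMC} and \eqref{eq:SGLDFP}, the Gaussian and (for FP) the fluctuation $\zeta$ drop out since they are conditionally centered, and with $b = \int\tp\,\piula(\rmd\tp)-\tpstar$ the first two Taylor terms give $\nabla^2 U(\tpstar)\,b = -\tfrac12\DD^3 U(\tpstar)[C] + O(N^{-1/2})$, the residual coming from the $\DD^4 U$ remainder after dividing by $\step$. Substituting $C = \opH^{-1}(2\Id) + O(N^{-3/2})$ and inverting $\nabla^2 U(\tpstar)\succeq\mU\Id$ yields the stated expansion for $\piula$; the FP case is identical with $\opG^{-1}$ in place of $\opH^{-1}$.

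The main obstacle is converting the $O(N^{-1/2})$ residuals in the operator equations into the claimed $O(N^{-3/2})$ errors on $C$ and $b$, which requires $\Vert\opH^{-1}\Vert_{\mathrm{op}} = \Vert\opG^{-1}\Vert_{\mathrm{op}} = O(1/N)$. The clean route is through the second-moment recursion map $\Phi(A) = (\Id-\step\nabla^2 U(\tpstar))A(\Id-\step\nabla^2 U(\tpstar)) + \step^2\opL(A)$ (with $\opL$ absent in the LMC case), which is self-adjoint and positive and satisfies $\opH = \step^{-1}(\Id-\Phi)$, respectively $\opG = \step^{-1}(\Id-\Phi)$. Its linear part contracts with spectral radius $(1-\step\mU)^2 = 1 - \Theta(\seta)$, while by \Cref{assum:gradient-Lipschitz} and \Cref{assum:Ui-convex} (so that $\opL\succeq 0$) the term $\step^2\opL$ has operator norm $\leq 4\seta^2\tildeL^2/\bs$, an $O(\seta^2)$ perturbation. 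Hence for $\seta$ below an explicit $\seta_0$ (ensuring both $\step\LU<1$ and a net spectral gap $\Theta(\seta)$, which also gives the invertibility asserted before the theorem), $\Vert(\Id-\Phi)^{-1}\Vert_{\mathrm{op}} = O(1/\seta)$, so $\Vert\opH^{-1}\Vert_{\mathrm{op}} = \Vert\opG^{-1}\Vert_{\mathrm{op}} = \step\,O(1/\seta) = O(1/N)$. Applying these inverses to the $O(N^{-1/2})$ residuals produces $O(N^{-3/2})$ and closes all the estimates.
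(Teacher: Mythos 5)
Your proof is correct and follows essentially the same route as the paper's: stationary fixed-point equations for the first two moments obtained by Taylor expansion of $\nabla U$ (and of the minibatch noise, linearized at $\tpstar$ to produce $\opL$) around $\tpstar$, combined with second- and fourth-moment bounds of order $N^{-1}$ and $N^{-2}$ at stationarity and inversion of $\opH$ and $\opG$. The only cosmetic difference is that you justify $\norm{\opH^{-1}}=\norm{\opG^{-1}}=O(1/N)$ through the contraction map $\Phi$, whereas the paper obtains the same conclusion from the bound $\opL \preceq \rnablaU^2 (\nabla^2 U(\tpstar))^{\otimes 2}$ and positive-definiteness of $\opG$; both yield an explicit $\seta_0$ independent of $N$.
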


\begin{proof}
  The proof is postponed to \Cref{subsec:proof-thm-bias-variance-tpstar}.
\end{proof}

\begin{theorem}\label{thm:bias-variance-tpstar-sgld-sgd}
  Assume \Cref{assum:gradient-Lipschitz}, \Cref{assum:strongly-convex} and \Cref{assum:Ui-convex}. Set $\step = \seta/N$ and assume that $\liminf_{N\to\plusinfty} N^{-1} \mU  >0$. There exists an (explicit) $\seta_0$ independent of $N$ such that for all $\seta\in\ooint{0,\seta_0}$ and $N\geq 1/\seta$,
  \begin{align}
    \label{eq:covariance-sgld}
    \int_{\rset^d} (\tp - \tpstar)^{\otimes 2} \pisgld(\rmd \tp)& = \opG^{-1} \defEns{2\Id + (\seta/\bs)\opM} + O_{\seta \to 0}(\seta^{3/2}) \eqsp, \\
    \label{eq:covariance-sgd}
    \int_{\rset^d} (\tp - \tpstar)^{\otimes 2} \pisgd(\rmd \tp) &= \fraca{\seta}{\bs} \opG^{-1} \opM  + O_{\seta \to 0}(\seta^{3/2}) \eqsp,
  \end{align}
  and
  \begin{align*}
    \int_{\rset^d} \tp \pisgld(\rmd \tp) - \tpstar &= - (1/2) \nabla^2 U(\tpstar)^{-1} \DD^3 U(\tpstar)[ \opG^{-1}\defEns{2\Id + (\seta/\bs)\opM}] + O_{\seta \to 0}(\seta^{3/2}) \eqsp, \\
    \int_{\rset^d} \tp \pisgd(\rmd \tp) - \tpstar &= -\fraca{\seta}{2\bs} \nabla^2 U(\tpstar)^{-1} \DD^3 U(\tpstar)[\opG^{-1}\opM]  + O_{\seta \to 0}(\seta^{3/2}) \eqsp,
  \end{align*}
  where
  \begin{equation}
    \label{eq:def-opM}
    \opM = \sum_{i=1}^{N} \parenthese{\nabla U_i(\tpstar) - \frac{1}{N} \sum_{j=1}^{N} \nabla U_j(\tpstar)}^{\otimes 2}\eqsp,
  \end{equation}
  and $\opG$ is defined in \eqref{eq:def-operator-G}.
\end{theorem}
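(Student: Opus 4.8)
The plan is to read off both the covariance and the mean of $\pisgld$ and $\pisgd$ from the stationarity identities for their first and second moments, after linearizing the dynamics around $\tpstar$. Fix a stationary variable $\tp\sim\pisgld$, set $\vtp=\tp-\tpstar$, and let $\nS$ and $\nZ$ be a minibatch and a standard Gaussian drawn independently of $\vtp$. I would first record three elementary facts about the stochastic gradient $g(\tp,\nS)=\nabla U_0(\tp)+(N/\bs)\sum_{i\in\nS}\nabla U_i(\tp)$: it is unbiased, so $\PE_\nS[g(\tp,\nS)]=\nabla U(\tp)$; its value at $\tpstar$, namely $b(\nS)=g(\tpstar,\nS)$, has mean zero because $\nabla U(\tpstar)=0$ and covariance $\PE_\nS[b(\nS)^{\otimes2}]=(N/\bs)\opM$ with $\opM$ as in \eqref{eq:def-opM}; and the stochastic Hessian $J(\nS)=\nabla^2 U_0(\tpstar)+(N/\bs)\sum_{i\in\nS}\nabla^2 U_i(\tpstar)$, whose mean is $\nabla^2U(\tpstar)$, satisfies $\PE_\nS[(J(\nS)-\nabla^2U(\tpstar))\,C\,(J(\nS)-\nabla^2U(\tpstar))]=\opL(C)$ with $\opL$ as in \eqref{eq:def-operator-L}. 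Both covariance identities follow directly from sampling $\nS$ with replacement, and they are exactly where $\opM$ and $\opL$ enter.

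For the covariance, write the update as $\vtp'=\vtp-\step\, g(\tpstar+\vtp,\nS)+\sqrt{2\step}\,\nZ$ and use $\PE[\vtp'^{\otimes2}]=\PE[\vtp^{\otimes2}]=\covsgld$ at stationarity. Expanding $g(\tpstar+\vtp,\nS)=b(\nS)+J(\nS)\vtp+\tfrac12\DD^2 g(\tpstar,\nS)[\vtp^{\otimes2}]+\cdots$ and taking expectations, the affine part yields \emph{exactly} $\PE_\nS[(\Id-\step J)\covsgld(\Id-\step J)]=\covsgld-\step\,\opG(\covsgld)$, the $\step^2$ contribution producing $\nabla^2U(\tpstar)\otimes\nabla^2U(\tpstar)+\opL$ and hence $\opG$ of \eqref{eq:def-operator-G}. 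The Gaussian injection contributes $2\step\Id$ for SGLD and nothing for SGD; this single term is the entire source of the gap between the two invariant covariances. The additive noise contributes $\step^2\PE[b(\nS)^{\otimes2}]=\step^2(N/\bs)\opM$. Dividing the stationary identity by $\step=\seta/N$ turns the last term into $(\seta/\bs)\opM$ and gives $\opG(\covsgld)=2\Id+(\seta/\bs)\opM+\text{(remainder)}$ for SGLD (drop $2\Id$ for SGD); I would then invert $\opG$, invertible for $\step$ small as already noted.

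The remainder collects the cross terms involving the second- and higher-order Taylor coefficients of $g$, the dominant one of the form $\step\,\DD^3U(\tpstar)[\PE\,\vtp^{\otimes3}]$. Controlling it is the main obstacle, since $\DD^3U(\tpstar)$ and $\DD^4U(\tpstar)$ have operator norm $\Theta(N)$, so a crude bound is not small. The resolution rests on uniform-in-$N$ moment bounds at the right scale: I would establish $\PE_{\pisgld}\norm[2]{\vtp}=O(\seta)$ and $\PE_{\pisgld}\norm[4]{\vtp}=O(\seta^{2})$, whence $\PE_{\pisgld}\norm[3]{\vtp}=O(\seta^{3/2})$ by Cauchy--Schwarz, through a drift/contraction recursion for the fourth moment that exploits strong convexity ($\mU=\Theta(N)$), the scaling $\step=\seta/N$ and the hypothesis $N\ge1/\seta$. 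With these bounds the remainder is $O(N\seta^{3/2})$ before inversion, but $\opG^{-1}$ has norm $\Theta(1/N)$, so after inverting the error in $\covsgld$ is $O(\seta^{3/2})$; the careful tracking of these cancelling powers of $N$ is where most of the work lies.

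For the means, taking expectations in the update and using unbiasedness together with $\PE[\nZ]=0$ gives the stationary identity $\PE_{\pisgld}[\nabla U(\tpstar+\vtp)]=0$. Expanding $\nabla U$ to third order around $\tpstar$ and using $\nabla U(\tpstar)=0$ leads to $\nabla^2U(\tpstar)\,\PE_{\pisgld}[\vtp]+\tfrac12\DD^3U(\tpstar)[\covsgld]=O(\seta^{3/2})$, the remainder being controlled by $\DD^4U(\tpstar)[\PE\,\vtp^{\otimes3}]$ and again of order $O(\seta^{3/2})$ after multiplying by $\nabla^2U(\tpstar)^{-1}=\Theta(1/N)$. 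Hence $\PE_{\pisgld}[\vtp]=-\tfrac12\nabla^2U(\tpstar)^{-1}\DD^3U(\tpstar)[\covsgld]+O(\seta^{3/2})$, and substituting the covariance expansions just obtained yields the stated mean formulas. The SGD case is identical with the Gaussian term removed throughout, and a final verification that the operators coincide with \eqref{eq:def-operator-L}, \eqref{eq:def-operator-G} and \eqref{eq:def-opM} completes the argument.
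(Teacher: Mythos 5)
Your proposal is correct and follows essentially the same route as the paper: a stationarity identity for the second moment of the invariant measure, Taylor expansion of the (stochastic) gradient around $\tpstar$ identifying $\opG$ from the linearized update and $(\seta/\bs)\opM$ from the additive noise at $\tpstar$, remainder control via uniform fourth-moment bounds of order $O(\seta^2)$ obtained from a drift recursion, and the mean formula deduced from $\PE_{\pisgld}[\nabla U(\tp)]=0$ combined with the covariance expansion. The only cosmetic difference is that you absorb the stochastic Hessian fluctuation into $\opG$ in one step, whereas the paper first writes the identity with $\opH$ and then moves the $\opL$ term across; the bookkeeping of the cancelling powers of $N$ is handled identically.
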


\begin{proof}
  The proof is postponed to \Cref{subsec:proof-thm-bias-variance-tpstar}.
\end{proof}

Note that this result implies that the mean and the covariance matrix of $\pisgld$ and $\pisgd$ stay lower bounded by a positive constant for any $\eta >0$ as $N \to \plusinfty$. In \Cref{sec:illustration-m1-m2}, a figure illustrates the results of \Cref{thm:bias-variance-tpstar-ula-fp} and \Cref{thm:bias-variance-tpstar-sgld-sgd} in the asymptotic $N\to\plusinfty$.



\section{Numerical experiments}
\label{sec:numerical-experiments}

\paragraph*{Simulated data} For illustrative purposes, we consider a Bayesian logistic regression in dimension $d=2$. We simulate $N=10^5$ covariates $\{\sx_i\}_{i=1}^{N}$ drawn from a standard $2$-dimensional Gaussian distribution and we denote by $\matX \in\rset^{N\times d}$ the matrix of covariates such that the $i^{\text{th}}$ row of $\matX$ is $\sx_i$.
Our Bayesian regression model is specified by a Gaussian prior of mean $0$ and covariance matrix the identity, and a likelihood given for $\sy_i\in\{0,1\}$ by $p(\sy_i |\sx_i, \tp) = (1 + \rme^{-\sx_i^{\Tr} \tp})^{-\sy_i} (1 + \rme^{\sx_i^{\Tr} \tp})^{\sy_i - 1}$. We simulate $N$ observations $\{\sy_i\}_{i=1}^{N}$ under this model. In this setting, \Cref{assum:gradient-Lipschitz} and \Cref{assum:Ui-convex} are satisfied, and \Cref{assum:strongly-convex} holds if the state space is compact.


To illustrate the results of \Cref{subsec:variance-piula-pifp-puslgd-pisgd}, we consider $10$ regularly spaced values of $N$ between $10^2$ and $10^5$ and we truncate the dataset accordingly. We compute an estimator $\tphat$ of $\tpstar$ using SGD \cite{scikit-learn} combined with the BFGS algorithm \cite{scipy}. For the LMC, SGLDFP, SGLD and SGD algorithms, the step size $\step$ is set equal to $(1+\delta/4)^{-1}$ where $\delta$ is the largest eigenvalue of $\matX^{\Tr} \matX$. We start the algorithms at $\tp_0 = \tphat$ and run $n=1/\step$ iterations where the first $10\%$ samples are discarded as a burn-in period.

We estimate the means and covariance matrices of $\piula, \pifp, \pisgld$ and $\pisgd$ by their empirical averages $\tpbar_n = (1/n)\sum_{k=0}^{n-1} \tp_k$ and $\{1/(n-1)\}\sum_{k=0}^{n-1} (\tp_k - \tpbar_n)^{\otimes 2}$. We plot the mean and the trace of the covariance matrices for the different algorithms, averaged over $100$ independent trajectories, in \Cref{figure:log-biases} and \Cref{figure:log-covariance} in logarithmic scale.

The slope for LMC and SGLDFP is $-1$ which confirms the convergence of $\norm{\tpbar_n - \tpstar}$ to $0$ at a rate $N^{-1}$. On the other hand, we can observe that $\norm{\tpbar_n - \tpstar}$ converges to a constant for SGD and SGLD.

\begin{figure}
\centering
\includegraphics[scale=0.4]{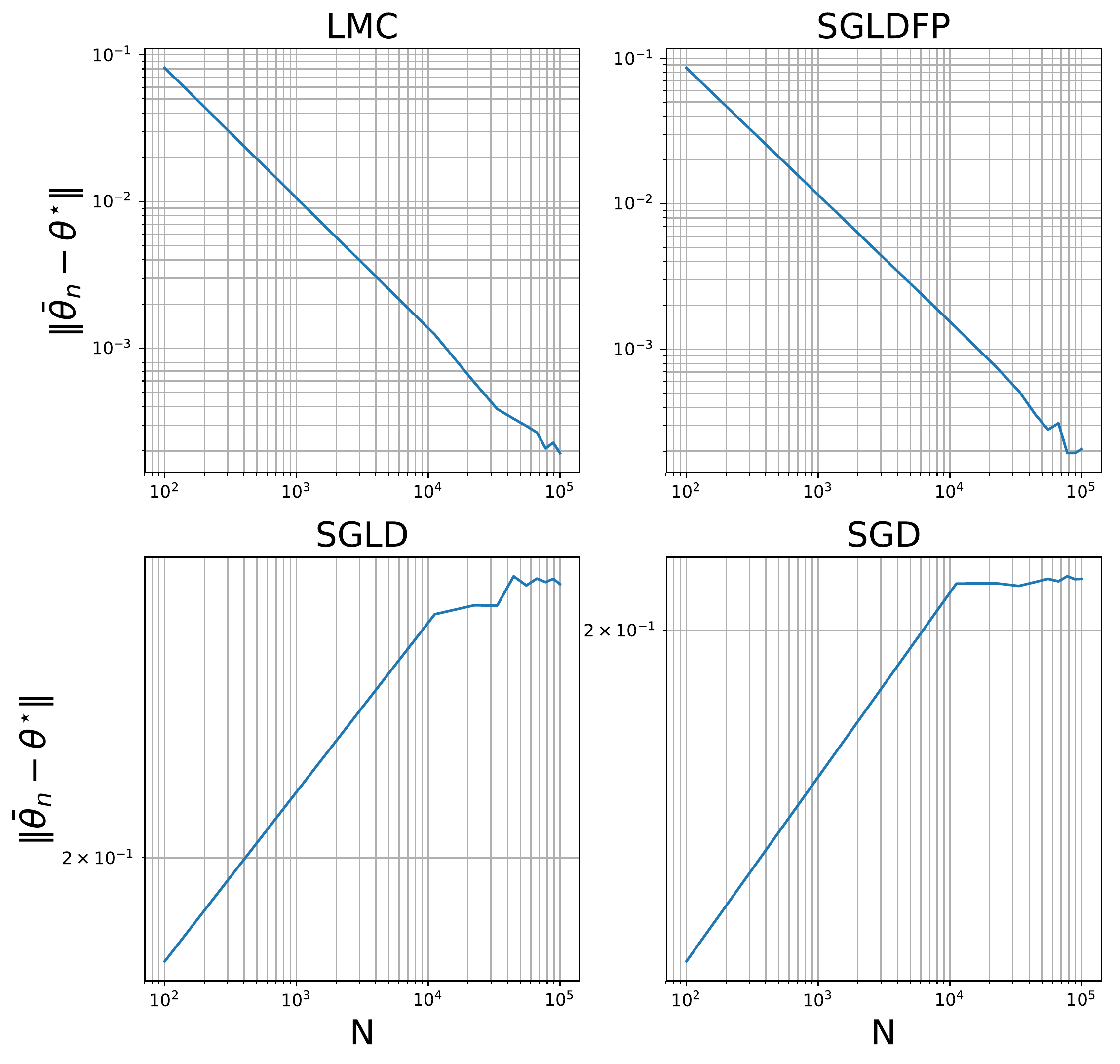}
\caption{\label{figure:log-biases} Distance to $\tpstar$, $\norm{\tpbar_n - \tpstar}$ for LMC, SGLDFP, SGLD and SGD, function of $N$, in logarithmic scale.}
\end{figure}

\begin{figure}
\centering
\includegraphics[scale=0.4]{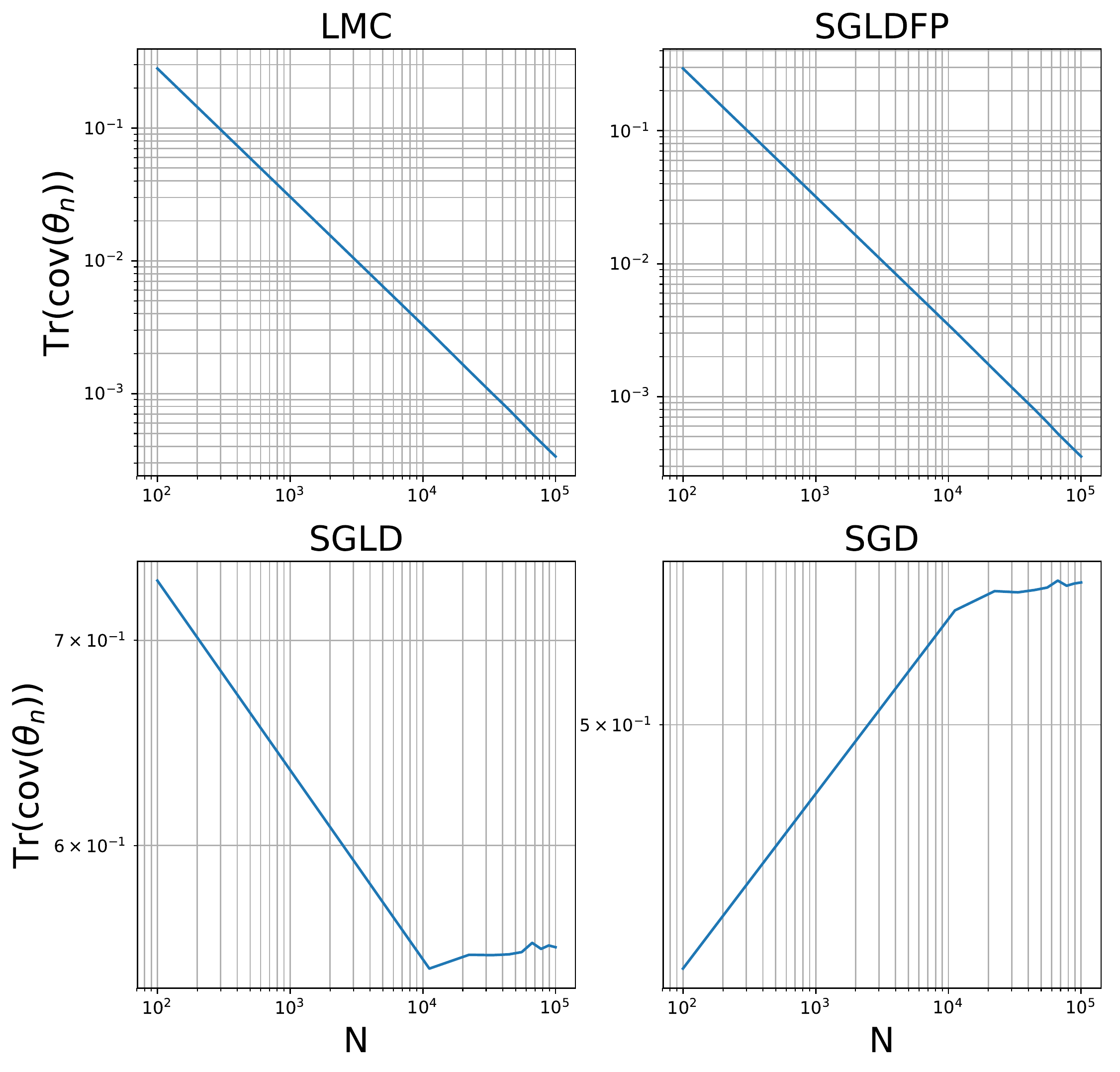}
\caption{\label{figure:log-covariance} Trace of the covariance matrices for LMC, SGLDFP, SGLD and SGD, function of $N$, in logarithmic scale.}
\end{figure}

\paragraph*{Covertype dataset} We then illustrate our results on the covertype dataset\footnote{\url{https://www.csie.ntu.edu.tw/~cjlin/libsvmtools/datasets/binary/covtype.libsvm.binary.scale.bz2}} with a Bayesian logistic regression model. The prior is a standard multivariate Gaussian distribution. Given the size of the dataset and the dimension of the problem, LMC requires high computational resources and is not included in the simulations. We truncate the training dataset at $N\in\defEns{10^3, 10^4, 10^5}$. For all algorithms, the step size $\step$ is set equal to $1/N$ and the trajectories are started at $\tphat$, an estimator of $\tpstar$, computed using SGD combined with the BFGS algorithm.

We empirically check that the variance of the stochastic gradients scale as $N^2$ for SGD and SGLD, and as $N$ for SGLDFP. We compute the empirical variance estimator of the gradients, take the mean over the dimension and display the result in a logarithmic plot in \Cref{figure:log-var-grad-cover}. The slopes are $2$ for SGD and SGLD, and $1$ for SGLDFP.

On the test dataset, we also evaluate the negative loglikelihood of the three algorithms for different values of $N\in\defEns{10^3,10^4,10^5}$, as a function of the number of iterations. The plots are shown in \Cref{figure:negll-cover}. We note that for large $N$, SGLD and SGD give very similar results that are below the performance of SGLDFP.

\begin{figure}
\centering
\includegraphics[scale=0.4]{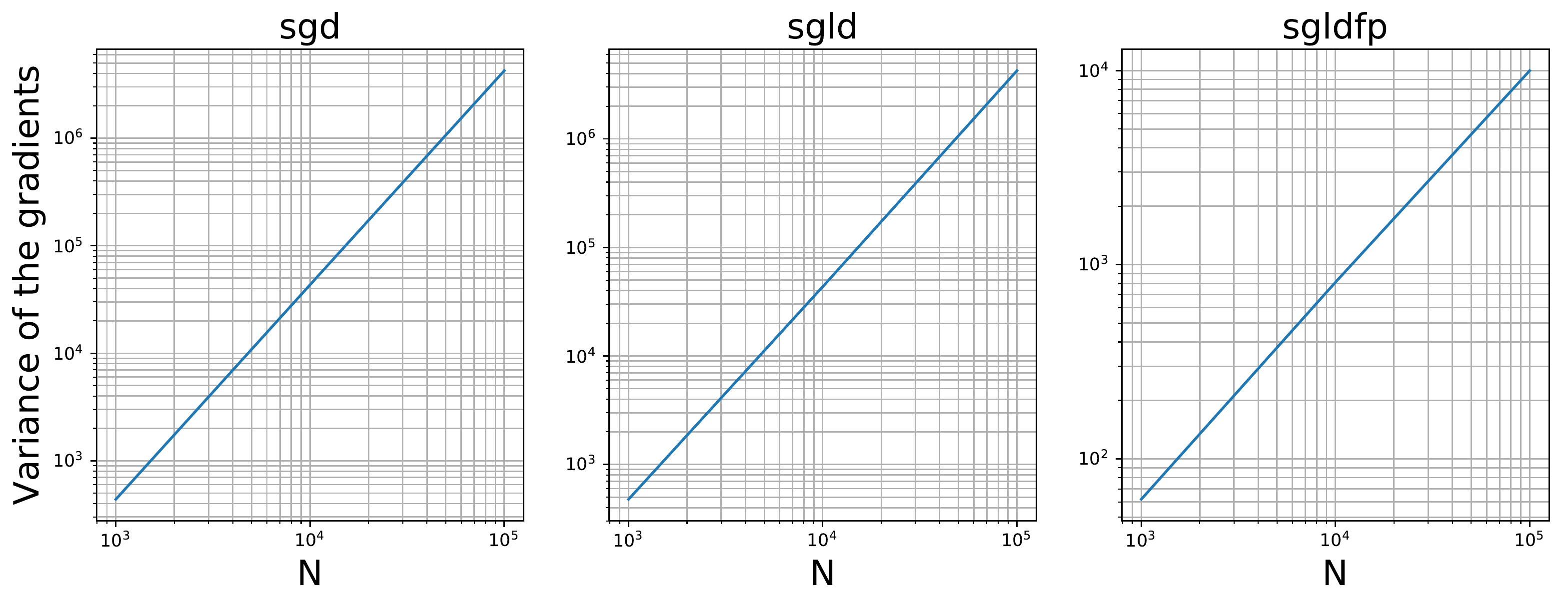}
\caption{\label{figure:log-var-grad-cover} Variance of the stochastic gradients of SGLD, SGLDFP and SGD function of $N$, in logarithmic scale.}
\end{figure}

\begin{figure}
\centering
\includegraphics[scale=0.4]{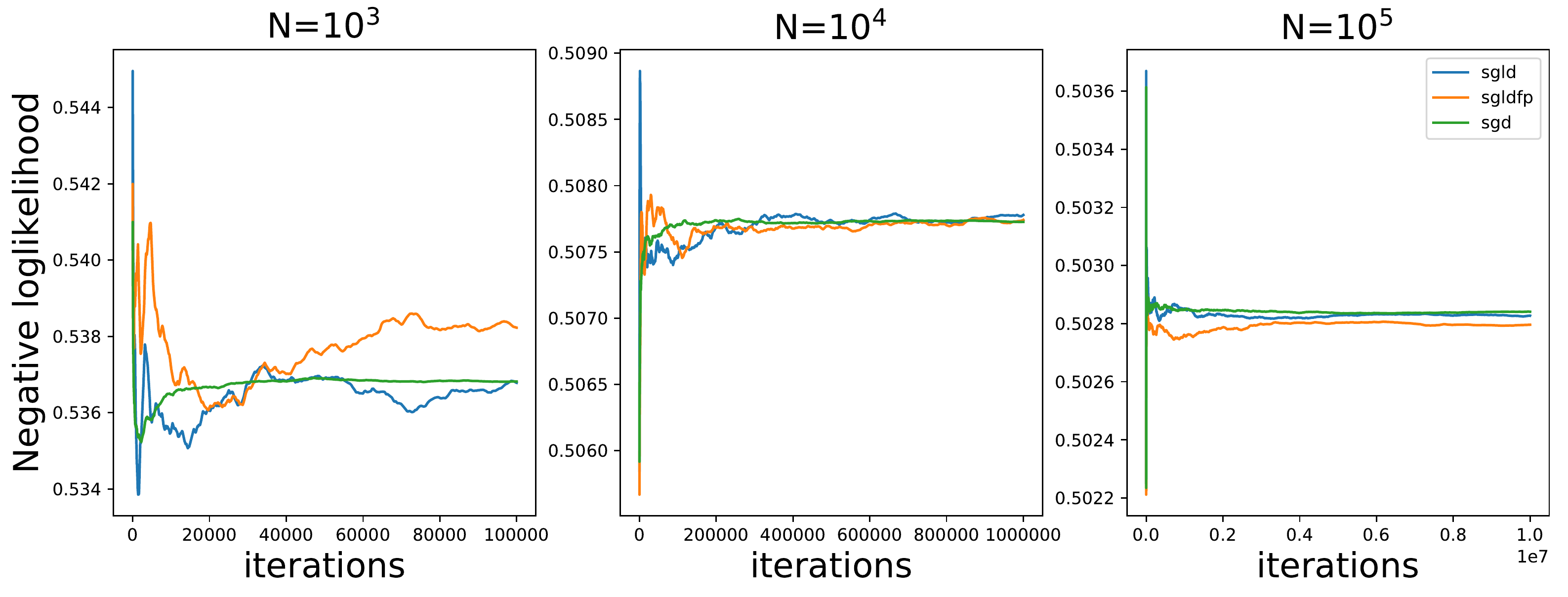}
\caption{\label{figure:negll-cover} Negative loglikelihood on the test dataset for SGLD, SGLDFP and SGD function of the number of iterations for different values of $N\in\defEns{10^3,10^4,10^5}$.}
\end{figure}

%




\appendix

\section{Proofs of \Cref{subsec:wasserstein-distance}}
\label{subsec:proof-section-wasserstein-distance}

\subsection{Proof of \Cref{lemma:existence-pi-gamma}}
\label{subsec:proof-lemma-existence-pi-gamma}

The convergence in Wasserstein distance is classically done via a standard synchronous coupling \cite[Proposition 2]{dieuleveut:durmus:bach:2017}. We prove the statement for SGLD; the adaptation for LMC, SGLDFP and SGD is immediate.
Let $\step\in\ooint{0,2/\LU}$ and $\lambda_1,\lambda_2\in\setProba_2(\rset^d)$. By
\cite[Theorem 4.1]{VillaniTransport}, there exists a couple of random variables $(\tp^{(1)}_0, \tp^{(2)}_0)$ such that $\wasser[2]^2(\lambda_1, \lambda_2)= \expe{\norm[2]{\tp^{(1)}_0 - \tp^{(2)}_0}}$.
Let $(\tp^{(1)}_k,\tp^{(2)}_k)_{k\in\nset}$ be the SGLD iterates starting from $\tp^{(1)}_0$ and $\tp^{(2)}_0$ respectively and driven by the same noise, \ie~for all $k\in\nset$,
\begin{equation*}
  \begin{cases}
    \tp^{(1)}_{k+1} & = \tp^{(1)}_k - \step \defEns{\nabla U_0(\tp^{(1)}_k) + (N/\bs)\sum_{i\in\nS_{k+1}} \nabla U_i(\tp^{(1)}_k)} + \sqrt{2\step} \nZ_{k+1} \eqsp,\\
    \tp^{(2)}_{k+1} & = \tp^{(2)}_k - \step\defEns{\nabla U_0(\tp^{(2)}_k) +(N/\bs)\sum_{i\in\nS_{k+1}} \nabla U_i(\tp^{(2)}_k)} + \sqrt{2\step} \nZ_{k+1} \eqsp,
  \end{cases}
\end{equation*}
where $(\nZ_k)_{k\geq 1}$ is an \iid~sequence of standard Gaussian variables and $(\nS_k)_{k\geq 1}$ an \iid~sequence of subsamples of $\defEns{1,\ldots,N}$ of size $\bs$. Denote by $(\filtration_k)_{k\in\nset}$ the filtration associated to $(\tp^{(1)}_k,\tp^{(2)}_k)_{k\in\nset}$. We have for $k\in\nset$,
\begin{align*}
  &\norm[2]{\tp^{(1)}_{k+1} - \tp^{(2)}_{k+1}} =  \\
  &\norm[2]{\tp^{(1)}_{k} - \tp^{(2)}_{k}} + \step^2 \norm[2]{\nabla U_0(\tp^{(1)}_{k}) + \frac{N}{\bs}\sum_{i\in\nS_{k+1}} \nabla U_i(\tp^{(1)}_{k}) - \nabla U_0(\tp^{(2)}_{k}) - \frac{N}{\bs}\sum_{i\in\nS_{k+1}} \nabla U_i(\tp^{(2)}_{k})} \\
  &- 2\step\ps{\tp^{(1)}_{k} - \tp^{(2)}_{k}}{\nabla U_0(\tp^{(1)}_{k}) + \frac{N}{\bs}\sum_{i\in\nS_{k+1}} \nabla U_i(\tp^{(1)}_{k}) - \nabla U_0(\tp^{(2)}_{k}) - \frac{N}{\bs}\sum_{i\in\nS_{k+1}} \nabla U_i(\tp^{(2)}_{k})} \eqsp.
\end{align*}
By \Cref{assum:gradient-Lipschitz} and \Cref{assum:Ui-convex}, $\tp\mapsto \nabla U_0(\tp) +  (N/\bs)\sum_{i\in\nS} \nabla U_i(\tp)$ is $\PP$-\as~$\LU$-co-coercive \cite{Zhu:Marcotte:1996}. Taking the conditional expectation \wrt~$\filtration_k$, we obtain
\begin{equation*}
  \expecond{\norm[2]{\tp^{(1)}_{k+1} - \tp^{(2)}_{k+1}}}{\filtration_k} \leq \norm[2]{\tp^{(1)}_{k} - \tp^{(2)}_{k}} - 2\step\defEns{1-(\step\LU)/2} \ps{\tp^{(1)}_{k} - \tp^{(2)}_{k}}{\nabla U(\tp^{(1)}_{k}) - \nabla U(\tp^{(2)}_{k})} \eqsp,
\end{equation*}
and by \Cref{assum:strongly-convex}
\begin{equation*}
  \expecond{\norm[2]{\tp^{(1)}_{k+1} - \tp^{(2)}_{k+1}}}{\filtration_k} \leq \defEns{1-2\mU\step(1-(\step\LU)/2)} \norm[2]{\tp^{(1)}_{k} - \tp^{(2)}_{k}} \eqsp.
\end{equation*}
Since for all $k\geq 0$, $(\tp^{(1)}_k, \tp^{(2)}_k)$ belongs to $\Pi(\lambda_1 \Rsgld^k, \lambda_2 \Rsgld^k)$, we get by a straightforward induction
\begin{equation}\label{eq:contraction-wasserstein}
  \wasser[2]^2(\lambda_1 \Rsgld^k, \lambda_2 \Rsgld^k) \leq \expe{\norm[2]{\tp^{(1)}_{k} - \tp^{(2)}_{k}}} \leq \defEns{1-2\mU\step(1-(\step\LU)/2)}^k \wasser[2]^2(\lambda_1,\lambda_2) \eqsp.
\end{equation}
By \Cref{assum:gradient-Lipschitz}, $\lambda_1 \Rsgld \in\setProba_2(\rset^d)$ and taking $\lambda_2 = \lambda_1 \Rsgld$, we get $\sum_{k=0}^{\plusinfty} \wasser[2]^2(\lambda_1\Rsgld^k, \lambda_1\Rsgld^{k+1}) < \plusinfty$.By \cite[Theorem 6.16]{VillaniTransport}, $\setProba_2(\rset^d)$ endowed with $\wasser[2]$ is a Polish space. $(\lambda_1 \Rsgld^k)_{k\geq 0}$ is a Cauchy sequence and converges to a limit $
\pisgld^{\lambda_1}\in\setProba_2(\rset^d)$. The limit $\pisgld^{\lambda_1}$ does not depend on $\lambda_1$ because, given $\lambda_2\in\setProba_2(\rset^d)$, by the triangle inequality
\begin{equation*}
  \wasser[2](\pisgld^{\lambda_1}, \pisgld^{\lambda_2}) \leq \wasser[2](\pisgld^{\lambda_1}, \lambda_1 \Rsgld^k) + \wasser[2](\lambda_1 \Rsgld^k, \lambda_2 \Rsgld^k) + \wasser[2](\pisgld^{\lambda_2}, \lambda_2 \Rsgld^k) \eqsp.
\end{equation*}
Taking the limit $k\to\plusinfty$, we get $\wasser[2](\pisgld^{\lambda_1}, \pisgld^{\lambda_2})=0$. The limit is thus the same for all initial distributions and is denoted $\pisgld$. $\pisgld$ is invariant for $\Rsgld$ since we have for all $k\in\nset^*$,
\begin{equation*}
  \wasser[2](\pisgld, \pisgld\Rsgld) \leq \wasser[2](\pisgld, \pisgld\Rsgld^k) + \wasser[2](\pisgld\Rsgld, \pisgld\Rsgld^k) \eqsp.
\end{equation*}
Taking the limit $k\to\plusinfty$, we obtain $\wasser[2](\pisgld,\pisgld\Rsgld) = 0$. Using \eqref{eq:contraction-wasserstein}, $\pisgld$ is the unique invariant probability measure for $\Rsgld$.

\subsection{Proof of \Cref{thm:wasserstein-ula-sgld-sgd-fp}}
\label{subsec:proof-thm-wasserstein-ula-sgld-sgd-fp}

Proof of \ref{item:W-bound-ULA-SGLDFP}.
Let $\step\in\ocint{0,1/\LU}$ and $\lambda_1,\lambda_2\in\setProba_2(\rset^d)$. By
\cite[Theorem~4.1]{VillaniTransport}, there exists a couple of random variables $(\tp_0, \tpp_0)$ such that $\wasser[2]^2(\lambda_1, \lambda_2)= \expe{\norm[2]{\tp_0 - \tpp_0}}$.
Let $(\tp_k,\tpp_k)_{k\in\nset}$ be the LMC and SGLDFP iterates starting from $\tp_0$ and $\tpp_0$ respectively and driven by the same noise, \ie~for all $k\in\nset$,
\begin{equation*}
  \begin{cases}
    \tp_{k+1} & = \tp_k - \step \nabla U(\tp_k) + \sqrt{2\step} \nZ_{k+1} \eqsp,\\
    \tpp_{k+1} & = \tpp_k - \step\parenthese{\nabla U_0(\tpp_k) - \nabla U_0(\tpstar) +(N/\bs)\sum_{i\in\nS_{k+1}} \defEns{\nabla U_i(\tpp_k) - \nabla U_i(\tpstar)}} + \sqrt{2\step} \nZ_{k+1} \eqsp,
  \end{cases}
\end{equation*}
where $(\nZ_k)_{k\geq 1}$ is an \iid~sequence of standard Gaussian variables and $(\nS_k)_{k\geq 1}$ an \iid~sequence of subsamples with replacement of $\defEns{1,\ldots,N}$ of size $\bs$. Denote by $(\filtration_k)_{k\in\nset}$ the filtration associated to $(\tp_k,\tpp_k)_{k\in\nset}$. We have for $k\in\nset$,
\begin{equation}\label{eq:wasserstein-ULA-FP-1}
  \expecond{\norm[2]{\tp_{k+1} - \tpp_{k+1}}}{\filtration_k} = \norm[2]{\tp_k - \tpp_k} - 2\step\ps{\tp_k - \tpp_k}{\nabla U(\tp_k) - \nabla U(\tpp_k)} + \step^2 A
\end{equation}
where
\begin{align*}
  A &= \expecond{\norm[2]{\nabla U(\tp_k) - \parenthese{\nabla U_0(\tpp_k) - \nabla U_0(\tpstar) +(N/\bs)\sum_{i\in\nS_{k+1}} \defEns{\nabla U_i(\tpp_k) - \nabla U_i(\tpstar)}}}}{\filtration_k} \\
  &= A_1 + A_2 \eqsp, \\
  A_1 & = \norm[2]{\nabla U(\tp_k) - \nabla U(\tpp_k)} \eqsp, \\
  A_2 & = \expecond{\norm[2]{\nabla U(\tpp_k) - \parenthese{\nabla U_0(\tpp_k) - \nabla U_0(\tpstar) +(N/\bs)\sum_{i\in\nS_{k+1}} \defEns{\nabla U_i(\tpp_k) - \nabla U_i(\tpstar)}}}}{\filtration_k} \eqsp.
\end{align*}
Denote by $\nW$ the random variable equal to $\nabla U_i(\tpp_k) - \nabla U_i(\tpstar) - (1/N)\sum_{j=1}^{N} \{ \nabla U_j(\tpp_k) - \nabla U_j(\tpstar) \}$ for $i\in\defEns{1,\ldots,N}$ with probability $1/N$. By \Cref{assum:gradient-Lipschitz} and using the fact that the subsamples $(S_k)_{k\geq 1}$ are drawn with replacement, we obtain
\begin{equation*}
  A_2 = (N^2/\bs) \expe{\norm[2]{\nW} | \filtration_k} \leq (\LU^2 / \bs) \norm[2]{\tpp_k - \tpstar} \eqsp.
\end{equation*}
Combining it with \eqref{eq:wasserstein-ULA-FP-1}, and using the $\LU$-co-coercivity of $\nabla U$ under \Cref{assum:gradient-Lipschitz} and \Cref{assum:strongly-convex}, we get
\begin{equation*}
  \expecond{\norm[2]{\tp_{k+1} - \tpp_{k+1}}}{\filtration_k} \leq (1-\mU\step) \norm[2]{\tp_k - \tpp_k} + \{(\LU^2\step^2)/\bs\} \norm[2]{\tpp_k - \tpstar} \eqsp.
\end{equation*}
Iterating and using \Cref{lemma:bound-tp-tpstar}-\ref{item:bound-tp-tpstar-sgldfp-ula}, we have for $n\in\nset$
\begin{align*}
  & \wasser[2]^2(\lambda_1 \Rula^n, \lambda_2 \Rfp^n) \leq \expe{\norm[2]{\tp_n - \tpp_n}} \\
  &\leq (1-\mU\step)^n \wasser[2]^2(\lambda_1,\lambda_2) + \frac{\LU^2\step^2}{\bs} \sum_{k=0}^{n-1} (1-\mU\step)^{n-1-k} \expe{\norm[2]{\tpp_k - \tpstar}} \\
  & \leq (1-\mU\step)^n \wasser[2]^2(\lambda_1,\lambda_2) + \frac{\LU^2\step^2}{\bs} n (1-\mU\step)^{n-1} \int_{\rset^d} \norm[2]{\vartheta - \tpstar} \lambda_2(\rmd \vartheta) + \frac{2\LU^2\step d}{\bs \mU^2} \eqsp.
\end{align*}

Proof of \ref{item:W-bound-ULA-Langevin}.
Denote by $\kappa=(2\mU\LU)/(\mU+\LU)$.
By \Cref{assum:gradient-Lipschitz}, \Cref{assum:strongly-convex} and \cite[Theorem 5]{durmus:moulines:highdimULA}, we have for all $n\in\nset$,
\begin{multline*}
  \wasser[2]^2(\lambda_1 \sgP_{n\step}, \lambda_2 \Rula^n) \leq 2\parenthese{1 - \kappa\step/2}^n \wasser[2]^2(\lambda_1,\lambda_2) + \frac{2\LU^2\step}{\kappa} (\kappa^{-1}+\step)\parenthese{2d+\frac{d\LU^2\step^2}{6}} \\
  + \LU^4\step^3 (\kappa^{-1}+\step) \sum_{k=1}^{n} \delta_k \defEns{1-\kappa\step/2}^{n-k}
\end{multline*}
where for all $k\in\defEns{1,\ldots,n}$,
\begin{equation*}
  \delta_k \leq \rme^{-2\mU(k-1)\step} \int_{\rset^d} \norm[2]{\vartheta - \tpstar} \lambda_1 (\rmd \vartheta) + d/\mU \eqsp.
\end{equation*}
We get the result by straightforward simplifications and using $\step\leq 1/\LU$.

Proof of \ref{item:W-bound-SGLD-SGD}.
Let $\step\in\ocint{0,1/\LU}$ and $\lambda_1,\lambda_2\in\setProba_2(\rset^d)$. By
\cite[Thereom 4.1]{VillaniTransport}, there exists a couple of random variables $(\tp_0, \tpp_0)$ such that $\wasser[2]^2(\lambda_1, \lambda_2)= \expe{\norm[2]{\tp_0 - \tpp_0}}$.
Let $(\tp_k,\tpp_k)_{k\in\nset}$ be the SGLD and SGD iterates starting from $\tp_0$ and $\tpp_0$ respectively and driven by the same noise, \ie~for all $k\in\nset$,
\begin{equation*}
  \begin{cases}
    \tp_{k+1} & = \tp_k - \step\parenthese{\nabla U_0(\tp_k)+(N/\bs)\sum_{i\in\nS_{k+1}} \nabla U_i(\tp_k)} + \sqrt{2\step} \nZ_{k+1} \eqsp,\\
    \tpp_{k+1} & = \tpp_k - \step\parenthese{\nabla U_0(\tpp_k) +(N/\bs)\sum_{i\in\nS_{k+1}} \nabla U_i(\tpp_k)} \eqsp,
  \end{cases}
\end{equation*}
where $(\nZ_k)_{k\geq 1}$ is an \iid~sequence of standard Gaussian variables and $(\nS_k)_{k\geq 1}$ an \iid~sequence of subsamples with replacement of $\defEns{1,\ldots,N}$ of size $\bs$. Denote by $(\filtration_k)_{k\in\nset}$ the filtration associated to $(\tp_k,\tpp_k)_{k\in\nset}$. We have for $k\in\nset$,
\begin{multline*}
  \expecond{\norm[2]{\tp_{k+1} - \tpp_{k+1}}}{\filtration_k} = \norm[2]{\tp_k - \tpp_k} - 2\step\ps{\tp_k - \tpp_k}{\nabla U(\tp_k) - \nabla U(\tpp_k)} + 2\step d \\
  + \step^2 \expecond{\norm[2]{\nabla U_0(\tp_k)+(N/\bs)\sum_{i\in\nS_{k+1}} \nabla U_i(\tp_k) - \nabla U_0(\tpp_k)-(N/\bs)\sum_{i\in\nS_{k+1}} \nabla U_i(\tpp_k)}}{\filtration_k} \eqsp.
\end{multline*}
By \Cref{assum:gradient-Lipschitz} and \Cref{assum:Ui-convex}, $\tp\mapsto \nabla U_0(\tp) +  (N/\bs)\sum_{i\in\nS} \nabla U_i(\tp)$ is $\PP$-\as~$\LU$-co-coercive and we obtain
\begin{equation*}
  \expecond{\norm[2]{\tp_{k+1} - \tpp_{k+1}}}{\filtration_k} \leq \{1-2\mU\step(1-\step\LU/2)\} \norm[2]{\tp_k - \tpp_k} + 2\step d \eqsp,
\end{equation*}
which concludes the proof by a straightforward induction.

\subsection{Proof of \Cref{thm:W-lower-bound-SGLD-SGLDFP}}
\label{subsec:proof-thm-W-lower-bound-SGLD-SGLDFP}

Proof of \ref{item:W-lower-bound-SGLD-FP-1}.
Let $\step\in\ocint{0,\Sig^{-1}\{1+N/(p\sum_{i=1}^{N}\sx_i^2)\}^{-1}}$, $(\tp_k)_{k\in\nset}$ be the iterates of SGLD \eqref{eq:SGLD} started at $\tpstar$ and $(\filtration_k)_{k\in\nset}$ the associated filtration. For all $k\in\nset$, $\expe{\tp_k} = \tpstar$. The variance of $\tp_k$ satisfies the following recursion for $k\in\nset$
\begin{align*}
  & \expecond{(\tp_{k+1} - \tpstar)^2}{\filtration_k} \\
  &= \expecond{\defEns{\tp_k - \tpstar - \step\parenthese{\Sig(\tp_k - \tpstar) + \partm(\nS_{k+1})(\tp_k - \tpstar) + \parta(\nS_{k+1})} + \sqrt{2\step} \nZ_{k+1}}^2}{\filtration_k} \\
  & = \mu (\tp_k - \tpstar)^2 + 2\step + \step^2 A \eqsp,
\end{align*}
where
\begin{equation*}
  \mu = \expe{\defEns{1-\step\parenthese{\frac{1}{\sigdt}+\frac{N}{\sigdy \bs}\sum_{i\in\nS} \sx_i^2}}^2} \eqsp,\quad
  A = \expe{\defEns{\frac{\tpstar}{\sigdt} + \frac{N}{\sigdy\bs}\sum_{i\in\nS} \parenthese{\sx_i\tpstar - \sy_i}\sx_i}^2} \eqsp.
\end{equation*}
We have for $\mu$,
\begin{align*}
  \mu & = 1-2\step\Sig + \step^2 \expe{\defEns{\frac{N}{\sigdy \bs}\sum_{i\in\nS} \sx_i^2 - \frac{1}{\sigdy}\sum_{i=1}^{N}\sx_i^2}^2} +\step^2\Sig^2\\
  & = 1 - 2\step\Sig + \step^2\defEns{\Sig^2 + \frac{N}{\sigma^4_y \bs}\sum_{i=1}^{N}\parenthese{\sx_i^2 -\frac{1}{N}\sum_{j=1}^{N}\sx_j^2}} \leq 1 - \step\Sig \eqsp,
\end{align*}
and for $A$,
\begin{equation*}
  A = \frac{N}{\bs}\sum_{i=1}^{N}\defEns{\frac{\parenthese{\sx_i\tpstar - \sy_i}\sx_i}{\sigdy} + \frac{\tpstar}{N\sigdt}}^2 \eqsp.
\end{equation*}
By a straightforward induction, we obtain that the variance of the $n^{\text{th}}$ iterate of SGLD started at $\tpstar$ is for $n\in\nset^*$
\begin{equation*}
  \int_{\rset} (\tp-\tpstar)^2 \Rsgld^n(\tpstar, \rmd \tp) = \frac{1-\mu^n}{1-\mu} 2\step + \frac{1-\mu^n}{1-\mu} \frac{N\step^2}{\bs}\sum_{i=1}^{N}\defEns{\frac{\parenthese{\sx_i\tpstar - \sy_i}\sx_i}{\sigdy} + \frac{\tpstar}{N\sigdt}}^2 \eqsp.
\end{equation*}
For SGLDFP, the additive part of the noise in the stochastic gradient disappears and we obtain similarly for $n\in\nset^*$
\begin{equation*}
  \int_{\rset} (\tp-\tpstar)^2 \Rfp^n(\tpstar, \rmd \tp) = \frac{1-\mu^n}{1-\mu} 2\step \eqsp .
\end{equation*}
To conclude, we use that for two probability measures with given mean and covariance matrices, the Wasserstein distance between the two Gaussians with these respective parameters is a lower bound for the Wasserstein distance between the two measures \cite[Theorem 2.1]{gelbrich:1990}.

The proof of \ref{item:W-lower-bound-SGLD-FP-2} is straightforward.

\section{Proofs of \Cref{subsec:variance-piula-pifp-puslgd-pisgd}}
\label{subsec:proof-section-bias-variance}

\subsection{Proof of \Cref{lemma:mean-pi-tpstar}}
\label{subsubsec:proof-lemma-mean-pi-tpstar}

Let $\tp$ be distributed according to $\invpi$. By \Cref{assum:strongly-convex}, for all $\tpp\in\rset^d$, $U(\tpp) \geq U(\tpstar) + (\mU/2)\norm[2]{\tpp - \tpstar}$ and $\expe{\nabla U(\tp)}=0$. By a Taylor expansion of $\nabla U$ around $\tpstar$, we obtain
\begin{equation*}
  0 = \expe{\nabla U(\tp)} = \nabla^2 U(\tpstar) \parenthese{\expe{\tp} - \tpstar} + (1/2) \DD^3 U(\tpstar)[\expe{(\tp - \tpstar)^{\otimes 2}}] + \expe{\reste{1}(\tp)} \eqsp,
\end{equation*}
where by \Cref{assum:gradient-Lipschitz}, $\reste{1}:\rset^d \to \rset^d$ satisfies
\begin{equation}\label{eq:proof-mean-pipistar-reste-1}
  \sup_{\tpp\in\rset^d} \defEns{\norm{\reste{1}(\tpp)} / \norm[3]{\tpp - \tpstar}} \leq \LU / 6 \eqsp.
\end{equation}
Rearranging the terms, we get
\begin{equation*}
  \expe{\tp} - \tpstar = -(1/2) \nabla^2 U(\tpstar)^{-1} \DD^3 U(\tpstar)[\expe{(\tp - \tpstar)^{\otimes 2}}] - \nabla^2 U(\tpstar)^{-1} \expe{\reste{1}(\tp)} \eqsp.
\end{equation*}
To estimate the covariance matrix of $\invpi$ around $\tpstar$, we start again from the Taylor expansion of $\nabla U$ around $\tpstar$ and we obtain
\begin{equation}
  \label{eq:mean-pi-tpstar_1}
  \expe{\nabla U(\tp)^{\otimes 2}} = \expe{\parenthese{\nabla^2 U(\tpstar)(\tp - \tpstar) + \reste{2}(\tp)}^{\otimes 2}}
  = \nabla^2 U(\tpstar)^{\otimes 2} \expe{(\tp - \tpstar)^{\otimes 2}} + \expe{\reste{3}(\tp)}
\end{equation}
where by \Cref{assum:gradient-Lipschitz}, $\reste{2}:\rset^d \to \rset^d$ satisfies
\begin{equation}\label{eq:proof-mean-pipistar-reste-2}
  \sup_{\tpp\in\rset^d} \defEns{\norm{\reste{2}(\tpp)} / \norm[2]{\tpp - \tpstar}} \leq \LU / 2 \eqsp,
\end{equation}
and $\reste{3}:\rset^d \to \rset^{d\times d}$ is defined for all $\tpp\in\rset^d$ by
\begin{equation}\label{eq:proof-mean-pipistar-reste-3}
  \reste{3}(\tpp) = \nabla^2 U(\tpstar)(\tpp - \tpstar) \otimes \reste{2}(\tpp) + \reste{2}(\tpp) \otimes \nabla^2 U(\tpstar)(\tpp - \tpstar) + \reste{2}(\tpp)^{\otimes 2} \eqsp.
\end{equation}
$\expe{\nabla U(\tp)^{\otimes 2}}$ is the Fisher information matrix and by a Taylor expansion of $\nabla^2 U$ around $\tpstar$ and an integration by parts,
\begin{equation*}
  \expe{\nabla U(\tp)^{\otimes 2}} = \expe{\nabla^2 U(\tp)} = \nabla^2 U(\tpstar) + \expe{\reste{4}(\tp)}
\end{equation*}
where by \Cref{assum:gradient-Lipschitz}, $\reste{4}:\rset^d \to \rset^{d\times d}$ satisfies
\begin{equation}\label{eq:proof-mean-pipistar-reste-4}
  \sup_{\tpp\in\rset^d} \defEns{\norm{\reste{4}(\tpp)} / \norm{\tpp - \tpstar}} \leq \LU \eqsp.
\end{equation}
Combining this result, \eqref{eq:proof-mean-pipistar-reste-1}, \eqref{eq:mean-pi-tpstar_1}, \eqref{eq:proof-mean-pipistar-reste-2}, \eqref{eq:proof-mean-pipistar-reste-3}, \eqref{eq:proof-mean-pipistar-reste-4} and $\expeLigne{\norm{\theta-\theta^{\star}}^4} \leq d(d+2)/m^2$ by \cite[Lemma 9]{2017arXiv171005559B} conclude the proof.

\subsection{Proofs of \Cref{thm:bias-variance-tpstar-ula-fp} and \Cref{thm:bias-variance-tpstar-sgld-sgd}}
\label{subsec:proof-section-bias-variance}

First note that under \Cref{assum:gradient-Lipschitz}, \Cref{assum:strongly-convex} and \Cref{assum:Ui-convex},  there exists $\rnablaU \in \ccintLigne{0,\LU/(\sqrt{\bs} \mU)}$ such that
\begin{equation}
  \label{eq:assum:r-dom-nabla2Utpstar}
 \opL \preceq \rnablaU^2 (\nabla^2 U(\tpstar))^{\otimes 2} \eqsp,
\end{equation}
\ie~for all $A\in\rset^{d\times d}$,
\[ \trace(A^{\Tr} \opL (A)) \leq \rnablaU^2 \trace(A^{\Tr} (\nabla^2 U(\tpstar))^{\otimes 2} A) \eqsp, \]
and where $\opL$ is defined in \eqref{eq:def-operator-L}.
In addition, if $\liminf_{N\to\plusinfty} N^{-1}m>0$, $\rnablaU$ can be chosen independently of $N$.

Moreover, for all $\step\in\oointLigne{0,2/\LU}$, $\opH$ defined in \eqref{eq:def-operator-H}, is invertible and for all $\step\in\oointLigne{0,2/\{(1+\rnablaU^2)\LU\}}$, $\opG$ defined in \eqref{eq:def-operator-G}, is invertible. Indeed,
\begin{align*}
  \opH & = \nabla^2 U(\tpstar) \otimes \parenthese{\Id - \frac{\step}{2} \nabla^2 U(\tpstar)} + \parenthese{\Id - \frac{\step}{2} \nabla^2 U(\tpstar)} \otimes \nabla^2 U(\tpstar) \succ 0 \eqsp, \\
  \opG &\succeq \nabla^2 U(\tpstar) \otimes \Id + \Id \otimes \nabla^2 U(\tpstar) - \step(1+\rnablaU^2) \nabla^2 U(\tpstar) \otimes \nabla^2 U(\tpstar) \\
  &\succeq \nabla^2 U(\tpstar) \otimes \parenthese{\Id - \frac{\step(1+\rnablaU^2)}{2}\nabla^2 U(\tpstar)} + \parenthese{\Id - \frac{\step(1+\rnablaU^2)}{2}\nabla^2 U(\tpstar)} \otimes \nabla^2 U(\tpstar) \succ 0 \eqsp.
\end{align*}

For simplicity of notation, in this Section, we use $\gesp(\tp)$ to denote the difference between the stochastic and the exact gradients at $\tp\in\rset^d$. More precisely, $\gesp$ is the null function for LMC and is defined for $\tp\in\rset^d$ by
\begin{align}
  \label{eq:def-gesp-sgld}
  \gesp(\tp) & = \frac{N}{\bs}\sum_{i\in\nS} \nabla U_i(\tp) - \sum_{j=1}^{N} \nabla U_j(\tp) \quad \text{for SGLD and SGD,} \\
  \label{eq:def-gesp-sgldfp}
  \gesp(\tp) &= \nabla U_0(\tp) - \nabla U_0(\tpstar) + \frac{N}{\bs}\sum_{i\in\nS} \defEns{\nabla U_i(\tp) - \nabla U_i(\tpstar)} - \nabla U(\tp) \quad \text{for SGLDFP,}
\end{align}
where $\nS$ is a random subsample of $\{1,\ldots,N\}$ with replacement of size $\bs\in\nset^*$. In this setting, the update equation for LMC, SGLD and SGLDFP is given for $k\in\nset$ by
\begin{equation}\label{eq:update-gesp}
  \tp_{k+1} = \tp_k - \parenthese{\nabla U(\tp_k) + \gesp_{k+1}(\tp_k)} + \sqrt{2\step} \nZ_{k+1} \eqsp,
\end{equation}
where $(\nZ_k)_{k\geq 1}$ is a sequence of \iid~standard $d$-dimensional Gaussian variables and the sequence of vector fields $(\gesp_k)_{k\geq 1}$ is associated to a sequence $(\nS_k)_{k\geq 1}$ of \iid~random subsample of $\defEns{1,\ldots,N}$ with replacement of size $\bs\in\nset^*$.
We also denote by $\pibar\in\setProba_2(\rset^d)$ the invariant probability measure of LMC, SGLDFP or SGLD.

\subsubsection{Control of the moments of order $2$ and $4$ of LMC, SGLDFP and SGLD}
\label{subsec:control-moments-ula-sgldfp-sgld}

\begin{lemma}\label{lemma:bound-tp-tpstar}
  Assume \Cref{assum:gradient-Lipschitz}, \Cref{assum:strongly-convex} and \Cref{assum:Ui-convex}.
  \begin{enumerate}[label=\roman*)]
    \item \label{item:bound-tp-tpstar-sgldfp-ula}
    For all initial distribution $\lambda\in\setProba_2(\rset^d)$, $\step\in\ocint{0,1/\LU}$ and $k\in\nset$,
    \begin{equation*}
      \expe{\norm[2]{\tp_k - \tpstar}} \leq (1-\mU\step)^k \int_{\rset^d} \norm[2]{\vartheta - \tpstar} \lambda(\rmd \vartheta) + (2d)/\mU
    \end{equation*}
    where $(\tp_k)_{k\in\nset}$ are the iterates of SGLDFP \eqref{eq:SGLDFP} or LMC \eqref{eq:LMC}.
    \item \label{item:bound-tp-tpstar-sgld}
    For all initial distribution $\lambda\in\setProba_2(\rset^d)$, $\step\in\ocint{0,1/(2\LU)}$ and $k\in\nset$,
    \begin{multline*}
      \expe{\norm[2]{\tp_k - \tpstar}} \leq (1-\mU\step)^k \int_{\rset^d} \norm[2]{\vartheta - \tpstar} \lambda(\rmd \vartheta) + \frac{2d}{\mU} \\
      + \frac{2\step N}{\mU \bs} \sum_{i=1}^{N} \norm[2]{\nabla U_i(\tpstar) - \frac{1}{N}\sum_{j=1}^{N} \nabla U_j(\tpstar)}
    \end{multline*}
    where $(\tp_k)_{k\in\nset}$ are the iterates of SGLD \eqref{eq:SGLD}.
  \end{enumerate}
\end{lemma}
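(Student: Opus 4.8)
The plan is to derive for each algorithm a one-step recursion of the form $\expecond{\norm[2]{\tp_{k+1} - \tpstar}}{\filtration_k} \leq (1-\mU\step)\norm[2]{\tp_k - \tpstar} + C$ and then to iterate it. Denote by $G_{k+1}$ the (possibly stochastic) drift used at step $k$, so that $\tp_{k+1} = \tp_k - \step G_{k+1}(\tp_k) + \sqrt{2\step}\nZ_{k+1}$ with $\expecond{G_{k+1}(\tp_k)}{\filtration_k} = \nabla U(\tp_k)$ for the three schemes. Expanding the square, using that $\nZ_{k+1}$ is centered, independent of $\filtration_k$ and normalised (which contributes the term $2\step d$), and that $\nabla U(\tpstar) = 0$, I would first establish
\begin{equation*}
  \expecond{\norm[2]{\tp_{k+1} - \tpstar}}{\filtration_k} = \norm[2]{\tp_k - \tpstar} - 2\step\ps{\tp_k - \tpstar}{\nabla U(\tp_k)} + \step^2\expecond{\norm[2]{G_{k+1}(\tp_k)}}{\filtration_k} + 2\step d \eqsp.
\end{equation*}
This identity is common to LMC, SGLDFP and SGLD; the three cases differ only in how the quadratic term $\expecond{\norm[2]{G_{k+1}(\tp_k)}}{\filtration_k}$ is controlled.

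For LMC and SGLDFP (item~\ref{item:bound-tp-tpstar-sgldfp-ula}), the drift vanishes at $\tpstar$: trivially for LMC since $\nabla U(\tpstar)=0$, and for SGLDFP by the control-variate construction that removes the additive part of the noise. Each realisation $\tp\mapsto G_{k+1}(\tp)$ is $\PP$-\as~$\LU$-co-coercive under \Cref{assum:gradient-Lipschitz} and \Cref{assum:Ui-convex} \cite{Zhu:Marcotte:1996}, being a nonnegative combination of convex, $\LU$-smooth summands. Hence $\norm[2]{G_{k+1}(\tp_k)} = \norm[2]{G_{k+1}(\tp_k) - G_{k+1}(\tpstar)} \leq \LU\ps{\tp_k - \tpstar}{G_{k+1}(\tp_k) - G_{k+1}(\tpstar)}$, and taking conditional expectation yields $\expecond{\norm[2]{G_{k+1}(\tp_k)}}{\filtration_k} \leq \LU\ps{\tp_k - \tpstar}{\nabla U(\tp_k)}$. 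Substituting, using $\step\leq 1/\LU$ so that $1-\step\LU/2\geq 1/2$, and then $\mU$-strong convexity (\Cref{assum:strongly-convex}) in the form $\ps{\tp_k - \tpstar}{\nabla U(\tp_k)} \geq \mU\norm[2]{\tp_k - \tpstar}$, I obtain the one-step inequality with $C = 2\step d$.

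For SGLD (item~\ref{item:bound-tp-tpstar-sgld}), the crux is that $G_{k+1}$ does not vanish at $\tpstar$. I would split $G_{k+1}(\tp_k) = \defEns{G_{k+1}(\tp_k) - G_{k+1}(\tpstar)} + G_{k+1}(\tpstar)$ and use $\norm[2]{a+b}\leq 2\norm[2]{a}+2\norm[2]{b}$. The co-coercive first part contributes, after conditioning, $2\LU\ps{\tp_k - \tpstar}{\nabla U(\tp_k)}$, while the residual $\expecond{\norm[2]{G_{k+1}(\tpstar)}}{\filtration_k}$ is the conditional variance of the subsampled gradient at $\tpstar$, since $\expecond{G_{k+1}(\tpstar)}{\filtration_k}=\nabla U(\tpstar)=0$. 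A direct variance computation for a size-$\bs$ subsample drawn with replacement gives $\expecond{\norm[2]{G_{k+1}(\tpstar)}}{\filtration_k} = (N/\bs)\sum_{i=1}^N\norm[2]{\nabla U_i(\tpstar) - (1/N)\sum_{j=1}^N\nabla U_j(\tpstar)}$. The extra factor $2$ now forces $\step\leq 1/(2\LU)$ so that $2\step(1-\step\LU)\mU\geq\mU\step$, and the one-step inequality acquires the additional term $2\step^2\cdot(N/\bs)\sum_{i=1}^N\norm[2]{\nabla U_i(\tpstar) - (1/N)\sum_{j=1}^N\nabla U_j(\tpstar)}$.

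In both cases I would conclude by taking full expectations, iterating the one-step inequality, and summing the geometric series $\sum_{j\geq 0}(1-\mU\step)^j = 1/(\mU\step)$: the term $2\step d$ becomes $2d/\mU$ and the SGLD variance contribution $2\step^2 V$ becomes $(2\step/\mU)V$, matching the stated bounds. The main obstacle is the SGLD step: correctly isolating the non-vanishing residual $G_{k+1}(\tpstar)$, identifying its conditional second moment with the with-replacement sampling variance, and tracking the factor $2$ that shrinks the admissible step-size range to $\step\leq 1/(2\LU)$.
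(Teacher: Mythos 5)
Your proposal is correct and follows essentially the same route as the paper: expand the square, invoke $\PP$-a.s.\ $\LU$-co-coercivity of the (stochastic) drift, handle SGLD by splitting off the non-vanishing residual $G_{k+1}(\tpstar)$ via $\norm[2]{a+b}\leq 2\norm[2]{a}+2\norm[2]{b}$ and identifying its conditional second moment with the with-replacement sampling variance, then iterate and sum the geometric series. The constants, the step-size restrictions, and the source of the factor $2$ forcing $\step\leq 1/(2\LU)$ all match the paper's argument.
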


\begin{proof}
\ref{item:bound-tp-tpstar-sgldfp-ula}.
We prove the result for SGLDFP, the case of LMC is identical.
Let $\step\in\ocint{0,1/\LU}$, $(\tp_k)_{k\in\nset}$ be the iterates of SGLDFP and $(\filtration_k)_{k\in\nset}$ the filtration associated to $(\tp_k)_{k\in\nset}$. By \eqref{eq:SGLDFP}, we have for all $k\in\nset$,
\begin{multline*}
  \expecond{\norm[2]{\tp_{k+1} - \tpstar}}{\filtration_k} = \norm[2]{\tp_k - \tpstar} - 2\step\ps{\tp_k - \tpstar}{\nabla U(\tp_k) - \nabla U(\tpstar)} + 2\step d \\
  + \step^2 \expecond{ \norm[2]{\nabla U_0(\tp_k) - \nabla U_0(\tpstar) +  \frac{N}{\bs} \sum_{i\in\nS_{k+1}} \defEns{\nabla U_i(\tp_k) - \nabla U_i(\tpstar)} }}{\filtration_k}
\end{multline*}
By \Cref{assum:gradient-Lipschitz} and \Cref{assum:Ui-convex}, $\tp\mapsto \nabla U_0(\tp) - \nabla U_0(\tpstar) +  (N/\bs)\sum_{i\in\nS} \{\nabla U_i(\tp) - \nabla U_i(\tpstar)\}$ is $\PP$-\as~$\LU$-co-coercive and we obtain
\begin{equation*}
  \expecond{\norm[2]{\tp_{k+1} - \tpstar}}{\filtration_k} \leq \defEns{1 - 2\mU\step(1-\step\LU/2)}\norm[2]{\tp_k - \tpstar} + 2\step d \eqsp.
\end{equation*}
A straightforward induction concludes the proof.

 \ref{item:bound-tp-tpstar-sgld}.
Let $\step\in\ocint{0,1/(2\LU)}$, $(\tp_k)_{k\in\nset}$ be the iterates of SGLD and $(\filtration_k)_{k\in\nset}$ the filtration associated to $(\tp_k)_{k\in\nset}$. By \eqref{eq:SGLD}, we have for all $k\in\nset$,
\begin{align*}
  \expecond{\norm[2]{\tp_{k+1} - \tpstar}}{\filtration_k} &= \norm[2]{\tp_k - \tpstar} - 2\step\ps{\tp_k - \tpstar}{\nabla U(\tp_k) - \nabla U(\tpstar)} + 2\step d \\
  &+ \step^2 \expecond{ \norm[2]{\nabla U_0(\tp_k) +  \frac{N}{\bs} \sum_{i\in\nS_{k+1}} \nabla U_i(\tp_k) }}{\filtration_k} \\
  &\leq \norm[2]{\tp_k - \tpstar} - 2\step\ps{\tp_k - \tpstar}{\nabla U(\tp_k) - \nabla U(\tpstar)} + 2\step d \\
  &+2\step^2\expecond{ \norm[2]{\nabla U_0(\tp_k) - \nabla U_0(\tpstar) +  \frac{N}{\bs} \sum_{i\in\nS_{k+1}} \defEns{\nabla U_i(\tp_k) - \nabla U_i(\tpstar)} }}{\filtration_k} \\
  &+2\step^2 \expecond{ \norm[2]{\nabla U_0(\tpstar) +  \frac{N}{\bs} \sum_{i\in\nS_{k+1}} \nabla U_i(\tpstar) }}{\filtration_k} \eqsp.
\end{align*}
By \Cref{assum:gradient-Lipschitz} and \Cref{assum:Ui-convex}, $\tp\mapsto \nabla U_0(\tp) +  (N/\bs)\sum_{i\in\nS} \nabla U_i(\tp)$ is $\PP$-\as~$\LU$-co-coercive and we obtain
\begin{multline*}
  \expecond{\norm[2]{\tp_{k+1} - \tpstar}}{\filtration_k} \leq \defEns{1 - 2\mU\step(1-\step\LU)}\norm[2]{\tp_k - \tpstar} + 2\step d \\
  + \frac{2\step^2 N}{\bs} \sum_{i=1}^{N} \norm[2]{\nabla U_i(\tpstar) - \frac{1}{N}\sum_{j=1}^{N} \nabla U_j(\tpstar)} \eqsp.
\end{multline*}
A straightforward induction concludes the proof.
\end{proof}

\begin{lemma}\label{lemma:bound-4-tp-tpstar}
  Assume \Cref{assum:gradient-Lipschitz}, \Cref{assum:strongly-convex} and \Cref{assum:Ui-convex}.
  For all initial distribution $\lambda\in\setProba_4(\rset^d)$, $\step\in\ocint{0,1/\{12(\LU \vee 1)\}}$ and $k\in\nset$,
  \begin{align*}
    \expe{\norm[4]{\tp_k - \tpstar}} &\leq (1-2\mU\step)^k \int_{\rset^d} \norm[4]{\tpp - \tpstar} \lambda(\rmd \tpp) \\
    &+ \defEns{12\step^2\expe{\norm[2]{\gesp(\tpstar)}} + 2\step(2d+1)} k (1-\mU\step)^{k-1} \int_{\rset^d} \norm[2]{\vartheta - \tpstar} \lambda(\rmd \vartheta) \\
    &+ \defEns{\frac{2d+1}{\mU} + \frac{6\step}{\mU}\expe{\norm[2]{\gesp(\tpstar)}}}^2 \\
    &+ \frac{2\step d(2+d)}{\mU} + \frac{4 \step^3}{\mU} \expe{\norm[4]{\gesp(\tpstar)}} + \frac{4\step^2 (d+2)}{\mU} \expe{\norm[2]{\gesp(\tpstar)}} \eqsp.
  \end{align*}
  where $(\tp_k)_{k\in\nset}$ are the iterates of LMC \eqref{eq:LMC}, SGLD \eqref{eq:SGLD} or SGLDFP \eqref{eq:SGLDFP}.
\end{lemma}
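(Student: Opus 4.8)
The plan is to prove a one-step drift inequality of the form
$\expecond{\norm[4]{\tp_{k+1}-\tpstar}}{\filtration_k}\leq(1-2\mU\step)\norm[4]{\tp_k-\tpstar}+b\,\norm[2]{\tp_k-\tpstar}+c$,
with $b=12\step^2\expe{\norm[2]{\gesp(\tpstar)}}+2\step(2d+1)$ and $c$ collecting the contributions that do not depend on $\tp_k$, and then to unroll it against the second-moment bound of \Cref{lemma:bound-tp-tpstar}. Writing $V_k=\tp_k-\tpstar$ and using $\nabla U(\tpstar)=0$, I would split the recursion \eqref{eq:update-gesp} as $V_{k+1}=D_k+M_{k+1}$ with drift $D_k=V_k-\step\nabla U(\tp_k)$ (which is $\filtration_k$-measurable) and increment $M_{k+1}=-\step\gesp_{k+1}(\tp_k)+\sqrt{2\step}\,\nZ_{k+1}$, which satisfies $\expecond{M_{k+1}}{\filtration_k}=0$ since $\gesp_{k+1}(\tp_k)$ is conditionally centred and $\nZ_{k+1}$ is standard Gaussian and independent of $\filtration_k$. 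The LMC case ($\gesp\equiv0$) is then a special case of this computation.

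Expanding $\norm[4]{V_{k+1}}=\parenthese{\norm[2]{D_k}+2\ps{D_k}{M_{k+1}}+\norm[2]{M_{k+1}}}^2$ and taking $\expecond{\cdot}{\filtration_k}$, the term linear in $M_{k+1}$ drops by conditional unbiasedness, leaving $\norm[4]{D_k}$, a quadratic contribution bounded by $6\norm[2]{D_k}\expecond{\norm[2]{M_{k+1}}}{\filtration_k}$ (Cauchy--Schwarz on $\ps{D_k}{M_{k+1}}^2$), the quartic term $\expecond{\norm[4]{M_{k+1}}}{\filtration_k}$, and a cubic cross term $4\ps{D_k}{\expecond{M_{k+1}\norm[2]{M_{k+1}}}{\filtration_k}}$ that I would control by Young's inequality to peel off the $\norm{D_k}$ factor, its contribution being of the same (absorbable) orders as the quadratic and quartic ones. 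For the drift I would use that, under \Cref{assum:gradient-Lipschitz} and \Cref{assum:strongly-convex}, $\nabla U$ is $\LU$-co-coercive and $\mU$-strongly monotone: writing $s=\ps{V_k}{\nabla U(\tp_k)}$, $q=\norm[2]{\nabla U(\tp_k)}$ and using $q\leq\LU s$, $s\leq\LU\norm[2]{V_k}$, $s\geq\mU\norm[2]{V_k}$,
$\norm[4]{D_k}=(\norm[2]{V_k}-2\step s+\step^2 q)^2\leq\norm[4]{V_k}-\step s\norm[2]{V_k}\parenthese{4-6\step\LU-\step^3\LU^3}\leq(1-3\mU\step)\norm[4]{V_k}$
once $\step\leq1/\{12(\LU\vee1)\}$. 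The extra $\mU\step$ of contraction margin is exactly what absorbs the fourth-order remainders generated by the noise.

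To feed the quadratic and quartic terms, I would bound the conditional noise moments by $\expecond{\norm[2]{M_{k+1}}}{\filtration_k}=\step^2\expecond{\norm[2]{\gesp(\tp_k)}}{\filtration_k}+2\step d$ and $\expecond{\norm[4]{M_{k+1}}}{\filtration_k}\leq 8\step^4\expecond{\norm[4]{\gesp(\tp_k)}}{\filtration_k}+32\step^2 d(d+2)$, using $\expeLigne{\norm[4]{\nZ}}=d(d+2)$, and then control the moments of $\gesp(\tp_k)$ by those of $\gesp(\tpstar)$ plus a $\norm[2]{V_k}$- or $\norm[4]{V_k}$-dependent term, via the $\PP$-a.s.\ $\LU$-co-coercivity of the stochastic gradient granted by \Cref{assum:Ui-convex} (the one place where that assumption is indispensable). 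Collecting everything, the $\norm[4]{V_k}$-proportional remainders carry a factor $\step^2$ times the multiplicative-noise constant, which under the step restriction is $\leq\mU\step$ and is swallowed by the $(1-3\mU\step)$ margin, producing the clean $(1-2\mU\step)$; the $\norm[2]{V_k}$-proportional remainders assemble into $b$, and the genuinely constant pieces (the Gaussian $\step^2d(d+2)$ terms and the additive $\step^3\expe{\norm[4]{\gesp(\tpstar)}}$, $\step^2\expe{\norm[2]{\gesp(\tpstar)}}$ terms) into $c$. Taking total expectations and iterating, the convolution $\sum_{k=0}^{n-1}(1-2\mU\step)^{n-1-k}(1-\mU\step)^k\leq n(1-\mU\step)^{n-1}$ produces the $k(1-\mU\step)^{k-1}$ factor on $\int\norm[2]{\vartheta-\tpstar}\lambda(\rmd\vartheta)$, while the constant tail sums through a geometric series to $(bC_2+c)/(2\mU\step)$, where $C_2$ is the constant of \Cref{lemma:bound-tp-tpstar}. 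Crucially, $b/(2\mU\step)$ and $C_2$ share the form $(2d+1)/\mU+(6\step/\mU)\expe{\norm[2]{\gesp(\tpstar)}}$, so $bC_2/(2\mU\step)$ is precisely the perfect square $\{(2d+1)/\mU+(6\step/\mU)\expe{\norm[2]{\gesp(\tpstar)}}\}^2$ in the statement, and $c/(2\mU\step)$ supplies the remaining additive constants.

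The main obstacle I anticipate is the multiplicative part of the stochastic-gradient noise: for SGLD the conditional variance $\expecond{\norm[2]{\gesp(\tp_k)}}{\filtration_k}$ has a component growing like $\norm[2]{V_k}$ with a prefactor of order $N^2/\bs$, so the cross term $\norm[2]{D_k}\step^2\expecond{\norm[2]{\gesp(\tp_k)}}{\filtration_k}$ is really of fourth-moment order rather than of lower order; the whole argument hinges on showing its coefficient does not exceed $\mU\step$, which is exactly what the restriction $\step\leq1/\{12(\LU\vee1)\}$ (together with $\step\propto1/N$ in the applications) buys. Keeping the three regimes --- $\gesp\equiv0$ for LMC, the $\tpstar$-centred $\gesp$ of \eqref{eq:def-gesp-sgldfp} for SGLDFP, and the large-variance $\gesp$ of \eqref{eq:def-gesp-sgld} for SGLD --- inside a single set of constants is the remaining bookkeeping difficulty.
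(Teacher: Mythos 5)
Your architecture (one-step drift inequality in the fourth moment, then unrolling against \Cref{lemma:bound-tp-tpstar} with the convolution bound $\sum_{k}(1-2\mU\step)^{n-1-k}(1-\mU\step)^{k}\leq n(1-\mU\step)^{n-1}$ and the geometric tail) is exactly the paper's, and your drift estimate $\norm[4]{D_k}\leq(1-3\mU\step)\norm[4]{V_k}$ is correct. The problem is in how you propose to absorb the multiplicative part of the stochastic-gradient noise. You say the argument ``hinges on showing its coefficient does not exceed $\mU\step$, which is exactly what the restriction $\step\leq 1/\{12(\LU\vee1)\}$ (together with $\step\propto 1/N$ in the applications) buys.'' The lemma assumes neither $\step\propto 1/N$ nor any lower bound on $\mU/\LU$, and a crude variance bound $\expecond{\norm[2]{\gesp(\tp_k)}}{\filtration_k}\lesssim\expe{\norm[2]{\gesp(\tpstar)}}+(\LU^2/\bs)\norm[2]{V_k}$ would require $\step\LU^2/\bs\lesssim\mU$, which does \emph{not} follow from $\step\LU\leq 1/12$ when $\mU\ll\LU$. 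The viable mechanism --- and the one the paper uses --- is to keep the full stochastic gradient together and exploit that $\tp\mapsto\nabla U(\tp)+\gesp_1(\tp)$ is $\PP$-a.s.\ $\LU$-co-coercive (its pathwise Lipschitz constant is again $\LU=(N+1)\tildeL$), so that $\norm[2]{\nabla U(\tp)+\gesp_1(\tp)-\gesp_1(\tpstar)}\leq\LU\ps{\tp-\tpstar}{\nabla U(\tp)+\gesp_1(\tp)-\gesp_1(\tpstar)}$ and likewise for the fourth power; after conditioning, the multiplicative noise becomes a multiple of $\ps{V_k}{\nabla U(\tp_k)}$ with prefactor $O(\step^2\LU)=O(\step/12)$ and is absorbed into the $-4\step\ps{V_k}{\nabla U(\tp_k)}\norm[2]{V_k}$ drift term, not compared against $\mU\step$. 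You mention this co-coercivity, but your stated absorption criterion is the wrong one and would leave a genuine gap for the lemma as stated.

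A second, smaller issue is that your generic splittings are too lossy to recover the constants in the statement. The bound $\expecond{\norm[4]{M_{k+1}}}{\filtration_k}\leq 8\step^4\expecond{\norm[4]{\gesp(\tp_k)}}{\filtration_k}+32\step^2d(d+2)$ already overshoots the Gaussian contribution $4\step^2d(2+d)$ that appears in the lemma (the paper expands the square of the full increment exactly, so the quartic Gaussian term enters as $(2\step\norm[2]{\nZ_1})^2$ with coefficient $4\step^2$, and the odd moments of $\nZ_1$ vanish rather than being Young-ed away); running your estimates through the recursion yields an inequality of the right shape but with a weaker contraction factor and larger constants than those displayed. Finally, $bC_2/(2\mU\step)$ is not ``precisely'' the perfect square $\{(2d+1)/\mU+(6\step/\mU)\expe{\norm[2]{\gesp(\tpstar)}}\}^2$; it is only dominated by it because $C_2\leq b/(2\mU\step)$ --- harmless, but it is an upper bound, not an identity.
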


\begin{proof}
Let $\step\in\ocint{0,1/\{12(\LU \vee 1)\}}$, $(\tp_k)_{k\in\nset}$ be the iterates of LMC \eqref{eq:LMC}, SGLD \eqref{eq:SGLD} or SGLDFP \eqref{eq:SGLDFP} and $(\filtration_k)_{k\in\nset}$ be the associated filtration. By developing the square, we have
\begin{multline*}
  \norm[4]{\tp_1 - \tpstar} = \Big(\norm[2]{\tp_0 - \tpstar} + 2\step\norm[2]{\nZ_1} + \step^2 \norm[2]{\nabla U(\tp_0) + \gesp_1(\tp_0)}  \\
  - 2\step \ps{\nabla U(\tp_0) + \gesp_1(\tp_0)}{\tp_0 - \tpstar} + \sqrt{2\step}\ps{\tp_0 - \tpstar}{\nZ_1} - (2\step)^{3/2} \ps{\nabla U(\tp_0) + \gesp_1(\tp_0)}{\nZ_1}\Big)^2 \eqsp,
\end{multline*}
and taking the conditional expectation \wrt~$\filtration_0$,
\begin{align*}
  & \expecond{\norm[4]{\tp_1 - \tpstar}}{\filtration_0} = \PE \Big[ \norm[4]{\tp_0 - \tpstar}+ 4\step^2 \norm[4]{\nZ_1} + \step^4 \norm[4]{\nabla U(\tp_0) + \gesp_1(\tp_0)} \\
  &+ 4 \step^2 \ps{\nabla U(\tp_0) + \gesp_1(\tp_0)}{\tp_0-\tpstar}^2 + 2\step \ps{\tp_0 - \tpstar}{\nZ_1}^2 + (2\step)^3 \ps{\nabla U(\tp_0) + \gesp_1(\tp_0)}{\nZ_1}^2 \\
  &+ 4\step \norm[2]{\nZ_1}\norm[2]{\tp_0 - \tpstar} + 2 \step^2 \norm[2]{\tp_0 - \tpstar} \norm[2]{\nabla U(\tp_0) + \gesp_1(\tp_0)} \\
  &- 4\step \norm[2]{\tp_0 - \tpstar} \ps{\nabla U(\tp_0)}{\tp_0 - \tpstar} + 4\step^3 \norm[2]{\nZ_1}\norm[2]{\nabla U(\tp_0) + \gesp_1(\tp_0)} \\
  &- 8\step^2 \norm[2]{\nZ_1} \ps{\nabla U(\tp_0)}{\tp_0 - \tpstar} - 4\step^3 \norm[2]{\nabla U(\tp_0) + \gesp_1(\tp_0)} \ps{\nabla U(\tp_0) + \gesp_1(\tp_0)}{\tp_0 - \tpstar} \\
  &- 8\step^2 \ps{\tp_0 - \tpstar}{\nZ_1} \ps{\nabla U(\tp_0) + \gesp_1(\tp_0)}{\nZ_1}
  | \filtration_0 \Big] \eqsp.
\end{align*}
By \Cref{assum:gradient-Lipschitz} and \Cref{assum:Ui-convex}, $\tp\mapsto \nabla U(\tp) +  \gesp_1(\tp)$ is $\PP$-\as~$\LU$-co-coercive and we have for all $\tp\in\rset^d$, $\PP$-\as~,
\begin{align*}
  \norm[2]{\nabla U(\tp) + \gesp_1(\tp) - \gesp_1(\tpstar)} & \leq \LU \ps{\tp - \tpstar}{\nabla U(\tp) + \gesp_1(\tp) - \gesp_1(\tpstar)} \eqsp, \\
  \norm[4]{\nabla U(\tp) + \gesp_1(\tp) - \gesp_1(\tpstar)} & \leq \LU^2\norm[2]{\tp-\tpstar} \ps{\tp - \tpstar}{\nabla U(\tp) + \gesp_1(\tp) - \gesp_1(\tpstar)} \eqsp.
\end{align*}
Combining it with $\expe{\norm[4]{\nZ_1}} = d(2+d)$, we obtain
\begin{align*}
  & \expecond{\norm[4]{\tp_1 - \tpstar}}{\filtration_0,\nS_1} \leq \norm[4]{\tp_0 - \tpstar} - 4\step(1 - 3\step\LU - 2\step^3\LU^2)\norm[2]{\tp_0 - \tpstar} \\
  &\times \ps{\tp_0 - \tpstar}{\nabla U(\tp_0) + \gesp_1(\tp_0) - \gesp_1(\tpstar)} + (12\step^2\norm[2]{\gesp_1(\tpstar)} + 2\step(2d+1)) \norm[2]{\tp_0 - \tpstar} \\
  &+ 4\step^2 d(2+d) + 8 \step^4 \norm[4]{\gesp_1(\tpstar)} + 8\step^3 (d+2) \norm[2]{\gesp_1(\tpstar)} \\
  &- 8(d+1)\step^2(1-2\step\LU) \ps{\tp_0 - \tpstar}{\nabla U(\tp_0) + \gesp_1(\tp_0) - \gesp_1(\tpstar)} \eqsp.
\end{align*}
By \Cref{assum:strongly-convex} and using $\step\leq 1/\{12(\LU \vee 1)\}$, we get
\begin{align*}
  &\expecond{\norm[4]{\tp_1 - \tpstar}}{\filtration_0} \leq (1-2\mU\step)\norm[4]{\tp_0 - \tpstar} + \defEns{12\step^2\expe{\norm[2]{\gesp_1(\tpstar)}} + 2\step(2d+1)} \norm[2]{\tp_0 - \tpstar} \\
  &+ 4\step^2 d(2+d) + 8 \step^4 \expe{\norm[4]{\gesp_1(\tpstar)}} + 8\step^3 (d+2) \expe{\norm[2]{\gesp_1(\tpstar)}} \eqsp.
\end{align*}
By a straightforward induction, we have for all $n\in\nset$
\begin{align*}
  &\expe{\norm[4]{\tp_n - \tpstar}} \leq (1-2\mU\step)^n \expe{\norm[4]{\tp_0 - \tpstar}} \\
  &+ \defEns{12\step^2\expe{\norm[2]{\gesp(\tpstar)}} + 2\step(2d+1)} \sum_{k=0}^{n-1}(1-2\mU\step)^{n-1-k} \expe{\norm[2]{\tp_k - \tpstar}} \\
  &+ (2\mU\step)^{-1}\defEns{4\step^2 d(2+d) + 8 \step^4 \expe{\norm[4]{\gesp(\tpstar)}} + 8\step^3 (d+2) \expe{\norm[2]{\gesp(\tpstar)}}}
\end{align*}
and by \Cref{lemma:bound-tp-tpstar},
\begin{align*}
  \expe{\norm[4]{\tp_n - \tpstar}} &\leq (1-2\mU\step)^n \int_{\rset^d} \norm[4]{\tpp - \tpstar} \lambda(\rmd \tpp) \\
  &+ \defEns{12\step^2\expe{\norm[2]{\gesp_1(\tpstar)}} + 2\step(2d+1)} n (1-\mU\step)^{n-1} \int_{\rset^d} \norm[2]{\vartheta - \tpstar} \lambda(\rmd \vartheta) \\
  &+ \defEns{\frac{2d+1}{\mU} + \frac{6\step}{\mU}\expe{\norm[2]{\gesp(\tpstar)}}}^2 \\
  &+ \frac{2\step d(2+d)}{\mU} + \frac{4 \step^3}{\mU} \expe{\norm[4]{\gesp(\tpstar)}} + \frac{4\step^2 (d+2)}{\mU} \expe{\norm[2]{\gesp(\tpstar)}} \eqsp.
\end{align*}
\end{proof}

Thanks to this lemma, we obtain the following corollary. The upper bound for SGD is given by \cite[Lemma 13]{dieuleveut:durmus:bach:2017}.

\begin{corollary}\label{corollary:moments-4-sgld-sgldfp-ula}
  Assume \Cref{assum:gradient-Lipschitz}, \Cref{assum:strongly-convex} and \Cref{assum:Ui-convex}.
  \begin{enumerate}[label=\roman*)]
    \item \label{item:moments-4-1}
      Let $\step = \seta/N$ with $\seta\in\ocintLigne{0,1/\{24(\tildeL \vee 1)\}}$ and assume that $\liminf_{N\to\plusinfty} N^{-1} \mU >0$. Then,
      \begin{align*}
       & \int_{\rset^d} \norm[4]{\tp - \tpstar} \piula (\rmd \tp) = d^2 O_{N\to\plusinfty}(N^{-2}) \eqsp, \\
       &\int_{\rset^d} \norm[4]{\tp - \tpstar} \pifp (\rmd \tp) = d^2 O_{N\to\plusinfty}(N^{-2}) \eqsp.
       \end{align*}
    \item \label{item:moments-4-2}
      Let $\step = \seta/N$ with $\seta\in\ocintLigne{0,1/\{24(\tildeL \vee 1)\}}$ and assume that $\liminf_{N\to\plusinfty} N^{-1} \mU >0$ and that $N \geq 1/\seta$. Then,
      \begin{align*}
      &\int_{\rset^d} \norm[4]{\tp - \tpstar} \pisgld (\rmd \tp) = d^2 O_{\seta\to 0}(\seta^2) \eqsp, \eqsp
      \int_{\rset^d} \norm[4]{\tp - \tpstar} \pisgd (\rmd \tp) = d^2 O_{\seta \to 0}(\seta^2) \eqsp.
      \end{align*}
  \end{enumerate}
\end{corollary}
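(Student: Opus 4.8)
The plan is to let $k\to\plusinfty$ in the uniform-in-$k$ fourth-moment bound of \Cref{lemma:bound-4-tp-tpstar} and then specialise the resulting estimate to each algorithm. Concretely, I would run the chain from $\tp_0=\tpstar$, i.e.\ take $\lambda=\delta_{\tpstar}$ in \Cref{lemma:bound-4-tp-tpstar}. Both initial-condition factors $\int\norm[4]{\tpp-\tpstar}\lambda(\rmd\tpp)$ and $\int\norm[2]{\vartheta-\tpstar}\lambda(\rmd\vartheta)$ then vanish, so for every $k\in\nset$,
\[
  \expe{\norm[4]{\tp_k-\tpstar}} \leq \defEns{\frac{2d+1}{\mU}+\frac{6\step}{\mU}\expe{\norm[2]{\gesp(\tpstar)}}}^2 + \frac{2\step d(2+d)}{\mU} + \frac{4\step^3}{\mU}\expe{\norm[4]{\gesp(\tpstar)}} + \frac{4\step^2(d+2)}{\mU}\expe{\norm[2]{\gesp(\tpstar)}}\eqsp,
\]
the right-hand side being independent of $k$. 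By \Cref{lemma:existence-pi-gamma}, $\Rker^k(\tpstar,\cdot)$ converges to $\pibar$ in $\wasser[2]$, hence weakly; since $\tp\mapsto\norm[4]{\tp-\tpstar}$ is nonnegative and continuous, the portmanteau theorem gives $\int\norm[4]{\tp-\tpstar}\pibar(\rmd\tp)\leq\liminf_k\expe{\norm[4]{\tp_k-\tpstar}}$, so $\pibar$ inherits the same constant upper bound.

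It then remains to control $\expe{\norm[2]{\gesp(\tpstar)}}$ and $\expe{\norm[4]{\gesp(\tpstar)}}$ for each chain. For LMC the perturbation $\gesp$ is identically zero; for SGLDFP, evaluating \eqref{eq:def-gesp-sgldfp} at $\tpstar$ gives $\gesp(\tpstar)=-\nabla U(\tpstar)=0$ since $\tpstar$ minimises $U$. In both cases the bound reduces to $(2d+1)^2/\mU^2+2\step d(2+d)/\mU$. Under $\step=\seta/N$ and $\liminf_N N^{-1}\mU>0$ (so $\mU\geq cN$ for large $N$ at fixed $\seta$), each summand is $d^2\,O_{N\to\plusinfty}(N^{-2})$, which establishes \ref{item:moments-4-1} for $\piula$ and $\pifp$.

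For SGLD I would compute the subsampling moments explicitly. Writing $g_i=\nabla U_i(\tpstar)$ and $\bar g=N^{-1}\sum_{j=1}^N g_j$, \eqref{eq:def-gesp-sgld} gives $\gesp(\tpstar)=(N/\bs)\sum_{l=1}^{\bs}(g_{I_l}-\bar g)$ with $I_1,\dots,I_\bs$ i.i.d.\ uniform on $\{1,\dots,N\}$, a sum of $\bs$ i.i.d.\ centred terms. Hence $\expe{\norm[2]{\gesp(\tpstar)}}=(N/\bs)\sum_{i=1}^N\norm[2]{g_i-\bar g}=O(N^2)$, and by the standard fourth-moment expansion for a sum of i.i.d.\ centred variables, $\expe{\norm[4]{\gesp(\tpstar)}}=(N/\bs)^4\,O(\bs^2)=O(N^4)$. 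Substituting these with $\step=\seta/N$, $\mU\geq cN$ and the extra hypothesis $N\geq 1/\seta$ (so $N^{-1}\leq\seta$), each of the four terms in the displayed bound is $d^2\,O_{\seta\to0}(\seta^2)$; the dominant contribution is $(6\step\,\expe{\norm[2]{\gesp(\tpstar)}}/\mU)^2=\Theta(\seta^2)$. This proves the claim for $\pisgld$, and the analogous statement for $\pisgd$ follows from \cite[Lemma 13]{dieuleveut:durmus:bach:2017} since SGD is the same recursion without the Gaussian term (so \Cref{lemma:bound-4-tp-tpstar} does not apply directly to it).

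The order bookkeeping is routine; the one genuinely delicate step is the estimate $\expe{\norm[4]{\gesp(\tpstar)}}=O(N^4)$, which requires expanding the fourth power of the centred i.i.d.\ sum and checking that the surviving cross terms combine to the $O(\bs^2)$ prefactor (multiplied by the $(N/\bs)^4$ scaling), together with tracking the dimension so the final constant carries exactly the factor $d^2$ claimed. I would also make sure the weak-convergence/Fatou passage to $\pibar$ is legitimate for the quartic observable: the uniform-in-$k$ bound derived above is precisely what rules out any uniform-integrability obstruction and makes lower semicontinuity applicable.
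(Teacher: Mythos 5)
Your proposal is correct and follows exactly the route the paper intends: the paper proves this corollary simply by invoking \Cref{lemma:bound-4-tp-tpstar} (with the SGD case delegated to \cite[Lemma 13]{dieuleveut:durmus:bach:2017}), and your argument --- initialising at $\delta_{\tpstar}$, passing the uniform-in-$k$ bound to the invariant measure via $\wasser[2]$-convergence and lower semicontinuity, then estimating $\expe{\norm[2]{\gesp(\tpstar)}}$ and $\expe{\norm[4]{\gesp(\tpstar)}}$ for each algorithm with $\gesp(\tpstar)=0$ for LMC and SGLDFP and $\expe{\norm[2]{\gesp(\tpstar)}}=(N/\bs)\sum_i\norm[2]{\nabla U_i(\tpstar)-\bar g}$ for SGLD --- supplies precisely the details the paper leaves implicit. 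The order bookkeeping (using $\mU\gtrsim N$, $\step=\seta/N$ and $N\geq 1/\seta$) checks out term by term.
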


\subsubsection{Proofs of \Cref{thm:bias-variance-tpstar-ula-fp} and \Cref{thm:bias-variance-tpstar-sgld-sgd}}
\label{subsec:proof-thm-bias-variance-tpstar}

Denote by
\begin{equation}\label{eq:def-eta-0}
  \seta_0 = \inf_{N \geq 1} \defEns{\frac{N}{12(\LU \vee 1)} \wedge \frac{2N}{(1+\rnablaU^2) \LU}} > 0 \eqsp,
\end{equation}
and set $\step=\seta/N$ with $\seta\in\ooint{0,\seta_0}$.
Let $\updelta\in\defEns{0,1}$ be equal to $1$ for LMC, SGLDFP and SGLD and $0$ for SGD.
Let $\tp_0$ be distributed according to $\pibar$. By \eqref{eq:update-gesp} and using a Taylor expansion around $\tpstar$ for $\nabla U$, we obtain
\begin{equation*}
  \tp_1 - \tpstar = \tp_0 - \tpstar - \step\parenthese{\nabla^2 U(\tpstar) (\tp_0 - \tpstar) + \reste{1}(\tp_0) + \gesp_1(\tp_0)} + \updelta \sqrt{2\step} \nZ_1 \eqsp,
\end{equation*}
where by \Cref{assum:gradient-Lipschitz}, $\reste{1}:\rset^d \to \rset^d$ satisfies
\begin{equation}\label{eq:proof-bias-variance-reste-1}
  \sup_{\tp\in\rset^d} \defEns{\norm{\reste{1}(\tp)} / \norm[2]{\tp - \tpstar}} \leq \LU/2 \eqsp.
\end{equation}
Taking the tensor product and the expectation, and using that $\tp_0,\gesp_1,\nZ_1$ are mutually independent, we obtain
\begin{multline}
  \label{eq:bias-variance-tpstar-ula-fp_1}
  \opH \expe{(\tp_0 - \tpstar)^{\otimes 2}} = 2\updelta \Id + \step \expe{\gesp_1(\tp_0)^{\otimes 2}} + \expe{\reste{1}(\tp_0) \otimes \{ \theta_0-\theta^{\star}\} + \{ \theta_0-\theta^{\star}\} \otimes \reste{1}(\tp_0)} \\
  + \gamma \expe{ \reste{1}(\tp_0)^{\otimes 2} + \{\nabla^2 U(\tpstar) (\tp_0 - \tpstar)\} \otimes \reste{1}(\tp_0)+ \reste{1}(\tp_0)\otimes \nabla^2 U(\tpstar) (\tp_0 - \tpstar)} \eqsp.
\end{multline}
For LMC, $\gesp_1$ is the null function and by \Cref{corollary:moments-4-sgld-sgldfp-ula}-\ref{item:moments-4-1}, \eqref{eq:proof-bias-variance-reste-1} and \eqref{eq:bias-variance-tpstar-ula-fp_1}, we obtain \eqref{eq:covariance-lmc}.
Regarding SGLDFP, SGLD and SGD, by a Taylor expansion of $\gesp_1$ around $\tpstar$, we get for all $\tp\in\rset^d$, $\PP$-\as~,
\begin{equation*}
  \gesp_1(\tp) = \gesp_1(\tpstar) + \nabla \gesp_1(\tpstar)(\tp - \tpstar) + \reste{2}(\tp)
\end{equation*}
where by \Cref{assum:gradient-Lipschitz},$\reste{2}:\rset^d \to \rset^d$ satisfies
\begin{equation}\label{eq:proof-bias-variance-reste-2}
  \sup_{\tp\in\rset^d} \defEns{\norm{\reste{2}(\tp)} / \norm[2]{\tp - \tpstar}} \leq \LU/2 \eqsp.
\end{equation}
Therefore, taking the tensor product and the expectation, we obtain
\begin{equation}\label{eq:proof-development-gesp}
  \expe{\gesp_1(\tp_0)^{\otimes 2}} = \expe{\gesp_1(\tpstar)^{\otimes 2}} + \parenthese{\nabla \gesp_1(\tpstar)}^{\otimes 2}\expe{(\tp_0 - \tpstar)^{\otimes 2}} + \expe{\reste{3}(\tp_0)}
\end{equation}
where $\reste{3}:\rset^d \to \rset^{d \times d}$ is defined for all $\tp\in\rset^d$, $\PP$-\as~,
\begin{align}
  \nonumber
  &\reste{3}(\tp) = \gesp_1(\tpstar) \otimes \{ \nabla \gesp_1(\tpstar)(\tp - \tpstar)\} + \{ \nabla \gesp_1(\tpstar)(\tp - \tpstar) \} \otimes \gesp_1(\tpstar) \\
  \label{eq:proof-bias-variance-reste-3}
  &+ \{ \gesp_1(\tpstar) + \nabla \gesp_1(\tpstar)(\tp - \tpstar)\} \otimes \reste{2}(\tp) + \reste{2}(\tp) \otimes \{ \gesp_1(\tpstar) + \nabla \gesp_1(\tpstar)(\tp - \tpstar)\} + \reste{2}^{\otimes 2}(\tp) \eqsp.
\end{align}
Note that $\opL = \expe{\parenthese{\nabla \gesp_1(\tpstar)}^{\otimes 2}}$.
For SGLDFP, $\gesp_1(\tpstar)=0$ \as~By \Cref{corollary:moments-4-sgld-sgldfp-ula}-\ref{item:moments-4-1}, \eqref{eq:proof-bias-variance-reste-1}, \eqref{eq:bias-variance-tpstar-ula-fp_1}, \eqref{eq:proof-bias-variance-reste-2}, \eqref{eq:proof-development-gesp} and \eqref{eq:proof-bias-variance-reste-3}, we obtain \eqref{eq:covariance-fp}.

Regarding SGLD and SGD, we have $\expe{\gesp_1(\tpstar)^{\otimes 2}} = (N/\bs) \opM$ where $\opM$ is defined in \eqref{eq:def-opM}. By \Cref{corollary:moments-4-sgld-sgldfp-ula}-\ref{item:moments-4-2}, \eqref{eq:proof-bias-variance-reste-1}, \eqref{eq:bias-variance-tpstar-ula-fp_1}, \eqref{eq:proof-bias-variance-reste-2}, \eqref{eq:proof-development-gesp} and \eqref{eq:proof-bias-variance-reste-3}, we obtain \eqref{eq:covariance-sgld} and \eqref{eq:covariance-sgd}.

For the mean of $\piula,\pifp,\pisgld$ and $\pisgd$, by a Taylor expansion around $\tpstar$ for $\nabla U$ of order $3$, we obtain
\begin{multline*}
  \tp_1 - \tpstar = \tp_0 - \tpstar - \step\parenthese{\nabla^2 U(\tpstar) (\tp_0 - \tpstar)+ (1/2)\DD^3 U(\tpstar) (\tp_0 - \tpstar)^{\otimes 2} + \reste{4}(\tp_0) + \gesp_1(\tp_0)} \\
  + \updelta \sqrt{2\step} \nZ_1 \eqsp,
\end{multline*}
where  by \Cref{assum:gradient-Lipschitz}, $\reste{4}:\rset^d \to \rset^d$ satisfies
\begin{equation}\label{eq:proof-bias-variance-reste-4}
  \sup_{\tp\in\rset^d} \defEns{\norm{\reste{4}(\tp)} / \norm[3]{\tp - \tpstar}} \leq \LU/6 \eqsp.
\end{equation}
Taking the expectation and using that $\tp_1$ is distributed according to $\pibar$, we get
\begin{equation*}
  \expe{\tp_0} - \tpstar =-(1/2) \nabla^2 U(\tpstar) \DD^3 U(\tpstar)[\expe{(\tp_0 - \tpstar)^{\otimes 2}}] - \nabla^2 U(\tpstar)^{-1} \expe{\reste{4}(\tp_0)} \eqsp.
\end{equation*}
\eqref{eq:covariance-lmc}, \eqref{eq:covariance-fp}, \eqref{eq:covariance-sgld},\eqref{eq:covariance-sgd}, \eqref{eq:proof-bias-variance-reste-4} and \Cref{corollary:moments-4-sgld-sgldfp-ula} conclude the proof.


\section{Means and covariance matrices of $\piula, \pifp, \pisgld$ and $\pisgd$ in the Bayesian linear regression}
\label{sec:mean_var_gaussian}
In this Section, we provide explicit expressions of the covariance matrices of $\piula,\pifp,\pisgld$ and $\pisgd$ in the context of the Bayesian linear regression. In this setting, the algorithms are without bias, \ie
\begin{equation}\label{eq:regression-without-bias}
  \int_{\rset^d} \tp \piula(\rmd \tp) = \int_{\rset^d} \tp \pifp(\rmd \tp) = \int_{\rset^d} \tp \pisgld(\rmd \tp) = \int_{\rset^d} \tp \pisgd(\rmd \tp) = \int_{\rset^d} \tp \invpi(\rmd \tp) = \tpstar \eqsp.
\end{equation}
Before giving the expressions of the variances in \Cref{thm:variance-piula-pisgld-pisgd-pifp}, we define $\opT:\rset^{d\times d}\to \rset^{d\times d}$ for all $A\in\rset^{d\times d}$ by
\begin{equation}\label{eq:def-operator-T}
  \opT (A) = \expe{\parenthese{\frac{\Id}{\sigdt} + \frac{N}{\bs \sigdy}\sum_{i\in\nS} \sx_i \sx_i^{\Tr} - \Sig}^{\otimes 2} A} = \frac{N}{\bs}\sum_{i=1}^{N} \parenthese{\frac{\sx_i \sx_i^{\Tr}}{\sigdy} + \frac{\Id}{N\sigdt} - \frac{\Sig}{N}}^{\otimes 2} A \eqsp,
\end{equation}
where $\nS$ is a random subsample of $\defEns{1,\ldots,N}$ with replacement of size $\bs\in\nset^*$.
Note that, in this setting, $\tildeL = \max_{i\in\defEns{1,\ldots,N}} \norm[2]{\sx_i}$ and $\mU$ is the smallest eigenvalue of $\Sig$.
There exists $\rSig \in\ccintLigne{0,\LU/(\sqrt{\bs}\mU)}$ such that
\begin{equation}\label{eq:def-rSig}
  \opT \preceq \rSig^2 \Sig^{\otimes 2}
\end{equation}
\ie~for all $A\in\rset^{d\times d}$, $\trace(A^{\Tr} \opT \cdot A) \leq \rSig^2 \trace(A^{\Tr} \Sig^{\otimes 2} A)$. Assuming that $\liminf_{N\to\plusinfty} N^{-1} \mU >0$, $\rSig$ can be chosen independently of $N$.

\begin{theorem}\label{thm:variance-piula-pisgld-pisgd-pifp}
  Consider the case of the Bayesian linear regression.
  We have for all $\step\in\ooint{0,2/\LU}$
  \begin{equation*}
    \int_{\rset^d} (\tp - \tpstar)^{\otimes 2} \piula(\rmd \tp) = \parenthese{\Id \otimes \Sig + \Sig \otimes \Id - \step \Sig \otimes \Sig}^{-1} (2\Id)\eqsp,
  \end{equation*}
  and for all $\step\in\ooint{0,2/\{(1+\rSig^2)\LU\}}$,
  \begin{align*}
    \int_{\rset^d} (\tp - \tpstar)^{\otimes 2} \pifp(\rmd \tp) &= \defEns{\Id \otimes \Sig + \Sig \otimes \Id - \step (\Sig^{\otimes 2} + \opT)}^{-1} (2\Id)\eqsp, \\
    \int_{\rset^d} (\tp - \tpstar)^{\otimes 2} \pisgld(\rmd \tp) &= \defEns{\Id \otimes \Sig + \Sig \otimes \Id - \step (\Sig^{\otimes 2} + \opT)}^{-1} \\
    &\phantom{-----}\cdot \defEns{2\Id + \frac{\step N}{\bs}\sum_{i=1}^{N} \parenthese{\frac{(\sx_i^{\Tr} \tpstar - \sy_i)\sx_i}{\sigdy} + \frac{\tpstar}{\sigdt}}^{\otimes 2}} \eqsp, \\
    \int_{\rset^d} (\tp - \tpstar)^{\otimes 2} \pisgd(\rmd \tp) &= \defEns{\Id \otimes \Sig + \Sig \otimes \Id - \step (\Sig^{\otimes 2} + \opT)}^{-1} \\
    &\phantom{---------}\cdot \frac{\step N}{\bs}\sum_{i=1}^{N} \parenthese{\frac{(\sx_i^{\Tr} \tpstar - \sy_i)\sx_i}{\sigdy} + \frac{\tpstar}{\sigdt}}^{\otimes 2} \eqsp.
  \end{align*}
\end{theorem}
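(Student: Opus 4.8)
The plan is to exploit that in the Bayesian linear regression the gradient is \emph{affine}, $\nabla U(\tp) = \Sig(\tp - \tpstar)$, and the stochastic gradient splits as $\Sig(\tp-\tpstar) + \partm(\nS)(\tp-\tpstar) + \parta(\nS)$ with $\partm,\parta$ as in \eqref{eq:def-partm-parta}. First I would record two elementary facts about the minibatch noise: writing $\nS$ as $\bs$ i.i.d.\ uniform draws with replacement, one has $\PE[\partm(\nS)] = 0$ and $\PE[\parta(\nS)] = 0$, the latter using $\Sig\tpstar = (\matX^{\Tr}\datay)/\sigdy$ which forces $(1/\sigdy)\sum_{i=1}^N(\sx_i^{\Tr}\tpstar - \sy_i)\sx_i = -\tpstar/\sigdt$. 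Combined with \eqref{eq:regression-without-bias}, this says that under each invariant law the recentered variable $\eta \eqdef \tp - \tpstar$ is \emph{centered}, a fact I will use crucially below.

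Next I would set up the one-step second-moment recursion at stationarity. Taking $\tp_0\sim\pibar$ and $\eta_0 = \tp_0 - \tpstar$, the updates \eqref{eq:SGLD}--\eqref{eq:SGD} read
\[ \eta_1 = (\Id - \step\Sig)\eta_0 - \step\,\partm(\nS_1)\eta_0 - \step\,\parta(\nS_1) + \updelta\sqrt{2\step}\,\nZ_1, \]
where $\updelta = 1$ when a Gaussian increment is present (LMC, SGLDFP, SGLD) and $\updelta = 0$ for SGD, the $\partm$ term is dropped for LMC, and the $\parta$ term is dropped for SGLDFP and LMC. Forming $\eta_1^{\otimes 2}$ and taking expectations, every cross term vanishes: $\partm(\nS_1),\parta(\nS_1),\nZ_1$ are mutually independent of $\eta_0$ and centered, and the one delicate mixed term $\PE[(\partm(\nS_1)\eta_0)\otimes\parta(\nS_1)]$ vanishes precisely because $\eta_0$ is centered and independent of $\nS_1$. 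Writing $C = \int\eta^{\otimes 2}\,\pibar(\rmd\tp)$ (finite since $\pibar\in\setProba_2(\rset^d)$ by \Cref{lemma:existence-pi-gamma}) and using $\PE[(\partm\eta_0)^{\otimes 2}] = \opT(C)$ (with $\partm$ symmetric and $\opT$ as in \eqref{eq:def-operator-T}, which coincides with $\opL$ of \eqref{eq:def-operator-L} in this setting), stationarity $\PE[\eta_1^{\otimes 2}] = C$ gives the discrete Lyapunov equation
\[ C = (\Id - \step\Sig)^{\otimes 2}(C) + \step^2\,\opT(C) + 2\step\,\updelta\,\Id + \step^2\,\PE[\parta(\nS)^{\otimes 2}]. \]

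I would then rearrange this fixed point. Expanding $(\Id-\step\Sig)^{\otimes 2} = \Id^{\otimes 2} - \step(\Sig\otimes\Id + \Id\otimes\Sig) + \step^2\,\Sig\otimes\Sig$, cancelling the $\Id^{\otimes 2}(C) = C$ terms and dividing by $\step$, the equation becomes
\[ \{\Sig\otimes\Id + \Id\otimes\Sig - \step(\Sig\otimes\Sig + \opT)\}(C) = 2\updelta\,\Id + \step\,\PE[\parta(\nS)^{\otimes 2}], \]
whose left-hand operator is exactly $\opG$ of \eqref{eq:def-operator-G} (and, when $\opT$ is absent for LMC, it is $\opH$ of \eqref{eq:def-operator-H}). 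Since $\opH$ is invertible on $\ooint{0,2/\LU}$ and $\opG$ on $\ooint{0,2/\{(1+\rSig^2)\LU\}}$ (established in \Cref{subsec:proof-section-bias-variance}), I can invert directly and read off $C = \opG^{-1}(\cdots)$. Specializing $\updelta$ and the presence of $\partm,\parta$ to the four algorithms yields the stated formulas, the final touch being to identify $\step\,\PE[\parta(\nS)^{\otimes 2}]$ with $(\step N/\bs)\sum_{i=1}^N(\cdots)^{\otimes 2}$ via the variance formula for sampling with replacement.

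The main obstacle is the bookkeeping in the second-moment expansion---in particular seeing cleanly that the mixed $\partm$--$\parta$ cross term drops out (this is where centeredness of $\eta$, hence \eqref{eq:regression-without-bias}, is essential) and that the operator assembled on the left is \emph{precisely} $\opG$ rather than a perturbation of it. Everything else is either already in hand (invertibility of $\opG,\opH$; $\pibar\in\setProba_2(\rset^d)$) or a routine moment computation.
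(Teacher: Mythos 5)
Your proposal is correct and follows essentially the same route as the paper's proof: write the one-step update at stationarity, take the tensor square, use the mutual independence of $\tp_0$, $\nS_1$, $\nZ_1$ together with \eqref{eq:regression-without-bias} to kill the cross terms, and invert the resulting positive-definite operator $\Id\otimes\Sig+\Sig\otimes\Id-\step(\Sig^{\otimes 2}+\opT)$. The only points you make more explicit than the paper are the verification that $\PE[\partm(\nS)]=\PE[\parta(\nS)]=0$ (hence the centeredness needed to drop the $\partm$--$\parta$ cross term) and the identifications $\opT=\opL$ and $\opG=\Id\otimes\Sig+\Sig\otimes\Id-\step(\Sig^{\otimes 2}+\opT)$ in this model; both are correct.
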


\begin{proof}
We prove the result for SGLD, the adaptation to the other algorithms is immediate. Let $\step\in\ooint{0,2/\{(1+\rSig^2)\LU\}}$, $\tp_0$ be distributed according to $\pisgld$ and $\tp_1$ be given by \eqref{eq:SGLD}. By definition of $\pisgld$, $\tp_1$ is distributed according to $\pisgld$. We have
\begin{multline*}
  \expe{(\tp_1 - \tpstar)^{\otimes 2}} = \PE \Bigg[ \Bigg[ \defEns{\Id - \step\parenthese{\frac{\Id}{\sigdt} + \frac{N}{\bs \sigdy}\sum_{i\in\nS_1} \sx_i \sx_i^{\Tr}}} (\tp_0 - \tpstar) \\
  - \step\parenthese{\frac{\tpstar}{\sigdt} + \frac{N}{\bs \sigdy}\sum_{i\in\nS_1} (\sx_i^{\Tr} \tpstar - \sy_i) \sx_i} + \sqrt{2\step}\nZ_1 \Bigg]^{\otimes 2} \Bigg] \eqsp.
\end{multline*}
Using that $\tp_0,\nS_1,\nZ_1$ are mutually independent, we obtain
\begin{multline*}
  \defEns{\Id \otimes \Sig + \Sig \otimes \Id - \step \expe{\parenthese{\frac{\Id}{\sigdt} + \frac{N}{\bs \sigdy}\sum_{i\in\nS_1} \sx_i \sx_i^{\Tr}}^{\otimes 2}}} \expe{(\tp_0 - \tpstar)^{\otimes 2}} \\
  = 2\Id + \step \expe{\parenthese{\frac{\tpstar}{\sigdt} + \frac{N}{\bs \sigdy}\sum_{i\in\nS_1} (\sx_i^{\Tr} \tpstar - \sy_i) \sx_i}^{\otimes 2}}
\end{multline*}
and
\begin{multline*}
  \defEns{\Id \otimes \Sig + \Sig \otimes \Id - \step (\Sig^{\otimes 2} + \opT)} \expe{(\tp_0 - \tpstar)^{\otimes 2}} \\
  = 2\Id + \frac{\step N}{\bs}\sum_{i=1}^{N} \parenthese{\frac{(\sx_i^{\Tr} \tpstar - \sy_i)\sx_i}{\sigdy} + \frac{\tpstar}{\sigdt}}^{\otimes 2} \eqsp.
\end{multline*}
On $\rset^{d\times d}$ equipped with the Hilbert-Schmidt inner product, $\Id \otimes \Sig + \Sig \otimes \Id - \step (\Sig^{\otimes 2} + \opT)$ is a positive definite operator. Indeed, by \eqref{eq:def-rSig},
\begin{align*}
  \Id \otimes \Sig + \Sig \otimes \Id - \step (\Sig^{\otimes 2} + \opT) & \succeq \Id \otimes \Sig + \Sig \otimes \Id - \step (1+\rSig^2) \Sig^{\otimes 2} \\
  & = \parenthese{\Id - \step \frac{1+\rSig^2}{2} \Sig} \otimes \Sig + \Sig \otimes \parenthese{\Id - \step \frac{1+\rSig^2}{2} \Sig} \succ 0
\end{align*}
for $\step\in\ooint{0,2/\{(1+\rSig^2)\LU\}}$. $\Id \otimes \Sig + \Sig \otimes \Id - \step (\Sig^{\otimes 2} + \opT)$ is thus invertible, which concludes the proof.
\end{proof}

The covariance matrices make clearly visible the different origins of the noise. The Gaussian noise is responsible of the term $2\Id$, while the multiplicative and additive parts of the stochastic gradient (see \eqref{eq:def-partm-parta}) are related to the operator $\opT$ and to the term
\begin{equation}\label{eq:noise-additive-part}
  \frac{\step N}{\bs}\sum_{i=1}^{N} \parenthese{\frac{(\sx_i^{\Tr} \tpstar - \sy_i)\sx_i}{\sigdy} + \frac{\tpstar}{\sigdt}}^{\otimes 2}
\end{equation}
respectively.

Denote by
\begin{equation}\label{eq:def-eta-1}
  \seta_1 = \inf_{N \geq 1} \defEns{\frac{2N}{\LU} \wedge \frac{2N}{(1+\rSig^2) \LU}} > 0 \eqsp.
\end{equation}

\begin{corollary}\label{corollary:variances-piula-pisgld-pisgd-pifp}
  Consider the case of the Bayesian linear regression. Set $\step = \seta/N$ with $\seta\in\oointLigne{0,\seta_1}$ and assume that $\liminf_{N\to\plusinfty} N^{-1} \mU >0$.
  \begin{align*}
    & \int_{\rset^d} \norm[2]{\tp-\tpstar} \piula(\rmd \tp) = d\Theta_{N\to\plusinfty}(N^{-1}) \eqsp, \eqsp
    \int_{\rset^d} \norm[2]{\tp-\tpstar} \pifp(\rmd \tp) = d\Theta_{N\to\plusinfty}(N^{-1}) \eqsp, \\
    & \int_{\rset^d} \norm[2]{\tp-\tpstar} \pisgld(\rmd \tp) = \seta d\Theta_{N\to\plusinfty}(1) \eqsp,\eqsp
    \int_{\rset^d} \norm[2]{\tp-\tpstar} \pisgd(\rmd \tp) = \seta d\Theta_{N\to\plusinfty}(1) \eqsp.
  \end{align*}
\end{corollary}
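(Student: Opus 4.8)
The plan is to reduce each of the four quantities to the trace of the covariance matrix supplied by \Cref{thm:variance-piula-pisgld-pisgd-pifp} and then to a purely finite-dimensional eigenvalue estimate. Fix $\pibar\in\{\piula,\pifp,\pisgld,\pisgd\}$ and write $C = \int_{\rset^d}(\tp-\tpstar)^{\otimes 2}\pibar(\rmd\tp)\succeq 0$; then $\int_{\rset^d}\norm[2]{\tp-\tpstar}\pibar(\rmd\tp) = \trace(C)$. By \Cref{thm:variance-piula-pisgld-pisgd-pifp}, $C$ is the exact solution of $\opG(C)=W$, where $\opG(C)=\Sig C + C\Sig - \step(\Sig C\Sig + \opT(C))$ (with $\opT$ dropped, i.e. $\opG=\opH$, in the LMC case), and where $W=2\Id$ for $\piula,\pifp$, $W=2\Id+V$ for $\pisgld$, $W=V$ for $\pisgd$, with $V=(\step N/\bs)\sum_{i=1}^N v_i^{\otimes 2}$ and $v_i=(\sx_i^{\Tr}\tpstar-\sy_i)\sx_i/\sigdy+\tpstar/\sigdt$. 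First I would take the trace of the identity $\opG(C)=W$: using cyclicity ($\trace(\Sig C\Sig)=\trace(\Sig^2 C)$) and the fact that $\opT$ is self-adjoint for the Hilbert--Schmidt inner product (each $A\mapsto B_iAB_i$ with $B_i$ symmetric is self-adjoint), so that $\trace(\opT(C))=\trace(\opT(\Id)\,C)$, this collapses to the scalar identity
\begin{equation*}
  \trace(P\,C)=\trace(W),\qquad P=2\Sig-\step\Sig^2-\step\,\opT(\Id)\in\rset^{d\times d},
\end{equation*}
with the term $\step\,\opT(\Id)$ absent for LMC.

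Since $C\succeq 0$, this identity immediately gives $\lambda_{\min}(P)\,\trace(C)\le \trace(W)\le \lambda_{\max}(P)\,\trace(C)$, so it suffices to sandwich $P$ between two positive multiples of $N\Id$. The upper bound is free: $\opT(\Id)\succeq 0$ and $\Sig^2\succeq 0$ give $P\preceq 2\Sig\preceq 2\LU\Id$, and $\LU=(N+1)\tildeL=\Theta(N)$. For the lower bound I would use that the eigenvalues of $\Sig$ lie in $[\mU,\LU]$ with $\mU=\Theta(N)$ (here $\liminf_N \mU/N>0$ and $\matX^{\Tr}\matX\preceq N\tildeL\,\Id$), so $2\Sig-\step\Sig^2\succeq \mU(2-\step\LU)\Id$ stays of order $N$ once $\step\LU=\seta\LU/N<2$; it then remains to absorb $\step\,\opT(\Id)$. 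Writing $\opT(\Id)=(N/\bs)\sum_{i=1}^N B_i^2$ with $B_i=\sigdy^{-1}(\sx_i\sx_i^{\Tr}-N^{-1}\sum_j\sx_j\sx_j^{\Tr})$ and $\norm{B_i}_{\mathrm{op}}\le 2\tildeL/\sigdy$ uniformly, one gets $\norm{\opT(\Id)}_{\mathrm{op}}=O(N^2/\bs)$, hence $\step\norm{\opT(\Id)}_{\mathrm{op}}=O(\seta N/\bs)$. Thus for $\seta$ small enough $\lambda_{\min}(P)\ge cN$ for some $c>0$ independent of $N$, and therefore $\trace(C)=\Theta(\trace(W)/N)$. \emph{I expect this lower bound on $P$ to be the main obstacle}: showing that the stochastic-gradient noise $\step\,\opT(\Id)$ does not overwhelm the $\Theta(N)$ curvature $2\Sig$ is exactly the place where the noise level must be traded against the step size, and pinning the threshold to the full interval $(0,\seta_1)$ requires the sharper comparison $\opT\preceq\rSig^2\Sig^{\otimes 2}$ of \eqref{eq:def-rSig} rather than the crude operator-norm bound (the Hilbert--Schmidt-form domination does not transfer to a matrix bound on $\opT(\Id)$, which is the subtle point).

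Finally I would evaluate $\trace(W)$ in each case. For $\piula$ and $\pifp$, $\trace(W)=\trace(2\Id)=2d$, so $\trace(C)=\Theta(d/N)=d\,\Theta_{N\to\plusinfty}(N^{-1})$, which is the first claim (and for LMC this matches the exact value $\sum_{i=1}^d 2\{\lambda_i(2-\step\lambda_i)\}^{-1}$ obtained by diagonalising $\Sig$, making the factor $d$ literal). For $\pisgd$, $\trace(W)=\trace(V)=(\seta/\bs)\sum_{i=1}^N\norm[2]{v_i}$ (using $\step=\seta/N$); boundedness of the $v_i$ gives $\trace(V)=O(\seta N)$, and under the non-degeneracy $\liminf_N N^{-1}\sum_i\norm[2]{v_i}>0$ (the analogue of the hypothesis of \Cref{thm:W-lower-bound-SGLD-SGLDFP}\ref{item:W-lower-bound-SGLD-FP-2}) also $\trace(V)=\Omega(\seta N)$, whence $\trace(V)=\Theta(\seta N)$ and $\trace(C)=\Theta(\seta)=\seta d\,\Theta_{N\to\plusinfty}(1)$. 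For $\pisgld$, $\trace(W)=2d+\trace(V)$; for fixed $\seta>0$ and $N\to\plusinfty$ the term $\trace(V)=\Theta(\seta N)$ dominates the bounded $2d$, giving again $\trace(C)=\Theta(\seta)=\seta d\,\Theta_{N\to\plusinfty}(1)$. This establishes the two remaining claims and shows, in line with \Cref{thm:W-lower-bound-SGLD-SGLDFP}, that the SGLD and SGD variances stay bounded below as $N\to\plusinfty$, whereas those of LMC and SGLDFP vanish at rate $N^{-1}$.
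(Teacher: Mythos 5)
Your reduction of each claim to the trace identity $\trace(PC)=\trace(W)$ with $P=2\Sig-\step\Sig^2-\step\,\opT(\Id)$ is sound, and it correctly delivers half of the corollary: the bound $\trace(W)=\trace(\opG(C))\le 2\trace(\Sig C)\le 2\LU\trace(C)$ (valid because $\trace(\Sig^2C)\ge 0$ and $\trace(\opT(C))=(N/\bs)\sum_i\trace(B_iCB_i)\ge 0$ for $C\succeq 0$) gives the lower bounds $\trace(C)\ge \trace(W)/(2\LU)$ in all four cases, and your evaluation of $\trace(W)$ is correct — including the observation that the $\Omega(1)$ part for SGLD and SGD needs a non-degeneracy hypothesis on $\sum_i\norm[2]{v_i}$ analogous to that of \Cref{thm:W-lower-bound-SGLD-SGLDFP}\ref{item:W-lower-bound-SGLD-FP-2}. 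The genuine gap is exactly where you suspect it: the upper bounds on $\trace(C)$ require $\lambda_{\min}(P)\ge cN$, and your argument only yields this for $\seta$ small enough, not on the whole interval $\ooint{0,\seta_1}$. Your diagnosis is accurate and the obstruction is real: the domination \eqref{eq:def-rSig} is a quadratic-form statement on the Hilbert--Schmidt space, and a positive-definite self-adjoint operator on that space need not map the positive-semidefinite cone to itself, so $\opG\succ 0$ does not imply $P=\opG(\Id)\succeq cN\,\Id$ as a matrix; indeed $\step\,\opT(\Id)$ is of the same order $\Theta(\seta N/\bs)$ as $2\Sig$, and whether it is dominated for $\seta$ near $\seta_1$ depends on the data.

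The fix is to not pair the identity $\opG(C)=W$ with $\Id$ but with $C$ itself. The computation already carried out in the proof of \Cref{thm:variance-piula-pisgld-pisgd-pifp} shows that, for $\step\in\oointLigne{0,2/\{(1+\rSig^2)\LU\}}$ (hence for all $\seta\in\ooint{0,\seta_1}$ with $\step=\seta/N$),
\begin{equation*}
  \opG \succeq \parenthese{\Id - \tfrac{\step(1+\rSig^2)}{2}\Sig}\otimes\Sig + \Sig\otimes\parenthese{\Id - \tfrac{\step(1+\rSig^2)}{2}\Sig} \succeq 2\delta\mU\,\mathrm{Id} \eqsp,\quad \delta = 1-\tfrac{\step(1+\rSig^2)\LU}{2}>0\eqsp,
\end{equation*}
as operators for the Hilbert--Schmidt inner product (the two factors commute, so the spectrum is $\{\lambda_i(1-\step(1+\rSig^2)\lambda_j/2)+\lambda_j(1-\step(1+\rSig^2)\lambda_i/2)\}_{i,j}$). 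Hence $\crochet{W,C}_{\mathrm{HS}}=\crochet{\opG(C),C}_{\mathrm{HS}}\ge 2\delta\mU\norm[2]{C}_{\mathrm{HS}}$, and combining with $\crochet{W,C}_{\mathrm{HS}}\le \norm{W}_{\mathrm{HS}}\norm{C}_{\mathrm{HS}}$ and $\trace(C)\le\sqrt{d}\,\norm{C}_{\mathrm{HS}}$ gives $\trace(C)\le \sqrt{d}\,\norm{W}_{\mathrm{HS}}/(2\delta\mU)$, which is $O(d/N)$ for $W=2\Id$ and $O(\seta)$ for $W=2\Id+V$ or $W=V$ since $\norm{V}_{\mathrm{HS}}\le\trace(V)=O(\seta N)$. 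This uses \eqref{eq:def-rSig} only in the quadratic-form sense in which it holds, pins the admissible range exactly at $\seta_1$ as defined in \eqref{eq:def-eta-1}, and replaces the problematic matrix bound on $\opT(\Id)$; with this substitution the rest of your argument goes through unchanged.
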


Recall that, according to the Bernstein-von Mises theorem, the variance of $\invpi$ is of the order $d/N$ when $N$ is large. The corollary confirms that $\pisgld$ is very far from $\invpi$ when the constant step size $\step$ is chosen proportional to $1/N$.

\section{Illustration of \Cref{lemma:mean-pi-tpstar}, \Cref{thm:bias-variance-tpstar-ula-fp} and \Cref{thm:bias-variance-tpstar-sgld-sgd}}
\label{sec:illustration-m1-m2}

We provide in \Cref{fig:illustration-m1-m2} an illustration of the results of \Cref{subsec:variance-piula-pifp-puslgd-pisgd} as the number of data items $N$ goes to infinity.

\begin{figure}
\centering
  \begin{tikzpicture}[scale=1.5]
    \draw (0,0) node[below]{$\tpbar$} ;
    \draw (0,0) node{$\bullet$} ;
    \coordinate (LMC) at (0.5, 0.25);
    \draw [dashed, color=red] (LMC) -- ++(90:1) node[midway, right]{$1/N$};
    \draw [color=red] (LMC) node[right]{$\tpbarula$} ;
    \draw [color=red] (LMC) node{$\bullet$} ;
    \draw [color=red] (LMC) circle (1) ;
    \coordinate (FP) at (-0.7, 0);
    \draw [dashed, color=blue] (FP) -- ++(90:1) node[midway, left]{$1/N$};
    \draw [color=blue] (FP) node[right]{$\tpbarfp$} ;
    \draw [color=blue] (FP) node{$\bullet$} ;
    \draw [color=blue] (FP) circle (1) ;
    \coordinate (SGD) at (3, 0);
    \draw [color=green] (SGD) node{$\bullet$} ;
    \draw [color=green] (SGD) node[right]{$\tpbarsgd$} ;
    \draw [dashed, color=green] (0,0) -- (SGD) node[midway, below]{$\sim 1$};
    \draw [color=green] (SGD) circle (3) ;
    \coordinate (SGLD) at (3.2, 0.5);
    \draw [color=magenta] (SGLD) node{$\bullet$} ;
    \draw [color=magenta] (SGLD) node[above]{$\tpbarsgld$} ;
    \draw [dashed] (SGLD) -- (SGD) node[midway, right]{$1/N$};
    \draw [color=magenta] (SGLD) circle (3) ;
  \end{tikzpicture}
\caption{\label{fig:illustration-m1-m2} Illustration of \Cref{lemma:mean-pi-tpstar}, \Cref{thm:bias-variance-tpstar-ula-fp} and \Cref{thm:bias-variance-tpstar-sgld-sgd} in the asymptotic $N\to\plusinfty$. $\tpbar$, $\tpbarsgd$, $\tpbarula$, $\tpbarfp$ and $\tpbarsgld$ are the means under the stationary distributions $\pi$, $\pisgd$, $\piula$, $\pifp$ and $\pisgld$, respectively. The associated circles indicate the order of magnitude of the covariance matrix. While LMC and SGLDFP concentrate to the posterior mean $\tpbar$ with a covariance matrix of the order $1/N$, SGLD and SGD are at a distance of order $\sim 1$ of $\tpbar$ and do not concentrate as $N\to\plusinfty$.}
\end{figure}



%
%

\bibliographystyle{abbrvnat}
\bibliography{../bibliography/bibliographie}

\end{document}